\DeclareMathOperator*{\argmin}{arg\,min}
\Crefname{assumption}{Assumption}{Assumptions}
\theoremstyle{plain}
\newtheorem{theorem}{Theorem}
\newtheorem{lemma}{Lemma}
\theoremstyle{definition}
\newtheorem{assumption}{Assumption}
\newtheorem{proposition}{Proposition}
\newtheorem{definition}{Definition}
\newtheorem{remark}{Remark}
\newcommand{\iid}{ \stackrel{i.i.}{\sim} }
\newcommand{\bigO}{\mathcal{O}} %
\newcommand{\pa}{\mathrm{\pa}}
\renewcommand{\eqref}[1]{(\ref{#1})}
\newcommand{\RN}[1]{%
  \textup{\uppercase\expandafter{\romannumeral#1}}%
}
\def \Rpositive {{\mathbb{R}^{+}}}
\def \hr {\hat{r}}
\def \rstar {r^*}
\def \dx {{\mathrm{d}\bm{x}}}
\def \iid {\overset{\mathrm{i.i.d.}}{\sim}}
\def \DOne {D_1}
\def \DTwo {D_2}
\def \Reg {\mathcal{R}}
\newcommand{\annot}[2]{\underbrace{#1}_{\text{#2}}}
\def \Re {\mathbb{R}}
\def \Na {\mathbb{N}}
\def \r {r}
\def \Ltwo {L^2}
\def \ltwo {\ell_2}
\newcommand \Probability[1]{\mathbb{P}\left(#1\right)}
\def \hE {\hat{\E}}
\def \F {\mathcal{F}}
\def \f {f}
\def \supf {\sup_{\f \in \F}}
\def \loss {\ell}
\newcommand{\br}{f} 
\def \tbr {\tilde\br}
\def \dbr {\partial \br}
\def \lossOne {\loss_1}
\def \lossTwo {\loss_2}
\def \LossBound {{B_\loss}}
\def \Cons {C}
\def \modFn{\rho} 
\newcommand{\rClass}{\mathcal{H}}
\newcommand{\rClassBound}{B_r}
\newcommand{\rClassBoundMin}{b_r}
\newcommand{\rClassRangeTwo}{(\rClassBoundMin, \rClassBound)}
\newcommand{\rClassRangeTwoName}{I_r}
\def \supr {\sup_{\r \in \rClass}}
\def \supt {\sup_{t \in \rClassRangeTwoName}}
\def \pnu {p_{\mathrm{nu}}}
\def \pde {p_{\mathrm{de}}}
\def \pmod {p_{\mathrm{mod}}}
\newcommand{\rmax}{\overline{R}}
\def \Enu {\E_{\mathrm{nu}}}
\def \Ede {\E_{\mathrm{de}}}
\def \Emod {\EX\hEmod} 
\def \nde {n_{\mathrm{de}}}
\def \nnu {n_{\mathrm{nu}}}
\def \hEnu {\hat{\E}_{\mathrm{nu}}}
\def \hEde {\hat{\E}_{\mathrm{de}}}
\def \hEmod {\hat{\E}_{\mathrm{mod}}}
\newcommand{\Risk}{\mathrm{BD}_{\br}}
\newcommand{\hRisk}{\widehat{\mathrm{BD}}_{\br}}
\newcommand{\nnhRisk}{\widehat{\mathrm{nnBD}}_{\br}}
\newcommand{\EnnhRisk}{\E\nnhRisk}
\def \rstar {r^*}
\def \rbest {\bar{r}}
\def \hr {\hat{\r}}
\newcommand{\Rademacher}[2]{\Rad_{#1}^{#2}}
\newcommand{\Radnu}{\Rademacher{\nnu}{\pnu}}
\newcommand{\Radde}{\Rademacher{\nde}{\pde}}
\def \LipmodFn {L_\modFn}
\def \LipOne {L_{\lossOne}}
\def \LipTwo {L_{\lossTwo}}
\def \rad {\sigma}
\def \ERad {\E_{\rad}}
\def \Rad {\mathcal{R}}
\def \EX {\E}
\def \ghostMark {(\mathrm{gh})}
\def \EXghost {\E^{\ghostMark}}
\def \hEjk {\hE_{(i, j)}}
\def \hEghost {\hat{\E}^{\ghostMark}}
\def \hEghostjk {\hEghost_{(j, k)}}
\def \X {X}
\def \Xghost {X^{\ghostMark}}
\def \hS {\hat{S}}
\def \EhS {S}
\def \ljk {\loss_{(j, k)}}
\def \sumj {\sum_{j=1}^J}
\def \sumjk {\sum_{k=1}^{K_j}}
\def \Liprhoj {L_{\rho_j}}
\def \LipLossjk {L_{\ljk}}
\def \Radjk {\Rad_{\njk, \pjk}}
\def \njk {n_{(j, k)}}
\def \pjk {p_{(j, k)}}
\def \InSpace {\mathcal{X}}
\def \Xsetjk {\mathcal{D}_{(j, k)}}
\newcommand \Indicator[1]{\mathbbm{1}\{#1\}}
\def \Identity {\mathrm{Id}}
\def \supp {\mathrm{supp}}
\def \modFnIdSupp {\supp(\modFn - \Identity)}
\def \modFnIdLip {L_{\modFn - \Identity}}
\def \infr {\inf_{\r \in \rClass}}
\newcommand{\lOneR}{\lossOne(\r(X))}
\newcommand{\BiasTerm}{\Phi_{(\Cons, \br, \modFn)}(\nnu, \nde)}
\newcommand{\BiasTermMainText}{\Phi_{\Cons}^\br(\nnu, \nde)}
\def \supLossVal {\sup_{s: |s| \leq (1+\Cons)\LossBound}}
\def \InputBound {B_p}
\def \ParamBoundj {B_{W_j}}
\newcommand{\Radnp}{\Rademacher{n}{p}}
\def \E {\mathbb{E}}
\newcommand{\lTwoR}{\lossTwo(\r(X))}
\newcommand{\lOne}[1]{\lossOne^{#1}}
\newcommand{\lTwo}[1]{\lossTwo^{#1}}
\newcommand{\lzeroNrm}[1]{\|#1\|_0}
\newcommand{\hEmodTwo}{\hEmod'}
\newcommand{\nnuz}{\nnu^0}
\newcommand{\ndez}{\nde^0}
\newcommand{\AppdxEmpProcMcDiarmidRadSum}{\mathcal{R}}
\newcommand{\smallO}{
  \mathchoice
  {{\scriptstyle\mathcal{O}}}
  {{\scriptstyle\mathcal{O}}}
  {{\scriptscriptstyle\mathcal{O}}}
  {\scalebox{.6}{$\scriptscriptstyle\mathcal{O}$}}
}
\newcommand{\order}{\smallO{}}
\newcommand{\Order}[1]{\bigO\left(#1\right)}
\newcommand{\Orderp}[1]{\bigO_{\mathbb{P}}\left(#1\right)}
\newcommand{\sparseNetworkRadBoundConst}{c}
\newcommand{\Lmax}{\bar{L}}
\newcommand{\pmax}{\bar{p}}
\newcommand{\rClassM}{\rClass_M}
\newcommand{\bracketEntropy}[3]{H_B\left(#1, #2, #3\right)}
\newcommand{\coveringNumber}[3]{\mathcal{N}\left(#1, #2, #3\right)}
\newcommand{\inftyCoveringNumber}[2]{\coveringNumber{#1}{#2}{\|\cdot\|_\infty}}
\newcommand{\rClassMSupNormBoundConst}{c_0}
\newcommand{\IndLP}{\mathrm{Ind}_{\Lmax,\pmax}}
\newcommand{\IndLPM}{\stackrel{(L, p) \in \IndLP}{I_1(L, p) \leq M}}
\newcommand{\IndLPMInline}{(L, p) \in \IndLP: I_1(L, p) \leq M}
\newcommand{\rClassLP}{\rClass(L, p, s, F)}
\newcommand{\elled}[1]{\ell \circ #1}
\newcommand{\ellLip}{\nu}
\newcommand{\vmax}[2]{\max\left\{#1, #2\right\}}
\newcommand{\vmin}[2]{\min\left\{#1, #2\right\}}
\newcommand{\exponentialOrderOne}{\exp\left(- \frac{2 \alpha^2}{(\LossBound^2/\nde) + (\Cons^2\LossBound^2 / \nnu)}\right)}
\icmltitlerunning{Non-Negative Bregman Divergence Minimization for Deep Direct Density Ratio Estimation}
\begin{document}

\twocolumn[
\icmltitle{Non-Negative Bregman Divergence Minimization\\
for Deep Direct Density Ratio Estimation}




\begin{icmlauthorlist}
\icmlauthor{Masahiro Kato}{to}
\icmlauthor{Takeshi Teshima}{goo}
\end{icmlauthorlist}

\icmlaffiliation{to}{CyberAgent, Inc., Tokyo, Japan}
\icmlaffiliation{goo}{The University of Tokyo, Tokyo, Japan}

\icmlcorrespondingauthor{Masahiro Kato}{masahiro\_kato@cyberagent.co.jp}
\icmlcorrespondingauthor{Takeshi Teshima}{teshima@ms.k.u-tokyo.ac.jp}

\icmlkeywords{Density ratio estimation, Anomaly detection, Covariate shift adaptation, PU learning, Bregman divergence}

\vskip 0.3in
]



\printAffiliationsAndNotice{} 

\begin{abstract}
\emph{Density ratio estimation} (DRE) is at the core of various machine learning tasks such as anomaly detection and domain adaptation. In existing studies on DRE, methods based on \emph{Bregman divergence} (BD) minimization have been extensively studied. However, BD minimization when applied with highly flexible models, such as deep neural networks, tends to suffer from what we call \emph{train-loss hacking}, which is a source of \emph{overfitting} caused by a typical characteristic of empirical BD estimators. In this paper, to mitigate train-loss hacking, we propose a \emph{non-negative correction} for empirical BD estimators. Theoretically, we confirm the soundness of the proposed method through a generalization error bound. Through our experiments, the proposed methods show a favorable performance in inlier-based outlier detection.
\end{abstract}

\section{Introduction}
Density ratio estimation (DRE) has attracted a great deal of attention as an essential task in various machine learning problems, such as regression under a covariate shift \citep{shimodaira2000improving,Reddi2015}, learning with noisy labels \citep{liu2014noisy,fang2020rethinking}, anomaly detection \citep{smola09a,Hido2011,Abe2019}, two-sample testing \citep{keziou2005,kanamori2010,sugiyama2011a}, causal inference \citep{kato_uehara_2020}, change point detection \citep{kawahara2009}, and binary classification only from positive and unlabeled data (PU learning; \citealp{kato2018learning}). For instance, anomaly detection is not easy to apply based on standard machine learning methods since anomalous data are often scarce; however, we can solve this by estimating the density ratio when anomaly-free unlabeled test data are available \citep{hido2008}.

Among the various approaches to DRE, we focus on the Bregman divergence (BD) minimization framework \citep{bregman1967relaxation,sugiyama2011bregman}, which is a general framework that unifies various DRE methods, such as moment matching \citep{huang2007,gretton2009}, probabilistic classification \citep{qin1998,Cheng2004}, density matching \citep{nguyen2010}, and density-ratio fitting \citep{JMLR:Kanamori+etal:2009}. \citet{kato2018learning} proposed using the risk of PU learning for DRE, which also falls within this framework (Appendix~\ref{appdx:exist_DRE}).

Existing methods have mainly adopted linear-in-parameter models for DRE \citep{KanamoriStatistical2012a}. On the other hand, recent studies in machine learning have suggested that deep neural networks achieve significantly high performances for various tasks, such as computer vision (CV) \citep{Krizhevsky2012} and natural language processing (NLP) \citep{bengio2001}. These findings motivate us to use deep neural networks for DRE.

However, when using deep neural networks in combination with empirical BD minimization, we often observe a serious \emph{overfitting} problem as experimentally demonstrated in Figure~\ref{fig:numerical_exp1} of Section~\ref{sec:num_exp} and Figure~\ref{fig:numerical_exp2} of Appendix~\ref{sec:exp_val_estimators}. We observe that this is mainly because a model of the density ratio $r(X) = \frac{q(X)}{p(X)}$ between two probability densities $p$ and $q$ becomes large for high-dimensional data when using a flexible model. As the intuition behind this phenomenon, a flexible model can overfit the samples generated from $p(X)$ as if there were no common support between $p(X)$ and $q(X)$ (Figure~\ref{fig1:box_plot}). This hypothesis is inspired by \citet{KiryoPositiveunlabeled2017}, which reports a similar problem in PU learning. In the case of DRE through BD minimization, we conjecture that this phenomenon is caused by the objective function that monotonically decreases with respect to $r(X)$, misleading the model $r(X)$ to take on as large a value as possible on the specific data points $X$. Note that even if $r(X)$ is bounded, this phenomenon still manifests as the model takes on the largest possible value within its output range. \citet{Rhodes2020a} and \citet{Ansari2020iclr} independently found related problems in DRE. Whereas \citet{KiryoPositiveunlabeled2017} and \citet{Rhodes2020a} call their phenomena \emph{overfitting} and \emph{density-chasm problem}, respectively, we refer to our problem as \emph{train-loss hacking} because this problem is specific to methods based on BD minimization, and we can still observe this issue even when the true $r(x)=\frac{p(x)}{q(x)}$ is not significantly large. This problem is discussed in more detail in Section~\ref{subsec:diff_dre} (Figure~\ref{fig1:box_plot}).

Owing to this property, training a density ratio model with a flexible model tends to result in either a diverging empirical BD estimator or a model sticking to the upper bound of its output range. For instance, when the empirical BD divergence is not lower bounded, it often numerically diverges to negative infinity. Even when the loss function has a lower bound (see \emph{BKL} and \emph{Bounded uLSIF} introduced in Sections~\ref{sec:example_dre} and \ref{sec:num_exp}), the trained models tend to stick to the largest possible value of their output ranges (Bounded uLSIF in Figure~\ref{fig:numerical_exp1} and BKL-NN in Figure~\ref{fig:numerical_exp2}). Although train-loss hacking has rarely been discussed in existing studies on DRE, this problem is often encountered while using deep neural networks, as experimentally shown in Section~\ref{sec:num_exp}. One reason for this is that the existing studies use linear-in-parameter models \citep{KanamoriStatistical2012a} or simple shallow neural networks \citep{Hyunha2015,Abe2019} for the density ratio models, which tend to be inflexible in that they do not cause such a phenomenon.

To mitigate the train-loss hacking, we propose a general procedure to modify the empirical BD estimator. First, from the empirical BD divergence, we separate the term causing the train-loss hacking. We then apply a \emph{non-negative correction} to the term to make the model consistent with a constraint that should be satisfied within the population. Our idea of this correction is inspired by \citet{KiryoPositiveunlabeled2017}. However, their idea of a non-negative correction is only applicable to the binary classification setting; thus, we require a non-trivial rewriting of the BD to generalize the approach to our problem. We call our proposed objective function the \emph{non-negative BD} (nnBD). The proposed method can be regarded as a generalization of the method proposed by \citet{KiryoPositiveunlabeled2017}.


Our main contributions are (1) proposal of a general procedure to modify an empirical BD estimator to enable DRE with flexible models, (2) theoretical justification of the proposed estimator, and (3) experimental validation of the proposed method using benchmark data. 

\section{Problem setting}
\label{sec:prob}
Let $\mathcal{X}^{\mathrm{nu}}\subseteq\mathbb{R}^d$ and $\mathcal{X}^{\mathrm{de}}\subseteq\mathbb{R}^d$ be the spaces of the $d$-dimensional \emph{covariates}.
Here, ``nu'' and ``de'' indicate the numerator and denominator.
Let $p_{\mathrm{nu}}$ and $p_{\mathrm{de}}$ be the probability densities over $\mathcal{X}^{\mathrm{nu}}$ and $\mathcal{X}^{\mathrm{de}}$, respectively.
We have independent and identically distributed (i.i.d.) samples from these distributions: ${\bf{X}}^\mathrm{nu} = \big\{X^{\mathrm{nu}}_j\big\}^{n_\mathrm{nu}}_{j=1}\iid p_{\mathrm{nu}}$ and ${\bf{X}}^\mathrm{de} = \big\{X^{\mathrm{de}}_i\big\}^{n_\mathrm{de}}_{i=1}\iid p_{\mathrm{de}}$.

\paragraph{Basic assumption and goal.}
Throughout this paper, we assume that $p_{\mathrm{nu}}(X)$ and $p_{\mathrm{de}}(X)$ are strictly positive over $\mathcal{X}^{\mathrm{nu}}$ and $\mathcal{X}^{\mathrm{de}}$, respectively.
We also assume $\mathcal{X}^{\mathrm{nu}}\subseteq\mathcal{X}^{\mathrm{de}}$, which is a typical assumption in the literature on DRE, e.g., Section 2.1 of \citet{JMLR:Kanamori+etal:2009}.
The goal of DRE is to estimate $r^*(x)=\frac{p_{\mathrm{nu}}(x)}{p_{\mathrm{de}}(x)}$ from the samples $\bf{X}^\mathrm{nu}$ and $\bf{X}^\mathrm{de}$.

\paragraph{Additional notation.}
Let $\mathbb{E}_{\mathrm{nu}}$ and $\mathbb{E}_{\mathrm{de}}$ denote the expectations with respect to $p_{\mathrm{nu}}(X)$ and $p_{\mathrm{de}}(X)$, respectively; in addition, $\hat{\mathbb{E}}_{\mathrm{nu}}$ and $\hat{\mathbb{E}}_{\mathrm{de}}$ denote the sample averages over $\big\{X^{\mathrm{nu}}_j\big\}^{n_\mathrm{nu}}_{j=1}$ and $\big\{X^{\mathrm{de}}_i\big\}^{n_\mathrm{de}}_{i=1}$, respectively.

\begin{table*}
\caption{Summary of DRE methods \citep{sugiyama2011bregman}. For PULogLoss, we use $C < \frac{1}{\overline{R}}$.}
\label{tbl:methods_from_br_minimization}
\begin{center}
\scalebox{0.87}[0.87]{
\begin{tabular}{llll}
\hline
Method & $f(t)$  & Lower bound of $\widehat{\mathrm{BD}}_f$ & Reference\\
\hline
LSIF & $(t-1)^2/2$ & Not bounded & \citet{JMLR:Kanamori+etal:2009}\\
Kernel Mean Matching & $(t-1)^2/2$ & Not bounded & \citet{gretton2009}\\
UKL & $t\log (t) - t$ & Not bounded & \citet{nguyen2010}\\
KLIEP & $t\log (t) - t$  & Not bounded & \citet{sugiyama2008}\\
BKL (LR) &  $t\log (t) - (1 + t)\log(1 + t)$ & Bounded & \citet{hastie_09_elements-of_statistical-learning}\\
PULogLoss  & $C\log\left(1-t\right) + Ct\left(\log\left(t\right)-\log\left(1-t\right)\right)$ for $0 < t < 1$ & Not bounded & \citet{kato2018learning}\\
\hline
\end{tabular}}
\end{center}
\end{table*}

\subsection{Density ratio matching by BD minimization}
Among existing DRE methods, we focus on \emph{density ratio matching through BD minimization} (DRM-BD; \citealp{sugiyama2011bregman}), which is a framework that unifies various DRE methods \citep{gretton2009,sugiyama2008,JMLR:Kanamori+etal:2009,nguyen2010}.

DRM-BD estimates the density ratio by minimizing the objective function derived as follows:
Let \(\rClassRangeTwo \subset [0, \infty)\), and let \(\br: \rClassRangeTwo \to \Re\) be a twice continuously differentiable convex function with a bounded derivative $\partial f$ (Table~\ref{tbl:methods_from_br_minimization}).
We quantify the discrepancy from the true density ratio function $r^*$ to a density ratio model $r$ by
\begin{multline}
\label{br_dr_relevant}
\mathrm{BD}_f(r^*\|r):=\mathbb{E}_{\mathrm{de}}\left[\partial f(r(X))r(X) - f(r(X))\right]\\
\qquad - \mathbb{E}_{\mathrm{nu}}\left[\partial f(r(X))\right]
\end{multline}
which is equal to the BD \citep{bregman1967relaxation} defined as $\mathbb{E}_{\mathrm{de}}[\ddot{\mathrm{BD}}_f(r^*(X)\|r(X))]$, where 
\begin{align*}
\ddot{\mathrm{BD}}_f(t^*\|t):=f(t^*) - f(t) - \partial f(t)(t^*-t),
\end{align*}
ignoring the constant $\overline{\mathrm{BD}}=\mathbb{E}_{\mathrm{de}}\big[f(r^*(X))\big]$. Then, given a hypothesis class $\mathcal{H}$, DRM-BD estimates $r^*$ using a minimizer of the sample analog of \eqref{br_dr_relevant}:
\begin{multline}
\label{sample_br}
\widehat{\mathrm{BD}}_f(r):=\hat{\mathbb{E}}_{\mathrm{de}}\Big[\partial f\big(r(X_i)\big)r(X_i) - f\big(r(X_i)\big)\Big]\\
\qquad- \hat{\mathbb{E}}_{\mathrm{nu}}\Big[\partial f\big(r(X_j)\big)\Big].
\end{multline}

\subsection{Examples of DRE}
\label{sec:example_dre}
\citet{sugiyama2011bregman} showed that BD minimization can unify various DRE methods. Furthermore, \citet{pmlr-v48-menon16} showed an equivalence between conditional probability estimation and DRE from the BD minimization perspective. In addition, by generalizing the results of \citet{ICML:duPlessis+etal:2015} and \citet{kato2018learning}, we derive a novel method for DRE from PU learning in Appendix~\ref{appdx:example_f}. We summarize the DRE methods in Table~\ref{tbl:methods_from_br_minimization}. Here, the empirical risks of \emph{least-square importance fitting} (LSIF), \emph{unnormalized Kullback–Leibler} (UKL) divergence, \emph{binary Kullback–Leibler} (BKL) divergence, and \emph{PU learning with log Loss} (PULogLoss) are given as 
\begin{align*}
&\widehat{\mathrm{BD}}_{\mathrm{LSIF}}(r) := \frac{1}{2} \hat{\mathbb{E}}_{\mathrm{de}}[r^2(X_i)] - \hat{\mathbb{E}}_{\mathrm{nu}}[r(X_j)],\\
&\widehat{\mathrm{BD}}_{\mathrm{UKL}}(r) := \hat{\mathbb{E}}_{\mathrm{de}}\left[r(X_i)\right] - \hat{\mathbb{E}}_{\mathrm{nu}} \left[\log\big(r(X_j)\big)\right],\\
&\widehat{\mathrm{BD}}_{\mathrm{BKL}}(r) := - \hat{\mathbb{E}}_{\mathrm{de}}\left[\mathrm{BKL1}(X_i)\right] - \hat{\mathbb{E}}_{\mathrm{nu}} \left[\mathrm{BKL2}(X_j)\right],\\
&\widehat{\mathrm{BD}}_{\mathrm{PU}}(r):= -\hat{\mathbb{E}}_{\mathrm{de}} \left[\log\big(1-r(X_i)\big)\right]\\
&\ \ \ \ \ \ \ \ \ \ \ \ \ \ \  + C\hat{\mathbb{E}}_{\mathrm{nu}} \left[-\log\big(r(X_j)\big)+ \log\big(1-r(X_j)\big)\right], 
\end{align*}
where $0 < C < \frac{1}{\overline{R}}$, $\overline{R}$ is an upper bound on $r^*$, $\mathrm{BKL1}(x) = \log\left(\frac{1}{1+r(x)}\right)$, and $\mathrm{BKL2}(x) = \log\left(\frac{r(x)}{1+r(x)}\right)$. Here, $\widehat{\mathrm{BD}}_{\mathrm{LSIF}}(r)$ and $\widehat{\mathrm{BD}}_{\mathrm{PU}}(r)$ correspond to LSIF and PULogLoss, respectively. We can derive the \emph{Kullback–Leibler importance estimation procedure} (KLIEP) and logistic regression-based DRE (LR) from $\widehat{\mathrm{BD}}_{\mathrm{UKL}}(r)$ and $\widehat{\mathrm{BD}}_{\mathrm{BKL}}(r)$. In $\widehat{\mathrm{BD}}_{\mathrm{PU}}(r)$, we restrict the model's output to be within $(0,1)$ and the estimated model $r$ becomes an estimator of $Cr^*$. Details of these methods are provided in Appendix~\ref{appdx:exist_DRE}.

\begin{figure}[t]
  \begin{center}
    \includegraphics[width=77mm]{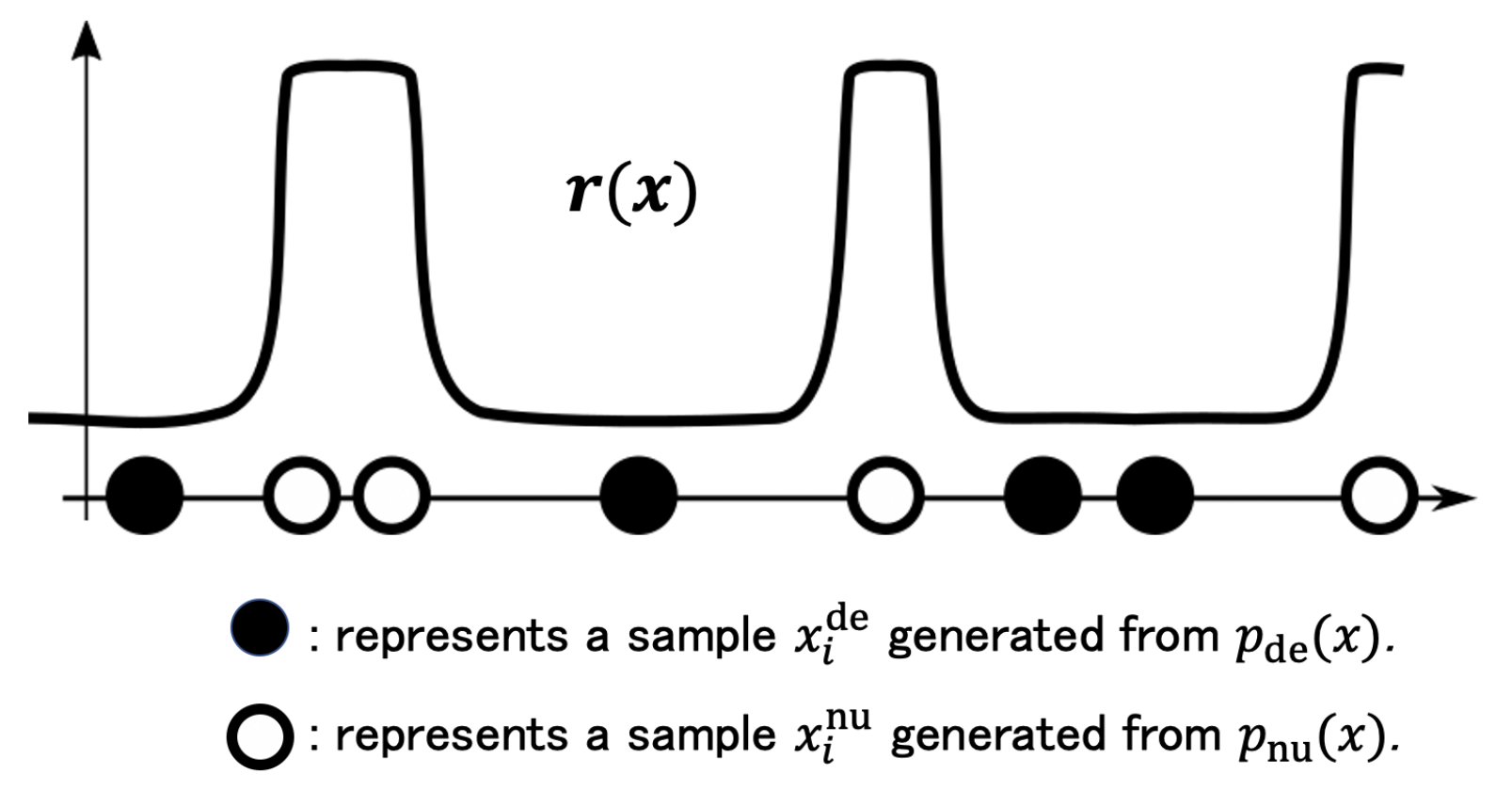}
  \end{center}
  \caption{Illustration of the train-loss hacking phenomenon. Given finite data points, a sufficiently flexible model $r$ can easily make the second term of the objective function $\widehat{\mathrm{BD}}_f$ largely negative, resulting in an unreasonable minimizer.
  }
  \label{fig1:box_plot}
\end{figure}

\subsection{Train-loss hacking problem}
\label{subsec:diff_dre}
DRM-BD with neural networks often suffers from an overfitting.
For instance, in Section~\ref{sec:num_exp}, we show that the LSIF with neural networks suffers from a serious overfitting issue. We conjecture that a conceivable cause of the overfitting in DRE-BD is the \emph{train-loss hacking}. This hypothesis is inspired by \citet{KiryoPositiveunlabeled2017}, which tackled a similar problem in PU learning; see Appendix~\ref{section: train loss hacking in PU} for a brief review.

Train-loss hacking is a phenomenon in which $r(X_j)$ increases to a large value for $\big\{X^{\mathrm{nu}}_j\big\}^{n_\mathrm{nu}}_{j=1} \iid p_{\mathrm{nu}}(x)$ when a model is trained to minimize an objective function consisting of multiple separate samples. Recall that, in the case of DRE-BD, the objective function \eqref{sample_br} consists of two empirical averages.
Among these two, we can make the second term $-\hat{\mathbb{E}}_{\mathrm{nu}}\big[\partial f\big(r(X_j)\big)\big]$ small by making the model to take as large values as possible at the specific input points $\bf{X}^\mathrm{nu}$ (Figure~\ref{fig1:box_plot}). In fact, since $f$ is a convex function, $\partial f$ is an increasing function; hence, this term monotonically decreases as $r$ increases in $\bf{X}^\mathrm{nu}$. As a result, when there is no lower bound on $-\hat{\mathbb{E}}_{\mathrm{nu}}\big[\partial f\big(r(X_j)\big)\big]$, it often numerically diverges to negative infinity\footnote{Even if we use a bounded model for $r$, it still tends to diverge during the numerical computation.}. Even when there is a lower bound on $-\hat{\mathbb{E}}_{\mathrm{nu}}\big[\partial f\big(r(X_j)\big)\big]$, $r(X_j)$ tends to take on the largest possible value of its output range at the points $\bf{X}^\mathrm{nu}$. Naively ``capping'' the model's output, e.g., composing $r(X)$ with $\max\{\cdot, B\}$ where \(B > 0\) is a constant, fails to remedy the issue as the model still tends to stick to its largest possible value. We experimentally demonstrate this by implementing the Bounded uLSIF (Figure~\ref{fig:numerical_exp1}).

This is a critical issue, since merely making the output large on $\bf{X}^\mathrm{nu}$ is unlikely to be a reasonable training criterion for DRE, and it may only lead to an unreasonable density ratio estimator (Figure~\ref{fig1:box_plot}).
The issue becomes salient when the model has a high flexibility.
If the hypothesis class has an extremely limited flexibility, this may not be an issue since the remaining term $\hat{\mathbb{E}}_{\mathrm{de}}\Big[\partial f\big(r(X_i)\big)r(X_i) - f\big(r(X_i)\big)\Big]$ is likely to introduce a trade-off.
However, when highly flexible models such as deep neural networks are employed, the model can easily fit to $\bf{X}^\mathrm{nu}$ and $\bf{X}^\mathrm{de}$ separately (Figure~\ref{fig1:box_plot}).






\section{Deep direct DRE based on non-negative risk estimator}
\label{sec:D3RE}
Although DRE with flexible models suffers from serious train-loss hacking, we still have a strong motivation to use them for applications, such as CV and NLP. In this section, we describe our approach to modify the DRM-BD objective function to mitigate the train-loss hacking problem.

\subsection{Non-negative BD}
To alleviate the train-loss hacking problem, we propose a \emph{non-negative BD estimator} that modifies an empirical BD estimator \eqref{sample_br} to be robust against the problem. The proposed method is inspired by \citet{KiryoPositiveunlabeled2017}, which suggested a non-negative correction to the empirical risk of PU learning based on the knowledge that a part of the population risk is non-negative. However, in DRE, it is not straightforward to employ this approach because we do not know which part of the population risk \eqref{br_dr_relevant} is non-negative. In this paper, by assuming an upper bound $\overline{R}$ on the density ratio $r^*$, we detect which part of the risk of DRE \eqref{br_dr_relevant} is non-negative in the population. Then, we apply a non-negative correction to an empirical BD estimator \eqref{sample_br} based on the non-negativity of the corresponding part of the population risk. This non-negative correction also corresponds to a generalization of non-negative PU learning \citep{KiryoPositiveunlabeled2017}.

To enable our approach to mitigate the train-loss hacking, we apply the following assumption:
\begin{assumption}\label{assumption: bounded density ratio}
The density ratio $r^*$ is bounded from above,
i.e., $\overline{R} = \sup_{X \in \mathcal{X}^{\mathrm{de}}}r^*(X) < \infty$.
\end{assumption}
Then, we arbitrarily specify a constant $C$ such that $0 < C < \frac{1}{\overline{R}}$.
Using $C$, we make the following assumption.
\begin{assumption}\label{assumption:tilde-f2} A function
$\tilde{f}$ defined by
\begin{align}
\label{eq:tilde_f}
\partial f (t) = C \big(\partial f(t)t - f(t)\big)+\tilde f(t)
\end{align}
is bounded from above.
\end{assumption}
Then, we rewrite the DRM-BD objective \eqref{br_dr_relevant} as
\begin{multline}
\label{eq:decomp_BR}
\mathrm{BD}_f(r^*\|r)=\annot{\mathbb{E}_{\mathrm{de}}[\lossOne(r(X))] - C\mathbb{E}_{\mathrm{nu}}[\lossOne(r(X))]}{($\ast$)}\\
\qquad\qquad\qquad + \mathbb{E}_{\mathrm{nu}}\lossTwo(r(X)) - (1-C)A,
\end{multline}
where $\lossOne$ and $\lossTwo$ are
\begin{align*}
\lossOne(t) &:= \dbr(t) t - \br(t) + A,\quad
\lossTwo(t) := - \tbr(t),
\end{align*}
and $A$ is a constant such that $\lossOne(t) \geq 0$ for all $t\in (b_r, B_r)$.
Now, we make the following observation.
\paragraph{Observation.} The ($\ast$) part in \eqref{eq:decomp_BR} is non-negative since both $\lossOne$ and $p_{\mathrm{de}}-C p_{\mathrm{nu}}$ are non-negative under Assumption~\ref{assumption: bounded density ratio} and $0 < C < \frac{1}{\overline{R}}$.

Based on this observation, we propose using the following modified empirical risk:
\begin{multline}
\label{eq:nnBD}
\widehat{\mathrm{nnBD}}_f(r) :=
\left(\hat{\mathbb{E}}_{\mathrm{de}}\big[\lossOne(r(X_i))\big] - C \hat{\mathbb{E}}_{\mathrm{nu}}\big[\lossOne(r(X_j))\big]\right)_+\\
\quad+\hat{\mathbb{E}}_{\mathrm{nu}}\big[\lossTwo(r(X_j))\big]
\end{multline}
where $(\cdot)_+ := \max\{0, \cdot\}$. Note that the nonnegativity of ($\ast$) is always satisfied in the population quantity; however, it can be violated in finite samples, allowing for train-loss hacking.
Our \emph{deep direct DRE} (D3RE) is based on minimizing $\widehat{\mathrm{nnBD}}_f(r)$
over a hypothesis class of the density ratio \(\rClass \subset \{r: \Re^d \to \rClassRangeTwo\}\), where \(0 \leq \rClassBoundMin < \rmax < \rClassBound\).

\paragraph{Instantiations of the D3RE objective functions.} The above strategy can be instantiated with various functions $f$ proposed for DRM-BD. Here, we introduce nnBD corresponding to LSIF, UKL, BKL, and PULogLoss as follows:
\begin{align*}
&\widehat{\mathrm{nnBD}}_{\mathrm{LSIF}}(r):=-\hat{\mathbb{E}}_{\mathrm{nu}}\left[r(X_j)-\frac{C}{2}r^2(X_j)\right]\\
&\ \ \ \ \ \ \ \ \ \ \ \ \ \ \ \ \ \ \ \ \ \ +\left(\frac{1}{2}\hat{\mathbb{E}}_{\mathrm{de}}\left[r^2(X_i)\right] - \frac{C}{2}\hat{\mathbb{E}}_{\mathrm{nu}} \left[r^2(X_j)\right]\right)_+,\ \quad\\
&\widehat{\mathrm{nnBD}}_{\mathrm{UKL}}(r):= - \hat{\mathbb{E}}_{\mathrm{nu}} \left[\log\big(r(X_j)\big) - Cr(X_j)\right]\\
&\ \ \ \ \ \ \ \ \ \ \ \ \ \ \ \ \ \ \ \ \ \ \ \ \ \ \ \ \ \ \ \ \ +  \left(\hat{\mathbb{E}}_{\mathrm{de}}\left[r(X_i)\right] - C\hat{\mathbb{E}}_{\mathrm{nu}}\left[r(X_j)\right]\right)_+,\nonumber\\
&\widehat{\mathrm{nnBD}}_{\mathrm{BKL}}(r):= - \hat{\mathbb{E}}_{\mathrm{nu}} \left[\mathrm{BKL2}(X_j) + C\mathrm{BKL1}(X_j)\right]\nonumber\\
&\ \ \ \ \ \ \ \ \ + \Bigg(-\hat{\mathbb{E}}_{\mathrm{de}}\left[\mathrm{BKL1}(X_i)\right]+ C\hat{\mathbb{E}}_{\mathrm{nu}}\left[\mathrm{BKL1}(X_j)\right]\Bigg)_+,\\
&\widehat{\mathrm{nnBD}}_{\mathrm{PU}}(r):=-C\hat{\mathbb{E}}_{\mathrm{nu}}\left[\log\big(r(X_j)\big)\right]\\
&+\left(C\hat{\mathbb{E}}_{\mathrm{nu}}\left[ \log\big(1-r(X_j)\big)\right] - \hat{\mathbb{E}}_{\mathrm{de}}\left[\log\big(1-r(X_i)\right]\right)_+.\nonumber
\end{align*}
More detailed derivation of $\tilde f$ is in Appendix~\ref{appdx:example_f}. 

The algorithm for D3RE is described in Algorithm~\ref{alg}. For training with a large amount of data, we adopt a stochastic optimization by splitting the dataset into mini-batches.
In stochastic optimization, we separate the samples into $N$ mini-batches as
$(\big\{X^{\mathrm{nu}}_i\big\}^{n_{\mathrm{nu},j} }_{i=1}, \big\{X^{\mathrm{de}}_i\big\}^{n_{\mathrm{de}, j}}_{i=1})$ ($j = 1, \ldots, N$,
where $n_{\mathrm{nu}, j}$ and $n_{\mathrm{de}, j}$ are the sample sizes for each mini-batch. Then, we consider the sample average in each mini-batch. Let $\hat{\mathbb{E}}^j_{\mathrm{nu}}$ and $\hat{\mathbb{E}}^j_{\mathrm{de}}$ be sample averages over $\big\{X^{\mathrm{nu}}_i\big\}^{n_{\mathrm{nu},j} }_{i=1}$ and $\big\{X^{\mathrm{de}}_i\big\}^{n_{\mathrm{de}, j}}_{i=1}$. In addition, we use regularization, such as L1 and L2 penalties, as denoted by $\mathcal{R}(r)$.

To improve the performance, we can heuristically employ \emph{gradient ascent} from \citet{KiryoPositiveunlabeled2017} when $\hat{\mathbb{E}}_{\mathrm{de}}\big[\lossOne(r(X))\big] - C \hat{\mathbb{E}}_{\mathrm{nu}}\big[\lossOne(r(X))\big]$ becomes less than $0$, i.e., the model is updated in the direction that increases the term. Note that gradient ascent is not essential in D3RE, and we can obtain similar results even without it (see experiments in Appendix~\ref{appdx:res_wo_ga}). We recommend practitioners to use a gradient ascent and those concerned with a theoretical guarantee to use a plain gradient descent.

\paragraph{Choice of $C$.}
Although we use an upper bound of the density ratio in the formulation, we do not require a tight one. The main role of the upper bound is to prevent the density ratio model from diverging, and as long as we successfully prevent divergence, the proposed algorithms work well. We find that D3RE is robust against a loose specification of the upper bound to a certain extent in our experiment (the right graph in Figure~\ref{fig:numerical_exp1} of Section~\ref{sec:num_exp}). Thus, in practice, selecting $C$ does not require accurate knowledge of $\overline{R}$. In fact, in inlier-based outlier detection experiments, the proposed methods under a loose specification of the upper bound achieve a preferable performance. However, selecting a hyper-parameter $C$ that is much smaller than $1/\overline{R}$ may damage the empirical performance as shown in Section~\ref{sec:num_exp}. This, of course, does not mean that $1/\overline{R}$ should not be small; if $1/\overline{R}$ is small, $C$ can also be small. 

\paragraph{Non-negative PU learning.} \citet{KiryoPositiveunlabeled2017} proposed a non-negative correction for PU learning (nnPU). In this paper, we propose a non-negative correction for DRE, inspired by \citet{KiryoPositiveunlabeled2017}; however, our extension is nontrivial because the relationship between DRE and PU learning has not been well understood. Another contribution of this paper is that it clarifies the relationship between DRE and PU learning, as described in Appendix~\ref{appdx:exist_DRE} and Section~\ref{sec:experiments}. We find that the class-prior in PU learning corresponds to the upper bound of $r^*$ in DRE, and that as \citet{Sugiyama:2012:DRE:2181148} generalized DRE in terms of BD divergence minimization, the risk of PU learning can also be generalized through BD divergence minimization. This finding had been implied by \citet{kato2018learning}, although it had not been formally shown. This finding clarifies the relationship between DRE and PU learning, and thus makes it possible to apply the non-negative correction to DRE, such as nnPU.

\begin{algorithm}[tb]
   \caption{D3RE}
   \label{alg}
\begin{algorithmic}
   \STATE {\bfseries Input:} Training data $\big\{X^{\mathrm{nu}}_j\big\}^{n_\mathrm{nu}}_{j=1}$ and $\big\{X^{\mathrm{de}}_i\big\}^{n_\mathrm{de}}_{i=1}$, the algorithm for stochastic optimization such as Adam \citep{kingma2014method}, the learning rate $\gamma$, the regularization coefficient $\lambda$ and function $\mathcal{R}(r)$, and a constant $C > 0$. 
   \STATE {\bfseries Output:} A density ratio estimator $\hat{r}$.
   \WHILE{No stopping criterion has been met:}
   \STATE Create $N$ mini-batches $$\left\{\left(\big\{X^{\mathrm{nu}}_j\big\}^{n_{\mathrm{nu},k} }_{j=1}, \big\{X^{\mathrm{de}}_i\big\}^{n_{\mathrm{de}, k}}_{i=1}\right)\right\}^N_{k=1}.$$
   \FOR{$k=1$ to $N$}
   \IF{$\hat{\mathbb{E}}^k_{\mathrm{de}}\big[\lossOne(r(X))\big] - C \hat{\mathbb{E}}^k_{\mathrm{nu}}\big[\lossOne(r(X))\big] \geq 0$:}
   \STATE{Gradient decent:} set gradient \begin{align*}&\nabla_r \big\{\hat{\mathbb{E}}^k_{\mathrm{nu}}\big[\lossTwo(r(X))\big]+\hat{\mathbb{E}}^k_{\mathrm{de}}\big[\lossOne(r(X))\big]\\
   &\ \ \ \ \ \ \ \ \ \ \ \ \ \ \ \ \ \ \ \ - C \hat{\mathbb{E}}^k_{\mathrm{nu}}\big[\lossOne(r(X))\big] + \lambda \mathcal{R}(r)\big\}.\end{align*}
   \ELSE 
   \STATE{Gradient ascent:} set gradient $$\nabla_r \big\{ - \hat{\mathbb{E}}^k_{\mathrm{de}}\big[\lossOne(r(X))\big] + C \hat{\mathbb{E}}^k_{\mathrm{nu}}\big[\lossOne(r(X))\big] + \lambda \mathcal{R}(r)\big\}.$$
   \ENDIF
   \STATE Update $r$ with the gradient and the learning rate $\gamma$. 
   \ENDFOR
   \ENDWHILE
\end{algorithmic}
\end{algorithm} 

\subsection{Motivation and intuitive justification of D3RE}
\label{sec:motive}
Here, we describe how the above non-negative risk correction alleviates the train-loss hacking problem.

\paragraph{D3RE and an unbounded empirical risk.} First, we consider the case where the empirical BD is unbounded. First, we assume the following on $\tilde f(t)$ of \eqref{eq:tilde_f}.
Assumption~\ref{assumption:tilde-f2} is satisfied by most of the loss functions which appear in the previously proposed DRE methods (see Appendix~\ref{appdx:example_f} for examples). Under Assumption~\ref{assumption:tilde-f2}, because $\tilde f(t)$ is bounded above, the train-loss hacking  $-\hat{\mathbb{E}}_{\mathrm{nu}}\Big[\partial f\big(r(X_j)\big)\Big]\to-\infty$ to minimize the empirical risk \eqref{sample_br} is caused by 
\begin{align*}
-\hat{\mathbb{E}}_{\mathrm{nu}}\Big[ C\left\{\partial f(r(X_j))r(X_j) -  f(r(X_j))\right\}\Big]\to -\infty
\end{align*}
because 
\begin{multline*}
\annot{-\hat{\mathbb{E}}_{\mathrm{nu}}\Big[\partial f\big(r(X_j)\big)\Big]}{$\rightarrow-\infty$} = \annot{
-\hat{\mathbb{E}}_{\mathrm{nu}}\Big[\tilde{f}(r(X_j))\Big]}{Bounded}\\
\annot{-\hat{\mathbb{E}}_{\mathrm{nu}}\Big[ C\left\{\partial f(r(X_j))r(X_j) -  f(r(X_j))\right\}\Big]}{$\rightarrow-\infty$}.
\end{multline*}
This observation implies that our non-negative correction \eqref{eq:nnBD}
prevents train-loss hacking by effectively introducing the correction to the problematic term ($\ast$ in \eqref{eq:decomp_BR}).

\paragraph{D3RE and a bounded empirical risk.} Next, we consider the case where $\partial f\big(r(X_j)$ or model $r(x)$ is bounded. Even in these cases, train-loss hacking can occur. For instance, if $\partial f (t) = \log(t) - \log (1+t)$ (BKL), $\partial f(t)$ is upper-bounded by $0$, and $-\hat{\mathbb{E}}_{\mathrm{nu}}\big[\partial f\big(r(X_j)\big)\big]$ does not diverge to $-\infty$. However, we can infinitely decrease $-\hat{\mathbb{E}}_{\mathrm{nu}}\big[\partial f\big(r(X_j)\big)\big]$ to $0$ by making $r(X_j)\to \infty$, which causes train-loss hacking. On the other hand, when $r(x)$ is upper-bounded, we can minimize $-\hat{\mathbb{E}}_{\mathrm{nu}}\big[\partial f\big(r(X_j)\big)\big]$ by training $r(X_j)$ to stick to the upper bound at $\big\{X^{\mathrm{nu}}_i\big\}^{n_\mathrm{nu}}_{i=1}$. Therefore, the upper-bounding $\partial f\big(r(X_j)$ or model $r(x)$ does not solve the train-loss hacking. However, for these cases, the proposed non-negative risk correction approach is empirically shown to be effective, as shown in  Figure~\ref{fig:numerical_exp1} of Section~\ref{sec:num_exp} and Figure~\ref{fig:numerical_exp2} of Appendix~\ref{sec:exp_val_estimators}. In these results, Bounded LSIF and BKL correspond to the upper bounding of the model $r(x)$ and $\partial f\big(r(X_j)$, respectively. Experimentally, DRE methods without the non-negative correction fail to learn the density ratio, while the non-negative correction succeeded in stabilizing the performance.


\section{Theoretical justification of D3RE}
\label{sec:theoretical_analysis}
In this section, we confirm the validity of D3RE by providing a generalization error bound. We derive two types of guarantees, one in terms of the BD risk and the other the $L^2$-distance. 
Given \(n \in \mathbb{N}\) and a distribution \(p\), we define the \emph{Rademacher complexity} \(\Radnp\) of a function class \(\rClass\) as $\Radnp(\rClass) := \E_p\ERad\left[\supr\left|\frac{1}{n} \sum_{i=1}^n \rad_i \r(\X_i)\right|\right]$, where \(\{\sigma_i\}_{i=1}^n\) are independent uniform sign variables and $\{X_i\}_{i=1}^n \iid p$. We omit \(\rstar\) from the notation \(\Risk\) when there is no ambiguity.

\subsection{Generalization error bound on BD}
Let \(\rClassRangeTwoName := \rClassRangeTwo\). Theorem~\ref{thm:main-text:estimation-error-bound} in Appendix~\ref{section:appendix:estimation-error-bound} provides a generalization error bound in terms of the Rademacher complexities of a hypothesis class and the following assumption. 

\begin{assumption}
Following (i)--(iv) hold:
\begin{description}
\item[(i)] there exists an empirical risk minimizer \(\hr \in \argmin_{\r \in \rClass} \nnhRisk(\r)\) and a population risk minimizer \(\rbest \in \argmin_{\r \in \rClass} \Risk(\r)\);
\item[(ii)] \(\LossBound := \supt\{\max\{|\lossOne(t)|, |\lossTwo(t)|\}\} < \infty\);
\item[(iii)] \(\lossOne\) (resp. \(\lossTwo\)) is  \(\LipOne\)-Lipschitz (resp. \(\LipTwo\)-Lipschitz) on \(\rClassRangeTwoName\);
\item[(iv)] \(\infr (\Ede - \Cons\Enu)\lOneR > 0\).
\end{description}
\label{assumption:main:est-error-bound}
\end{assumption}
For the boundedness and Lipschitz continuity in Assumption~\ref{assumption:main:est-error-bound} to hold for the loss functions involving a logarithm (UKL, BKL, PU), a technical assumption \(\rClassBoundMin > 0\) is sufficient. 

Then, we introduce Assumption~\ref{assumption:main:rademacher-bound-nn} \citep[Theorem~1]{GolowichSizeIndependent2019} to bound the complexity of the hypothesis class.
\begin{assumption}[Neural networks with bounded complexity]\label{assumption:main:rademacher-bound-nn} The probability densities \(\pnu\) and \(\pde\) have bounded supports: \(\sup_{x \in \mathcal{X}^{\mathrm{de}}} \|x\| < \infty\), and a hypothesis class \(\rClass\) consists of real-valued neural networks of depth \(L\) over the domain \(\mathcal{X}\),
where each parameter matrix \(W_j\) has the Frobenius norm at most \(\ParamBoundj \geq 0\) and \(1\)-Lipschitz activation functions \(\varphi_j\) that are positive-homogeneous (i.e., \(\varphi_j\) is applied element-wise and \(\varphi_j(\alpha t) = \alpha \varphi_j(t)\) for all \(\alpha \geq 0\)).
\end{assumption}
Under Assumption~\ref{assumption:main:rademacher-bound-nn}, Lemma~\ref{lem:main-text:rademacher-bound-golowich} in Appendix~\ref{appdx:rad_comp_bound} reveals \(\Radnu(\rClass) = \mathcal{O}(1/\sqrt{\nnu})\) and \(\Radde(\rClass) = \mathcal{O}(1/\sqrt{\nde})\). By combining these results with Theorem~\ref{thm:main-text:estimation-error-bound} in Appendix~\ref{section:appendix:estimation-error-bound}, we obtain the following theorem.
\begin{theorem}[Generalization error bound for D3RE]
\label{thm:est_error_bound}
Under Assumptions~\ref{assumption:main:est-error-bound} and \ref{assumption:main:rademacher-bound-nn}, for any \(\delta \in (0, 1)\), we have with probability at least \(1 - \delta\), 
\begin{equation*}\begin{aligned}
&\Risk(\hr) - \Risk(\rbest)\leq \frac{\kappa_1}{\sqrt{\nde}} + \frac{\kappa_2}{\sqrt{\nnu}}\\
&\ \ \ + 2(1+\Cons)\LossBound \exp\left(- \frac{2 \alpha^2}{(\LossBound^2/\nde) + (\Cons^2\LossBound^2 / \nnu)}\right)\\
&\ \ \ + \LossBound \sqrt{8 \left(\frac{1}{\nde}+ \frac{(1 + \Cons)^2}{\nnu}\right) \log\frac{1}{\delta}},
\end{aligned}\end{equation*}
where \(\kappa_1, \kappa_2\) are constants that depend on \(\Cons, \br, B_{p_{\mathrm{de}}}, B_{p_{\mathrm{nu}}}, L\), and \(\ParamBoundj\).
\label{thm:main:d3re}
\end{theorem}
See Remark~\ref{rem:explicit} in Appendix~\ref{section:appendix:estimation-error-bound} for more explicit forms of $\kappa_1$ and $\kappa_2$. This generalization error bound provides theoretical guarantees for various applications. For instance, by defining $f(t) = \log\left(1-t\right) + Ct\left(\log\left(t\right)-\log\left(1-t\right)\right)$ for $0 < t < 1$, $\Risk(r)$ becomes the risk functional of PU learning (see Appendix~\ref{appdx:exist_DRE} for the derivation). Then, the generalization error bound provides a classification error bound for PU learning, which is a special case of the binary classification problem \citet{KiryoPositiveunlabeled2017}. Note that the dependency of the bound on \(\br, B_{p_{\mathrm{de}}}, B_{p_{\mathrm{nu}}}, L\), and \(\ParamBoundj\) is typical for classification with Lipschitz functions \citep[Corollary~15]{bartlett2003rademacher}. The third term of the RHS corresponds to the bias caused by the use of non-negative correction.

\begin{figure*}[t]
\vspace{-0.cm}
\begin{center}
 \includegraphics[keepaspectratio,width=170mm]{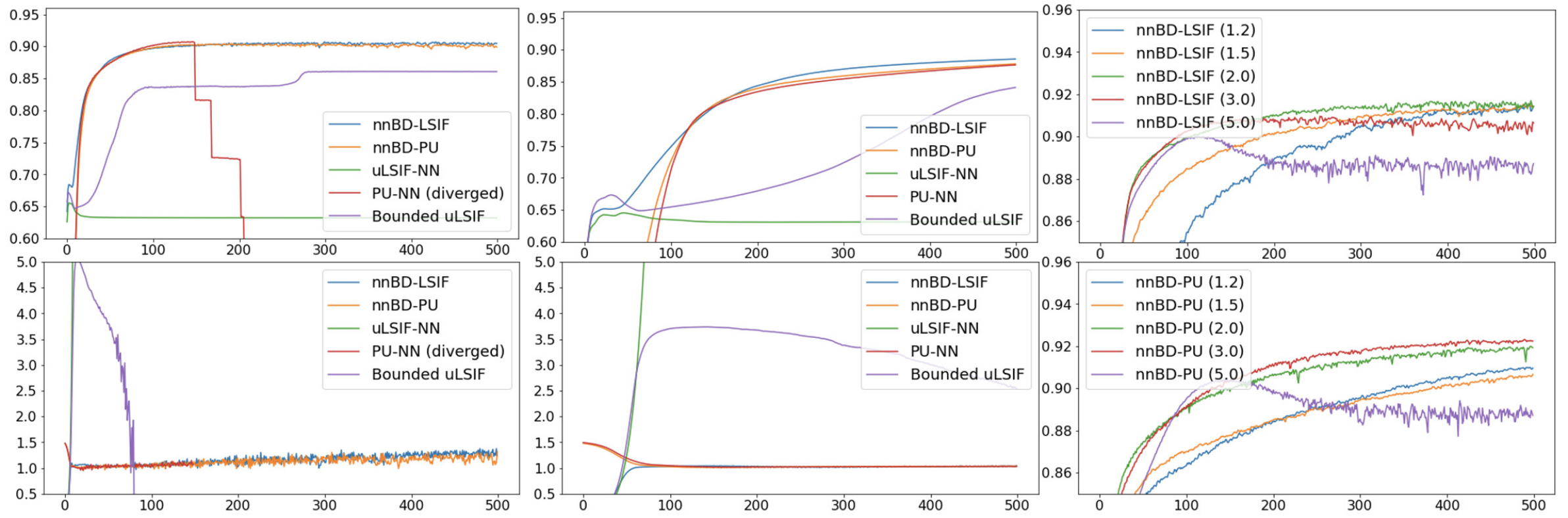}
\end{center}
\vspace{-0.35cm}
\caption{Results of Section~\ref{sec:num_exp}. The horizontal axes represent epochs. Left and center figures: The results under different learning rates, $1\times10^{-4}$ and $1\times10^{-5}$, respectively, where the vertical axes of the upper graphs show the AUROCs, and those of the lower graphs show $\hat{\mathbb{E}}_{\mathrm{de}}[\hat{r}(X_i)]$. Right figure: Results of sensitivity analyses, where the vertical axes show the AUROCs.}
\label{fig:numerical_exp1}
\end{figure*}

\subsection{Estimation error bound on \(\Ltwo\) norm}
\label{sec:appendix:strong-convexity}
Next, we derive an estimation error bound for $\hat{r}$ on the \(\Ltwo\) norm. We aim to derive the standard convergence rate of non-parametric regression; that is, under the appropriate conditions, the order of $\|\hr - \rstar\|_{\Ltwo(\pde)}$ is nearly $\Orderp{1/(\nde\land\nnu)}$ \citep{KanamoriStatistical2012a}. Note that unlike the generalization error bound of the BD, we require a stronger assumption on the loss function, namely, a strong convexity.
In Theorem~\ref{thm:est_error_bound}, for a multilayer perception with ReLU activation function (Definition~\ref{appdx:sparse-network-function-class}), we derive the convergence rate of the \(\Ltwo\) distance, which is the same rate as that of the nonparametric regression using the Gaussian kernel and the LSIF loss \citep{KanamoriStatistical2012a}. This result also corresponds to a faster convergence rate than Theorem~\ref{thm:est_error_bound}.
The proof is shown in Appendix~\ref{appdx:l2norm}. To complement this result, we empirically investigate the estimator error using an artificially generated dataset with the known true density ratio in Section~\ref{sec:regression_exp}. 
\begin{theorem}[\(L^2\) Convergence rate]
  \label{appdx:thm:convergence-rate}
   Assume \(f\) is \(\mu\)-strongly convex.
  Let \(\rClass\) be defined as in
  Definition~\ref{appdx:sparse-network-function-class} and assume \(\rstar = \frac{\pnu}{\pde} \in \rClass\).
  In addition, assume the same conditions as Theorem~\ref{thm:estimation-error-bound}.
  Then, for any \(0 < \gamma < 2\), as $\nde, \nnu \to \infty$,
  \begin{align*}
  \|\hr - \rstar\|_{\Ltwo(\pde)} \leq \Orderp{(\vmin{\nde}{\nnu})^{-1/(2+\gamma)}}.
  \end{align*}
\end{theorem}

This $L^2$ distance bound is useful for statistical inference. For example, double/debiased machine learning with cross-fitting proposed by \citet{chernozhukov2016} allows for semiparametric inference under estimators of nuisance parameters with appropriate convergence rates. By using cross-fitting, \citet{kato_uehara_2020} proposed causal inference under covariate shifts when the convergence rate of the density ratio satisfies an appropriate convergence rate. Further, it is expected to be applied to two-sample homogeneity using neural networks, as in  \citet{kanamori2010}.

\section{Experiments}
\label{sec:experiments}
We experimentally show how existing DRE methods fail and D3RE succeeds when using neural networks\footnote{A code of the conducted experiments is available at \url{https://github.com/MasaKat0/D3RE}.}. 

\begin{table*}[t]
\caption{Results of Section~\ref{sec:regression_exp}: MSEs and SDs of DRE using synthetic datasets. The lowest MSE methods are highlighted in bold.}
\label{tbl:synthetic}
\begin{center}
\scalebox{0.87}[0.87]{
\begin{tabular}{l|l|r|r|rrrrrrrrr}
\hline
{} & {} &    \multirow{2}{*}{uLSIF}   & \multirow{2}{*}{LSIF-NN} &  \multicolumn{9}{c}{D3RE (nnBD-LSIF)} \\
{} & {} &   &      &      $C=0.8$ &      $C=1$ &      $C=2$ &      $C=3$ &      $C=4$ &      $C=5$ &      $C=10$ &      $C=15$ &      $C=20$ \\
\hline
\multirow{2}{*}{$\mathrm{dim}=10$} & MSE &  2.378 &  1.272 &  1.750 &  1.695 &  1.191 &  0.964 &  0.873 &  \textbf{0.833} &  0.948 &  1.079 &   1.170 \\
{} & SD &  1.143 &  0.413 &  0.570 &  0.563 &  0.523 &  0.487 &  0.459 &  0.424 &  0.370 &  0.331 &   0.387 \\
\hline
\multirow{2}{*}{$\mathrm{dim}=20$} & MSE &  1.684 &  2.694 &  1.704 &  1.646 &  1.307 &  \textbf{1.272} &  1.337 &  1.444 &  2.066 &  2.697 &   3.098 \\
{} & SD &  0.372 &  0.409 &  0.380 &  0.368 &  0.328 &  0.297 &  0.283 &  0.288 &  0.285 &  0.346 &   0.374 \\
\hline
\multirow{2}{*}{$\mathrm{dim}=30$} & MSE &  1.786 &  3.724 &  1.811 &  1.747 &  \textbf{1.488} &  1.577 &  1.798 &  2.019 &  3.238 &  4.306 &   5.432 \\
{} & SD &  0.456 &  0.460 &  0.459 &  0.449 &  0.411 &  0.400 &  0.401 &  0.379 &  0.370 &  0.464 &   0.543 \\
\hline
\multirow{2}{*}{$\mathrm{dim}=50$} & MSE &  1.791 &  8.717 &  1.817 &  1.753 &  \textbf{1.609} &  1.818 &  2.194 &  2.614 &  4.848 &  6.955 &   8.798 \\
{} & SD &  0.562 &  1.518 &  0.571 &  0.555 &  0.513 &  0.503 &  0.484 &  0.465 &  0.488 &  0.597 &   0.672 \\
\hline
\multirow{2}{*}{$\mathrm{dim}=100$} & MSE &  1.723 &  4.849 &  1.748 &  1.693 &  \textbf{1.626} &  1.860 &  2.226 &  2.709 &  5.528 &  8.605 &  11.557 \\
{} & SD &  0.574 &  4.182 &  0.575 &  0.571 &  0.540 &  0.532 &  0.495 &  0.563 &  0.672 &  0.790 &   1.140 \\
\hline
\end{tabular}
}
\end{center}
\vspace{-0.3cm}
\end{table*}

\begin{table*}
\caption{Average AUROC curve (Mean) with the standard deviation (SD) over $5$ trials of anomaly detection
methods. For all datasets, each model was trained on a single class and tested against all other classes. The best result is in bold.}
\label{tbl:exp_anomaly_detection}
\begin{center}
\scalebox{0.87}[0.87]{
\begin{tabular}{l|rr|rr|rr|rr|rr|rr|rr}
\toprule
{\tt MNIST} & \multicolumn{2}{c|}{uLSIF-NN} & \multicolumn{2}{c|}{nnBD-LSIF} & \multicolumn{2}{c|}{nnBD-PU} & \multicolumn{2}{c|}{nnBD-LSIF} & \multicolumn{2}{c|}{nnBD-PU} & \multicolumn{2}{c|}{Deep SAD} & \multicolumn{2}{c}{GT}\\
Network & \multicolumn{2}{c|}{LeNet} & \multicolumn{2}{c|}{LeNet} & \multicolumn{2}{c|}{LeNet} & \multicolumn{2}{c|}{WRN} & \multicolumn{2}{c|}{WRN} & \multicolumn{2}{c|}{LeNet} & \multicolumn{2}{c}{WRN}\\
\hline
Inlier Class &             Mean &    SD &      Mean &      SD &      Mean &      SD  &      Mean &      SD &      Mean &      SD &      Mean &      SD &      Mean &      SD\\
\hline
0 &  0.999 &  0.000 &  0.997 &  0.000 &  0.999 &  0.000 &  {\bf 1.000} &  0.000 &  {\bf 1.000} &  0.000 &  0.592 &  0.051 & 0.963 &  0.002 \\
1 &  {\bf 1.000} &  0.000 &  0.999 &  0.000 &  {\bf 1.000} &  0.000 &  {\bf 1.000} &  0.000 &  {\bf 1.000} &  0.000 &  0.942 &  0.016 & 0.517 &  0.039 \\
2 &  0.997 &  0.001 &  0.994 &  0.000 &  0.997 &  0.001 &  {\bf 1.000} &  0.000 &  {\bf 1.000} &  0.001 &  0.447 &  0.027 & 0.992 &  0.001 \\
3 &  0.997 &  0.000 &  0.995 &  0.001 &  0.998 &  0.000 &  {\bf 1.000} &  0.000 &  {\bf 1.000} &  0.000 &  0.562 &  0.035 & 0.974 &  0.001 \\
4 &  0.998 &  0.000 &  0.997 &  0.001 &  0.999 &  0.000 &  {\bf 1.000} &  0.000 &  {\bf 1.000} &  0.000 &  0.646 &  0.015 & 0.989 &  0.001 \\
\bottomrule
\end{tabular}}
\scalebox{0.87}[0.87]{
\begin{tabular}{l|rr|rr|rr|rr|rr|rr|rr}
\toprule
{\tt CIFAR-10} & \multicolumn{2}{c|}{uLSIF-NN} & \multicolumn{2}{c|}{nnBD-LSIF} & \multicolumn{2}{c|}{nnBD-PU} & \multicolumn{2}{c|}{nnBD-LSIF} & \multicolumn{2}{c|}{nnBD-PU} & \multicolumn{2}{c|}{Deep SAD} & \multicolumn{2}{c}{GT}\\
Network & \multicolumn{2}{c|}{LeNet} & \multicolumn{2}{c|}{LeNet} & \multicolumn{2}{c|}{LeNet} & \multicolumn{2}{c|}{WRN} & \multicolumn{2}{c|}{WRN} & \multicolumn{2}{c|}{LeNet} & \multicolumn{2}{c}{WRN}\\
\hline
Inlier Class &             Mean &    SD &      Mean &      SD &      Mean &      SD  &      Mean &      SD &      Mean &      SD &      Mean &      SD &      Mean &      SD\\
\hline
plane &  0.745 &  0.056 &  0.934 &  0.002 &  {\bf 0.943} &  0.001 &  0.925 &  0.004 &  0.923 &  0.001 &  0.627 &  0.066 &  0.697 &  0.009 \\
car &  0.758 &  0.078 &  0.957 &  0.002 &  {\bf 0.968} &  0.001 &  0.965 &  0.002 &  0.960 &  0.001 &  0.606 &  0.018 &  0.962 &  0.003 \\
bird &  0.768 &  0.012 &  0.850 &  0.007 &  {\bf 0.878} &  0.004 &  0.844 &  0.004 &  0.858 &  0.004 &  0.404 &  0.006 &  0.752 &  0.002 \\
cat &  0.745 &  0.037 &  0.820 &  0.003 &  {\bf 0.856} &  0.002 &  0.810 &  0.009 &  0.841 &  0.002 &  0.517 &  0.018 &  0.727 &  0.014 \\
deer &  0.758 &  0.036 &  0.886 &  0.004 &  {\bf 0.909} &  0.002 &  0.864 &  0.008 &  0.872 &  0.002 &  0.704 &  0.052 &  0.863 &  0.014 \\
\bottomrule
\end{tabular}}
\scalebox{0.87}[0.87]{
\begin{tabular}{l|rr|rr|rr|rr|rr|rr|rr}
\toprule
{\tt FMNIST} & \multicolumn{2}{c|}{uLSIF-NN} & \multicolumn{2}{c|}{nnBD-LSIF} & \multicolumn{2}{c|}{nnBD-PU} & \multicolumn{2}{c|}{nnBD-LSIF} & \multicolumn{2}{c|}{nnBD-PU} & \multicolumn{2}{c|}{Deep SAD} & \multicolumn{2}{c}{GT}\\
Network & \multicolumn{2}{c|}{LeNet} & \multicolumn{2}{c|}{LeNet} & \multicolumn{2}{c|}{LeNet} & \multicolumn{2}{c|}{WRN} & \multicolumn{2}{c|}{WRN} & \multicolumn{2}{c|}{LeNet} & \multicolumn{2}{c}{WRN}\\
\hline
Inlier Class &             Mean &    SD &      Mean &      SD &      Mean &      SD  &      Mean &      SD &      Mean &      SD &      Mean &      SD &      Mean &      SD\\
\hline
T-shirt/top &  0.960 &  0.005 &  0.981 &  0.001 &  {\bf 0.985} &  0.000 &  0.984 &  0.001 &  0.982 &  0.000 &  0.558 &  0.031 &  0.890 &  0.007 \\
Trouser &  0.961 &  0.010 &  0.998 &  0.000 &  {\bf 1.000} &  0.000 &  0.998 &  0.000 &  0.998 &  0.000 &  0.758 &  0.022 &  0.974 &  0.004 \\
Pullover &  0.944 &  0.012 &  0.976 &  0.001 &  0.980 &  0.001 &  {\bf 0.983} &  0.002 &  0.972 &  0.001 &  0.617 &  0.046 &  0.902 &  0.005 \\
Dress &  0.973 &  0.006 &  0.986 &  0.001 &  {\bf 0.992} &  0.000 &  0.991 &  0.001 &  0.986 &  0.000 &  0.525 &  0.038 &  0.843 &  0.014 \\
Coat &  0.958 &  0.006 &  0.978 &  0.001 &  {\bf 0.983} &  0.000 &  0.981 &  0.002 &  0.974 &  0.000 &  0.627 &  0.029 &  0.885 &  0.003 \\
\bottomrule
\end{tabular}}
\end{center}
\vspace{-0.3cm}
\end{table*}

\subsection{Experiments with image data}
\label{sec:num_exp}
To investigate how D3RE prevents train-loss hacking from occurring, we consider a setting of inlier-based outlier detection. For binary labels $y\in\{-1, +1\}$, we consider training a classifier only from $p(X\mid y=+1)$ and $p(X)$ to find a positive data point in the test data sampled from $p(X)$. The goal is to maximize the area under the receiver operating characteristic (AUROC) curve, which is a criterion often used for anomaly detection, by estimating the density ratio $r^*(X)=p(X\mid y=+1)/p(X)$. We construct positive and negative datasets from the {\tt CIFAR-10} \citep{Krizhevsky2009LearningML} dataset with $10$ classes. The positive dataset comprises `airplane,' `automobile,' `ship,' and `truck'; the negative dataset comprises `bird,' `cat,' `deer,' `dog,' `frog,' and `horse.' We use $1,000$ positive samples generated from $p(X\mid y=+1)$ and $1,000$ unlabeled samples generated from $p(X)$ to train the models. Then, we calculate the AUROCs using $10,000$ test samples generated from $p(X)$. In this case, it is desirable to set $C < \frac{1}{2}$ because $\frac{p(X\mid y=+1)}{0.5p(X\mid y=+1) + 0.5p(X\mid y=-1)}=\frac{1}{0.5 + 0.5\frac{p(X\mid y=-1)}{p(X\mid y=+1)}}$. For demonstrative purposes, we use a basic CNN architecture from the PyTorch tutorial \citep{pytorch}. The details are shown in Appendix~\ref{sec:app:network2}.
The model is trained by the Adam optimizer \citep{kingma2014method} without a weight decay and with the parameters \((\beta_1, \beta_2, \epsilon)\) fixed at the default values of the implementation in PyTorch \citep{pytorch}, namely \((0.9, 0.999, 10^{-8})\).

First, we compare two of the proposed estimators, \(\widehat{\mathrm{nnBD}}_{\mathrm{PU}}\) (nnBD-PU) and \(\widehat{\mathrm{nnBD}}_{\mathrm{LSIF}}\) (nnBD-LSIF), with two existing estimators, \(\widehat{\mathrm{BD}}_{\mathrm{PU}}\) (PU-NN) and \(\widehat{\mathrm{BD}}_{\mathrm{LSIF}}\) (uLSIF-NN) with neural networks. We use the logistic loss for PULogLoss. In addition, we conduct an experiment with uLSIF-NN using a naively capped model $\tilde{r}(X) = \min\{r(X), 1/C\}$ (Bounded uLSIF). We fix the hyperparameter $C$ at $1/3$. We report the results for two learning rates, $1\times10^{-4}$ and $1\times10^{-5}$. We conduct $10$ trials, and report the average AUROCs. We also compute $\hat{\mathbb{E}}_{\mathrm{de}}[\hat{r}(X)]$, which should be close to $1$ when the density ratio is successfully estimated since $\int (p(x| y=+1)/p(X)) p(X) dx = 1$. These results are shown in the left and center figures in Figure~\ref{fig:numerical_exp1}. In all cases, the proposed estimators outperform the other methods. We consider that the instabilities of PU-NN and LISF-NN are caused by the unboundedness of the objective function (also see \citet{KiryoPositiveunlabeled2017}, where similar experimental results are reported). The results also demonstrate that naive capping (Bounded uLSIF) fails to prevent train-loss hacking from occurring and leads to suboptimal behavior. As discussed in Sections~\ref{subsec:diff_dre} and \ref{sec:motive}, naive capping is insufficient for this problem because an unreasonable model such that $r(X^{\mathrm{de}}_i) = 0$ and $r(X^{\mathrm{nu}}_j) = 1/C$ can still be a minimizer by decreasing one part of the empirical BD, e.g., $\frac{1}{2}\hat{\mathbb{E}}_{\mathrm{de}}\left[r^2(X)\right] - \frac{C}{2}\hat{\mathbb{E}}_{\mathrm{nu}} \left[r^2(X)\right] = 0-\frac{1}{2C}$. 

Next, we investigate the sensitivity of D3RE to the hyperparameter $C$. We choose $C$ from $\{1/1.2, 1/1.5, 1/2.0. 1/3.0, 1/5.0\}$. The other settings remain unchanged from the previous experiment, where the exact upper bound $\overline{R}$ is $2.0$. The results are shown on the right-hand side of Figure~\ref{fig:numerical_exp1}. While estimators with $1/C \simeq 2.0$ show a superior performance, the method is robust to the choice of $C$ to a certain extent. Additional experimental results are reported in Appendix~\ref{appdx:sec:exp_image_data}.

Note that this experimental setting is similar to that of PU learning \citep{elkan2008learning,KiryoPositiveunlabeled2017}. In PU learning experiments, we mainly consider a binary classification problem, and the class-prior $p(y=+1)$ is given; that is, the goals and the presence of the information are the differences between the experimental settings of inlier-based outlier detection and PU learning. In this paper, we successfully related PU learning methods to DRE. The class-prior in PU learning plays a similar role to the upper bound of $r^*$ in DRE.

\subsection{Experiments on $L^2$ error}
\label{sec:regression_exp}
We empirically investigate the $L^2$ error in the proposed D3RE. We compare our method with the uLSIF. For uLSIF \citep{JMLR:Kanamori+etal:2009}, we use an open-source implementation\footnote{\url{https://github.com/hoxo-m/densratio_py}.}, which uses a linear-in-parameter model with the Gaussian kernel \citep{KanamoriStatistical2012a}. For D3RE, we use nnBD-LSIF and $3$-layer perceptron with a ReLU activation function, where the number of the nodes in the middle layer is $100$. We conducted nnBD-LSIF for all $C\in\{0.8, 1, 2, 3, 4, 5, 10, 15, 20\}$. We also compare these methods with a naively implemented LSIF with a $3$-layer perceptron. Let the dimensions of the domain be $d$ and
\begin{align*}
p_{\mathrm{nu}}(X) = \mathcal{N}(X; \mu^{\mathrm{nu}}, I_d),\ p_{\mathrm{de}}(X) = \mathcal{N}(X; \mu^{\mathrm{de}}, I_d),
\end{align*}
where $\mathcal{N}(X; \mu, \Sigma)$ denotes the multivariate normal distribution with mean $\mu$ and $\Sigma$, $\mu^{\mathrm{nu}}$ and $\mu^{\mathrm{de}}$ are $d$-dimensional vectors $\mu^{\mathrm{nu}} = (1,0,\dots, 0)^\top$ and $\mu^{\mathrm{de}} = (0,0,\dots, 0)^\top$, and $I_d$ is a $d$-dimensional identity matrix. We fix the sample sizes at $\nnu = \nde = 1,000$ and estimate the density ratio using uLSIF, LSIF, and D3RE (nnBD-LSIF). To measure the performance, we use the mean squared error (MSE) and the standard deviation (SD) averaged over $50$ trials. Note that in this setting, we know the true density ratio $r^*$. The results are shown in Table~\ref{tbl:synthetic}. The proposed nnBD-LSIF method estimates the density ratio more accurately than the other methods with a lower MSE. In many cases of the results, nnBD-LSIF achieves the best performance at approximately $C=2$. This result implies that we do not need to know the exact $C$ to achieve a high level performance in D3RE.

\section{Inlier-based outlier detection}
\label{sec:application}
As an application of D3RE, we perform \emph{inlier-based outlier detection} experiments with benchmark datasets. In addition to {\tt CIFAR-10}, we use {\tt MNIST} \citep{lecun-gradientbased-learning-applied-1998} and {\tt fashion-MNIST} ({\tt FMNIST}) \citep{Xiao2017FashionMNISTAN}, both of which have $10$ classes. \citet{hido2008,Hido2011} applied the a direct DRE for inlier-based outlier detection; that is, finding outliers in a test set based on a training set consisting only of inliers by using the ratio of training and test data densities as an outlier score. \citet{Hyunha2015} and \citet{Abe2019} proposed using shallow neural networks with DRE to deal with this problem. In relation to the experimental setting of Section~\ref{sec:num_exp}, the problem setting can be seen as a transductive variant of PU learning \citep{kato2018learning}.

We follow the setting proposed by \citet{NIPS2018_8183}. There are ten classes in each dataset, {\tt MNIST}, {\tt CIFAR-10}, and {\tt FMNIST}. We use one class as an inlier class and treat all other classes as outliers. For example, in the case of {\tt CIFAR-10}, there are $5,000$ train data per class. On the other hand, there are $1,000$ test data for each class, which amounts to $1,000$ inlier samples and $9,000$ outlier samples. The AUROC is used as a metric to evaluate whether the outlier class can be detected in the outlier samples. We compare the proposed methods with the benchmark methods of deep semi-supervised anomaly detection (DeepSAD) \citep{ruff2020deep} and geometric transformation (GT) \citep{NIPS2018_8183}. The details of each method are shown in Appendix~\ref{appdx:detail_anomaly}. To make a fair comparison, we use LeNet and Wide ResNet for D3RE, which are the same neural network architectures as those used in \citet{NIPS2018_8183} and \citet{ruff2020deep}. The detailed structures are shown in Appendix~\ref{sec:app:network}. Owing to the space limitation, some of the experimental results with {\tt MNIST}, {\tt CIFAR-10} and {\tt FMNIST} is shown in Table~\ref{tbl:exp_anomaly_detection}. The full results are shown in Table~\ref{tbl:full_exp_anomaly_detection} in Appendix~\ref{appdx:sec:exp_in_out_detect}. In almost all cases, the average AUROCs of the proposed methods are better than those of the existing methods. The largest performance gain is seen in the CIFAR-10, where the mean AUROC is improved by $0.157$ on average between the uLSIF-NN and nnBD-LSIF.
Although GT and DeepSAD are designed for different problem setups, to the best of our knowledge, there are no other appropriate state-of-the-art alternatives to these algorithms under this setting.

In Appendix~\ref{appdx:other_appl}, we also introduce other applications such as covariate shift adaptation. 

\citet{Togashi2021} applied our proposed method to personalized ranking from implicit feedback in a recommender systems. For this task, there are two approaches, pointwise and pairwise, and the former of which is known to be computationally efficient, whereas the latter shows better accuracy than the former. In that study, they reformulated a pointwise approach using the density ratio and also added the essence of the pairwise approach.

\section{Conclusion}
We proposed a non-negative correction to the empirical BD for DRE. Using the prior knowledge of the upper bound of the density ratio, we can prevent train-loss hacking from occurring when using flexible models. In our theoretical analyses, we provided generalization error bounds for the proposed method. In our experiments, we empirically confirmed the effectiveness of our proposed approach.

\section*{Acknowledgments}
The authors would like to thank Hirono Okamoto for his constructive advice.\\
TT was supported by Masason Foundation.


\bibliographystyle{icml2021}
\bibliography{D3RE.bbl}

\onecolumn
\appendix

\section{Details of existing methods for DRE}
\label{appdx:exist_DRE}
In this section, we overview examples of DRE methods in the framework of the density ratio matching under BD. 

\paragraph{Least Squares Importance Fitting (LSIF).} 
LSIF minimizes the squared error between a density ratio model $r$ and the true density ratio $r^*$ defined as follows \citep{JMLR:Kanamori+etal:2009}:
\begin{align*}
R_{\mathrm{LSIF}}(r) = \mathbb{E}_{\mathrm{de}}[(r(X) - r^*(X))^{2}]= \mathbb{E}_{\mathrm{de}}[(r^*(X))^{2}] - 2\mathbb{E}_{\mathrm{nu}}[r(X)] + \mathbb{E}_{\mathrm{de}}[(r(X))^{2}].
\end{align*}
In the \emph{unconstrained LSIF} (uLSIF) \citep{JMLR:Kanamori+etal:2009}, we ignore the first term in the above equation and
estimate the density ratio by the following minimization problem:
\begin{align}
\label{ulsif}
\hr = \argmin_{r\in\mathcal{H}} \left[\frac{1}{2} \hat{\mathbb{E}}_{\mathrm{de}}[(r(X))^{2}] - \hat{\mathbb{E}}_{\mathrm{nu}}[r(X)] + \Reg(r)\right],
\end{align}
where $\Reg$ is a regularization term. This empirical risk minimization is equal to minimizing the empirical BD defined in \eqref{sample_br} with $f(t)=(t-1)^2/2$.

\paragraph{Unnormalized Kullback–Leibler (UKL) divergence and KL Importance Estimation Procedure (KLIEP).}
The KL importance estimation procedure (KLIEP) is derived from the unnormalized Kullback–Leibler (UKL) divergence objective \citep{sugiyama2008,nguyen2010,tsuboi2009,yamada2009,yamada2010}, which uses $f(t) = t\log (t) - t$. Ignoring the terms which are irrelevant for the optimization, we obtain the 
unnormalized Kullback–Leibler (UKL) divergence objective \citep{nguyen2010,Sugiyama:2012:DRE:2181148} as
\begin{align*}
&\mathrm{BD}_{\mathrm{UKL}}(r)= \mathbb{E}_{\mathrm{de}}\left[r(X)\right] - \mathbb{E}_{\mathrm{nu}} \left[\log\big(r(X)\big)\right].
\end{align*}
Directly minimizing UKL is proposed by \citet{nguyen2010}. The KLIEP also solves the same problem with further imposing a constraint that the ratio model $r(X)$ is non-negative for all $X$ and is normalized as
\begin{align*}
\hat{\mathbb{E}}_{\mathrm{de}}\left[r(X)\right] = 1.
\end{align*}
Then, following is the optimization criterion of KLIEP \citep{sugiyama2008}:
\begin{align*}
&\max_r  \hat{\mathbb{E}}_{\mathrm{nu}} \left[\log\big(r(X)\big)\right]\\
&\mathrm{s.t.}\ \hat{\mathbb{E}}_{\mathrm{de}}\left[r(X)\right] = 1\ \mathrm{and}\ r(X) \geq 0\ \mathrm{for}\ \mathrm{all}\ X.
\end{align*}

\paragraph{Logistic Regression (LR).} By using $f(t)=\log (t) - (1 + t)\log(1 + t)$, we obtain the following BD called the binary Kullback–Leibler (BKL) divergence:
\begin{align*}
&\mathrm{BD}_{\mathrm{BKL}}(r)= - \mathbb{E}_{\mathrm{de}}\left[\log\left(\frac{1}{1+r(X)}\right)\right] - \mathbb{E}_{\mathrm{nu}} \left[\log\left(\frac{r(X)}{1+r(X)}\right)\right].
\end{align*}
This BD is derived from a formulation based on the logistic regression \citep{hastie_09_elements-of_statistical-learning,sugiyama2011bregman}.

\paragraph{PU Learning with the log loss.} 
Consider a binary classification problem and
let $X$ and $y \in \{\pm 1\}$ be the feature and the label of a sample, respectively.
In PU learning, the goal is to train a classifier only using positive data sampled from $p(X\mid y=+1)$, and unlabeled data sampled from $p(X)$ in binary classification \citep{elkan2008learning}.
More precisely, this problem setting of PU learning is called the \emph{case-control scenario} \citep{elkan2008learning,NIPS2016_6354}. Let $\mathcal{G}$ be the set of measurable functions from $\mathcal{X}$ to $[\epsilon, 1-\epsilon]$, where $\epsilon \in (0, 1/2)$ is a small positive value. For a loss function $\ell: \mathbb{R}\times \{\pm1\}\to \Rpositive$, \citet{ICML:duPlessis+etal:2015} showed that the classification risk of $g \in \mathcal{G}$ in the PU problem setting can be expressed as
\begin{align}
\label{true}
&R_{\mathrm{PU}}(g)=\pi \int \Big(\ell(g(X), +1)  - \ell(g(X), -1)\Big) p(X\mid y=+1)dX + \int \ell(g(X), -1)] p(X)dX.
\end{align}
According to \citet{kato2018learning}, we can derive the following risk for DRE from the risk for PU learning \eqref{true} as follows:
\begin{align*}
&\mathrm{BD}_{\mathrm{PU}}(g)= \frac{1}{\overline{R}} \mathbb{E}_{\mathrm{nu}} \left[-\log\left(g(X)\right) + \log\left(1-g(X)\right) \right] - \mathbb{E}_{\mathrm{de}}\left[ \log\left(1-g(X)\right)\right],
\end{align*}
and
\citet{kato2018learning} showed that $g^* = \argmin_{g\in\mathcal{G}}\mathrm{BD}_{\mathrm{PU}}(g)$ satisfies the following:
\begin{proposition}
It holds almost everywhere that
\begin{align*}
g^*(X) = 
\begin{cases}
1-\varepsilon &(X\notin \DTwo),\\
C \frac{p_{\mathrm{nu}}(X)}{p_{\mathrm{de}}(X)} &(X\in \DOne\cap \DTwo),\\
\varepsilon &(X\notin \DOne),
\end{cases}
\end{align*}
where $C=\frac{1}{\overline{R}}$, $\DOne = \{X\mid C p_{\mathrm{nu}}(X) \geq \epsilon p_{\mathrm{de}}(X) \}$, and $\DTwo = \{X|C p_{\mathrm{nu}}(X) \leq (1-\epsilon)p_{\mathrm{de}}(X) \}$.
\end{proposition}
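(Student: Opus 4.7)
The plan is to prove the proposition by pointwise minimization of the integrand of $\mathrm{BR}_{\mathrm{PU}}$, then clipping the unconstrained minimizer to the feasible interval $[\epsilon, 1-\epsilon]$.

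First, I would rewrite the population risk as
\[
\mathrm{BR}_{\mathrm{PU}}(g) = \int_{\mathcal{X}} \Psi_{\bm{x}}(g(\bm{x}))\, d\bm{x},
\]
where for each fixed $\bm{x}$ and $t \in (0,1)$,
\[
\Psi_{\bm{x}}(t) := C\,p_{\mathrm{nu}}(\bm{x})\bigl(-\log t + \log(1-t)\bigr) - p_{\mathrm{de}}(\bm{x})\log(1-t), \qquad C := 1/\overline{R}.
\]
Since $\mathcal{G}$ consists exactly of measurable functions satisfying the pointwise constraint $g(\bm{x}) \in [\epsilon, 1-\epsilon]$, the infimum of $\mathrm{BR}_{\mathrm{PU}}$ is attained by choosing, for almost every $\bm{x}$, a minimizer of $\Psi_{\bm{x}}$ over $[\epsilon, 1-\epsilon]$; measurability of the resulting selection is automatic since it is a continuous function of the ratio $p_{\mathrm{nu}}(\bm{x})/p_{\mathrm{de}}(\bm{x})$.

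Second, I would locate the unconstrained stationary point of $\Psi_{\bm{x}}$. Direct differentiation yields
\[
\Psi_{\bm{x}}'(t) = -\frac{C\,p_{\mathrm{nu}}(\bm{x})}{t} + \frac{p_{\mathrm{de}}(\bm{x}) - C\,p_{\mathrm{nu}}(\bm{x})}{1-t},
\]
and solving $\Psi_{\bm{x}}'(t) = 0$ gives $t^\star(\bm{x}) = C\,p_{\mathrm{nu}}(\bm{x})/p_{\mathrm{de}}(\bm{x})$. I would then verify convexity of $\Psi_{\bm{x}}$ on $(0,1)$ via
\[
\Psi_{\bm{x}}''(t) = \frac{C\,p_{\mathrm{nu}}(\bm{x})}{t^2} + \frac{p_{\mathrm{de}}(\bm{x}) - C\,p_{\mathrm{nu}}(\bm{x})}{(1-t)^2}.
\]
The critical observation is that Assumption~1 together with $C = 1/\overline{R}$ gives $C\,p_{\mathrm{nu}}(\bm{x}) \leq p_{\mathrm{de}}(\bm{x})$ almost everywhere, so both terms are non-negative and $\Psi_{\bm{x}}$ is convex. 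Thus $t^\star(\bm{x})$ is the unique global minimizer over $(0,1)$.

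Finally, convexity implies that the minimizer of $\Psi_{\bm{x}}$ over $[\epsilon, 1-\epsilon]$ is obtained by clipping $t^\star(\bm{x})$ to that interval. Translating the three clipping cases via the definitions of $\DOne$ and $\DTwo$: $t^\star(\bm{x}) \geq \epsilon$ iff $\bm{x} \in \DOne$, and $t^\star(\bm{x}) \leq 1-\epsilon$ iff $\bm{x} \in \DTwo$. Hence $g^\ast(\bm{x}) = \epsilon$ on $\{\bm{x} \notin \DOne\}$, $g^\ast(\bm{x}) = 1-\epsilon$ on $\{\bm{x} \notin \DTwo\}$, and $g^\ast(\bm{x}) = C\,p_{\mathrm{nu}}(\bm{x})/p_{\mathrm{de}}(\bm{x})$ on $\DOne \cap \DTwo$, which is exactly the stated form (the boundary null sets are absorbed into the ``almost everywhere'' qualifier). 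The main subtlety is ensuring convexity of $\Psi_{\bm{x}}$, which relies entirely on the matched choice $C = 1/\overline{R}$ together with the upper-bound assumption $p_{\mathrm{nu}}/p_{\mathrm{de}} \leq \overline{R}$; without this matching, the coefficient $p_{\mathrm{de}}(\bm{x}) - C\,p_{\mathrm{nu}}(\bm{x})$ could be negative and the straightforward clipping argument would fail.
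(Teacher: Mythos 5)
Your proof is correct. The paper itself states this proposition without proof, importing it from \citet{kato2018learning}, and your argument --- pointwise minimization of the integrand $\Psi_{\bm{x}}$, verifying convexity via $p_{\mathrm{de}} - C\,p_{\mathrm{nu}} \geq 0$ (which is exactly where $C = 1/\overline{R}$ and the bound $r^* \leq \overline{R}$ enter), and then clipping the stationary point $t^\star = C\,p_{\mathrm{nu}}/p_{\mathrm{de}}$ to $[\epsilon, 1-\epsilon]$ --- is the standard derivation of this result and is sound; the only cosmetic point is that where $p_{\mathrm{nu}}(\bm{x}) = 0$ the first term of $\Psi''_{\bm{x}}$ vanishes, but there $\Psi_{\bm{x}}$ is still monotone increasing on $(0,1)$ and the minimizer is $\epsilon$, consistent with $\bm{x} \notin \DOne$.
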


Using this result, we define the empirical version of $\mathrm{BD}_{\mathrm{PU}}(g)$ as follows:
\begin{align*}
&\widehat{\mathrm{BD}}_{\mathrm{PU}}(r^*\|r):=C\hat{\mathbb{E}}_{\mathrm{nu}} \left[-\log\big(r(X_i)\big)+ \log\big(1-r(X_j)\big)\right] -\hat{\mathbb{E}}_{\mathrm{de}} \left[\log\big(1-r(X_i)\big)\right].\nonumber
\end{align*}

To see that this is also a BD minimization method, define $f(t)$ as
\begin{align*}
f(t)=C\log\left(1-t\right) + Ct\left(\log\left(t\right)-\log\left(1-t\right)\right).
\end{align*}
Then, we have 
\begin{align*}
\partial f(t) = -\frac{C}{1-t} + C(\log(t)-\log(1-t)) + Ct\left(\frac{1}{t} + \frac{1}{1-t}\right).
\end{align*}
Therefore, we have
\begin{align*}
&\mathrm{BD}_f(r):=\mathbb{E}_{\mathrm{de}}\Big[\partial f\big(r(X_i)\big)r(X_i) - f\big(r(X_i)\big)\Big] - \mathbb{E}_{\mathrm{nu}}\Big[\partial f\big(r(X_j)\big)\Big]\\
&= \mathbb{E}_{\mathrm{de}}\Big[-\frac{Cr(X_i)}{1-r(X_i)} + Cr(X_i)(\log(r(X_i))-\log(1-r(X_i))) + Cr^2(X_i)\left(\frac{1}{r(X_i)} + \frac{1}{1-r(X_i)}\right)\Big]\\
&\ \ \  -\mathbb{E}_{\mathrm{de}}\Big[\log\left(1-r(X_i)\right) + Cr(X_i)\left(\log\left(r(X_i)\right)-\log\left(1-r(X_i)\right)\right)\Big]\\
&\ \ \ - \mathbb{E}_{\mathrm{nu}}\Big[-\frac{C}{1-r(X_i)} + C(\log(r(X_i))-\log(1-r(X_i))) + Cr(X_i)\left(\frac{1}{r(X_i)} + \frac{1}{1-r(X_i)}\right)\Big]\\
&= \mathbb{E}_{\mathrm{de}}\Big[-\frac{Cr(X_i)}{1-r(X_i)} + Cr(X_i)(\log(r(X_i))-\log(1-r(X_i))) + \frac{Cr(X_i)}{1-r(X_i)}\Big]\\
&\ \ \  -\mathbb{E}_{\mathrm{de}}\Big[\log\left(1-r(X_i)\right) + Cr(X_i)\left(\log\left(r(X_i)\right)-\log\left(1-r(X_i)\right)\right)\Big]\\
&\ \ \ - \mathbb{E}_{\mathrm{nu}}\Big[-\frac{C}{1-r(X_i)} + C(\log(r(X_i))-\log(1-r(X_i))) + \frac{C}{1-r(X_i)}\Big]\\
&= \mathbb{E}_{\mathrm{de}}\Big[\log\left(1-r(X_i)\right)\Big]- C\mathbb{E}_{\mathrm{nu}}\Big[ \log(r(X_i))-\log(1-r(X_i))\Big].
\end{align*}

\begin{remark}[DRE and PU learning]
\citet{pmlr-v48-menon16} showed that minimizing a proper CPE loss is equivalent to minimizing a BD to the true density ratio, and demonstrated the viability of using existing losses from one problem for the other for CPE and DRE.
\citet{kato2018learning} pointed out the relation between the PU learning and density ratio estimation and leveraged it to solve a sample selection bias problem in PU learning.
In this paper, we introduced the BD with $f(t)=\log\left(1-Ct\right) + Ct\left(\log\left(Ct\right)-\log\left(1-Ct\right)\right)$, inspired by the objective function of PU learning with the log loss.
In the terminology of \citet{pmlr-v48-menon16}, this $f$ results in a DRE objective without a \emph{link function}. In other words, it yields a direct DRE method.
\end{remark}

\section{Examples of \texorpdfstring{$\tilde f$}{f}}
\label{appdx:example_f}
Here, we show the examples of $\tilde f$ such that $\partial f (t) = C \big(\partial f(t)t - f(t)\big)+\tilde f(t)$, where $\tilde f(t)$ is bounded from above, and $\partial f(t)t - f(t) + A$ is non-negative.

First, we consider $f(t)=(t-1)^2/2$, which results in the LSIF objective. Because $\partial f (t) = t-1$, we have
\begin{align*}
&t-1 = C\big((t-1)t - (t-1)^2/2\big) + \tilde f(t)\\
&\Leftrightarrow \tilde f(t) = -C\big((t-1)t - (t-1)^2/2\big) + t-1 = -\frac{C}{2}t^2 + \frac{C}{2} + t - 1.
\end{align*}
The function is a concave quadratic function, therefore it is upper bounded.

Second, we consider $f(t)=t\log(t)-t$, which results in the UKL or KLIEP objective. Because $\partial f (t) = \log(t)$, we have
\begin{align*}
&\log(t) = C\big(\log(t)t - t\log(t)+t\big) + \tilde f(t)\\
&\Leftrightarrow \tilde f(t) = - tC + \log(t).
\end{align*}
We can easily confirm that the function is upper bounded by taking the derivative and finding that $t = 1/C$ gives the maximum. 

Third, we consider $f(t)=t\log(t) - (1+t)\log (1+t)$, which is used for DRE based on LR or BKL. Because $\partial f (t) = \log(t) - \log (1+t)$, we have
\begin{align*}
&\log(t) - \log (1+t) = C\big((\log(t) - \log (1+t))t - t\log(t) + (1+t)\log (1+t)\big) + \tilde f(t)\\
&\Leftrightarrow \tilde f(t) = - C\big(\log (1+t)\big) +  \log(t) - \log (1+t) = \log \left(\frac{C}{1+t}\right) + \log \left(\frac{t}{1+t}\right).
\end{align*}
We can easily confirm that the function is upper bounded as the terms involving $t$ always add up to be negative.

Fourth, we consider DRE based on PULog. By setting $f(t)=\log\left(1-t\right) + Ct\left(\log\left(t\right)-\log\left(1-t\right)\right)$, we can obtain the same risk functional introduced in \citet{KiryoPositiveunlabeled2017}.

\section{Train-loss hacking problem in PU classification}
\label{section: train loss hacking in PU}
Here, we introduce the train-loss hacking discussed in the PU learning literature \cite{KiryoPositiveunlabeled2017}. In a standard binary classification problem, we train a classifier $\psi$ by minimizing the following empirical risk:
\begin{align}
\label{eq:standard_er}
    \frac{1}{n}\sum^n_{i=1}\mathbbm{1}[y_i=+1]\ell(\psi(X_i)) + \frac{1}{n}\sum^n_{i=1}\mathbbm{1}[y_i=-1]\ell(-\psi(X_i)),
\end{align}
where $y_i\in\{\pm 1\}$ is a binary label, $X_i$ is a feature, and $\ell$ is a loss function. On the other hand, in PU learning formulated by \citet{ICML:duPlessis+etal:2015}, because we only have positive data $\{(y'_i=+1, X'_i)\}^{n'}_{i=1}$ and unlabeled data $\{(\bm{x''}_j)\}^{n''}_{j=1}$, we minimize the following alternative empirical risk:
\begin{align}
\label{eq:pu_er}
    \frac{\pi}{n'}\sum^{n'}_{i=1}\ell(\psi(X'_i) \annot{- \frac{\pi}{n'}\sum^{n'}_{i=1}\ell(-\psi(X'_i))}{Cause of train-loss hacking.} +  \frac{1}{n''}\sum^{n''}_{j=1}\ell(-\psi(X''_j)),
\end{align}
where $\pi$ is a hyperparameter representing $p(y=+1)$. Note that the empirical risk \eqref{eq:pu_er} is unbiased to the population binary classification risk \eqref{eq:standard_er} \citep{ICML:duPlessis+etal:2015}. While the the empirical risk \eqref{eq:standard_er} of the standard binary classification is lower bounded under an appropriate choice of $\ell$, the empirical risk \eqref{eq:pu_er} of PU learning proposed by \citet{ICML:duPlessis+etal:2015} is not lower bounded owing to the existence of the second term. Therefore, if a model is sufficiently flexible, we can significantly minimize the empirical risk only by minimizing the second term $-\frac{\pi}{n'}\sum^{n'}_{i=1}\ell(-\psi(X'_i))$ without increasing the other terms. \citet{KiryoPositiveunlabeled2017} proposed non-negative risk correction for avoiding this problem when using neural networks.

\section{Network structure used in Sections~\ref{sec:num_exp} and \ref{sec:application}}
\label{sec:app:network}
We explain the structures of neural networks used in the experiments. 

\subsection{Network structure used in Sections~\ref{sec:num_exp}}
In Section~\ref{sec:num_exp}, we used {\tt CIFAR-10} datasets. The model was a convolutional net  \citep{DB15a}: $(32\times32\times3)\mathchar`-C(3\times6, 3)\mathchar`-C(3\times16, 3)\mathchar`-128\mathchar`-84\mathchar`-1$, where the input is a $32\times32$ RGB image, $C(3\times6,3)$ indicates that $3$ channels of $3\times6$ convolutions followed by ReLU is used. This structure has been adopted from the tutorial of \citet{pytorch}.

\subsection{Network structure used in Sections~\ref{sec:application}}
\label{sec:app:network2}
\paragraph{Inlier-based Outlier Detection.} We used the same LeNet-type CNNs proposed in \citet{ruff2020deep}. In the CNNs, each convolutional module consists of a convolutional layer followed by leaky ReLU activations with leakiness $\alpha=0.1$ and $(2\times 2)$-max-pooling.
For MNIST, we employ a CNN with two modules: $(32\times32\times3)\mathchar`-C(3\times32, 5)\mathchar`-C(32\times64, 5)\mathchar`-C(64\times128, 5)\mathchar`-1$.
For {\tt CIFAR-10} we employ the following architecture: $(32\times32\times1)\mathchar`-C(1\times8, 5)\mathchar`-C(8\times4, 5)\mathchar`-1$ with a batch normalization \citep{43442} after each convolutional layer.

The WRN architecture was proposed in \citet{Zagoruyko2016} and it is also used in \citet{NIPS2018_8183}. This structure improved the performance of image recognition by decreasing the depth and increasing the width of the residual networks \citep{He2015}. We omit the detailed description of the structure here.

\paragraph{Covariate Shift Adaptation.} We used the $5$-layer perceptron with ReLU activations. The structure is $10000\mathchar`-1000\mathchar`-1000\mathchar`-1000\mathchar`-1000\mathchar`-1$.

\begin{figure}[t]
\begin{center}
 \includegraphics[width=137mm]{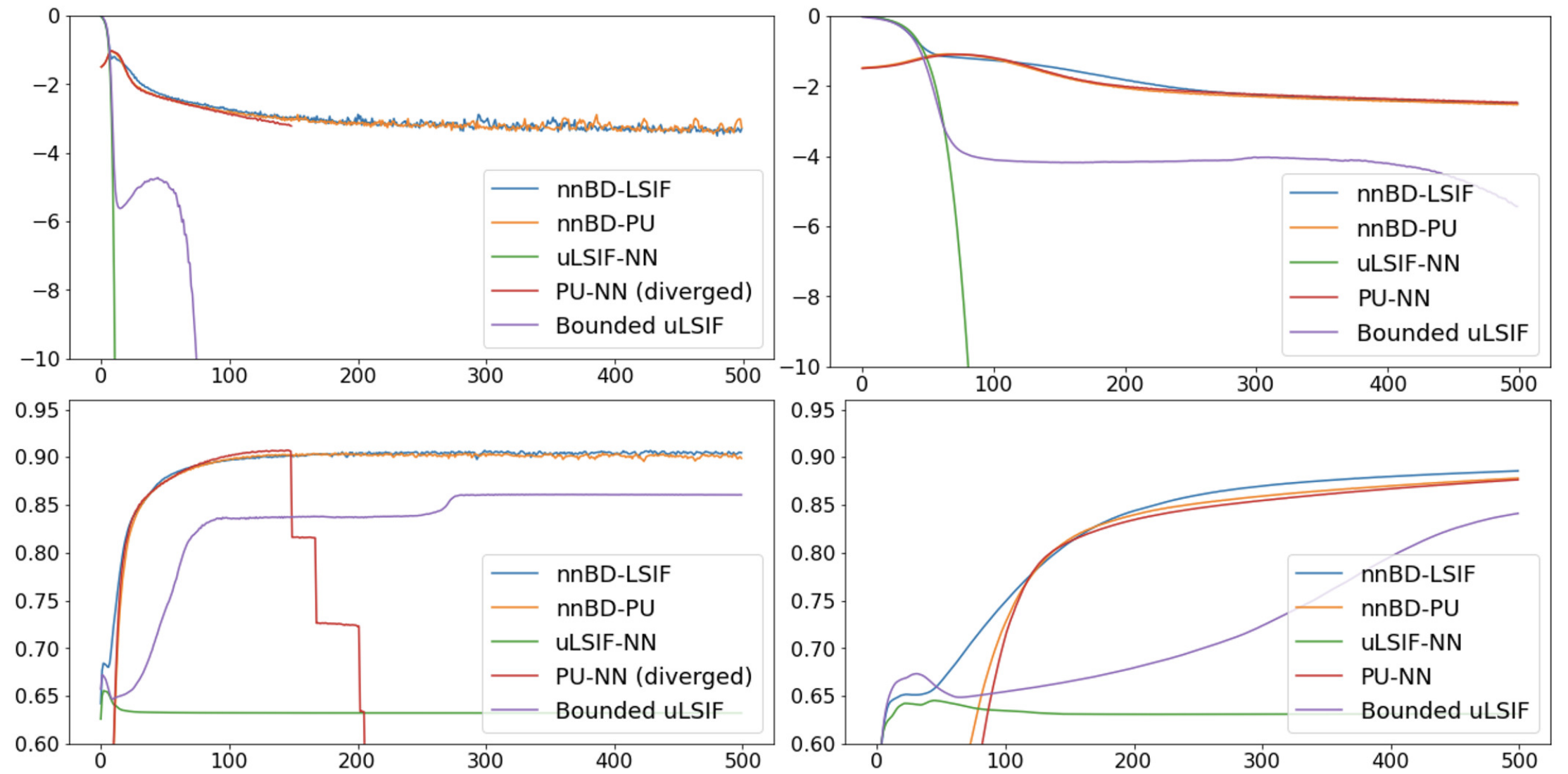}
\end{center}
\caption{The learning curves of the experiments in Section~\ref{sec:num_exp}. The horizontal axis is epoch. The vertical axes of the top figures indicate the training losses. The vertical axes of the bottom figures show the AUROC for the test data. The bottom figures are identical to the ones displayed in Section~\ref{sec:num_exp}.}
\label{fig:numerical_exp1_supp}
\end{figure} 

\begin{figure}[t]
\begin{center}
 \includegraphics[width=75mm]{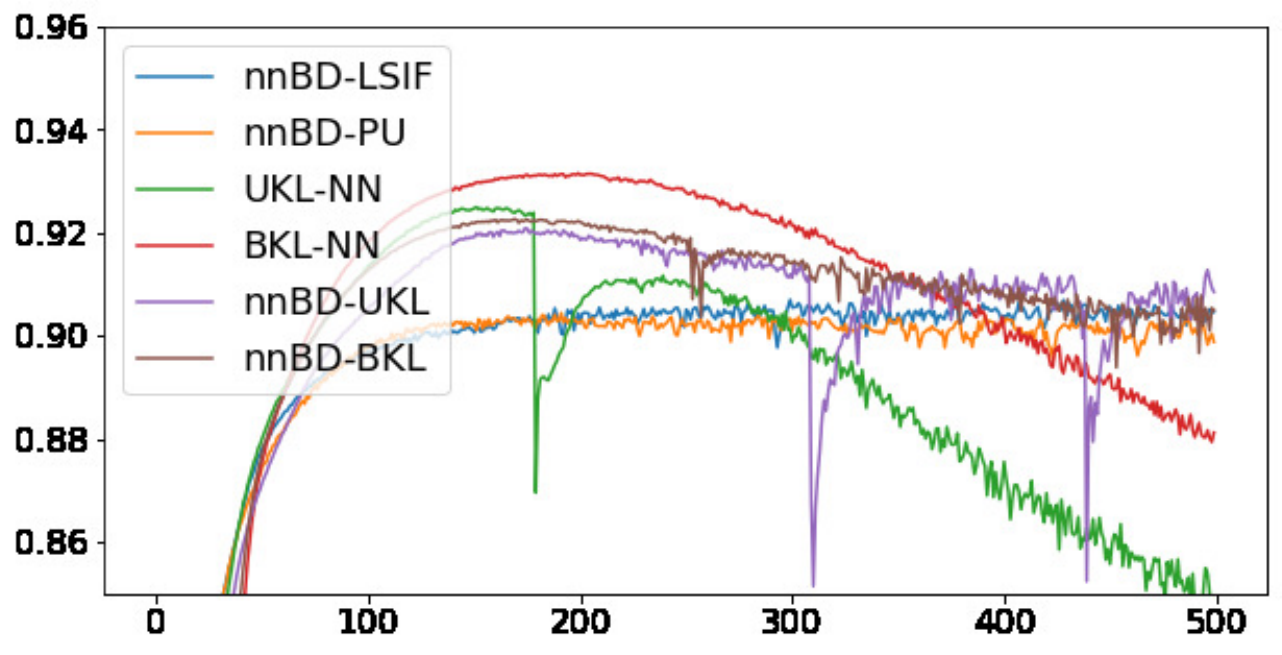}
\end{center}
\caption{The results of Section~\ref{sec:exp_val_estimators}. The horizontal axis is epoch, and the vertical axis is AUROC.}
\label{fig:numerical_exp2}
\end{figure} 

\section{Existing methods for anomaly detection}
\label{appdx:detail_anomaly}
This section introduces the existing methods for anomaly detection. DeepSAD is a method for semi-supervised anomaly detection, which tries to take advantage of labeled anomalies \citep{ruff2020deep}. GT proposed by \citet{NIPS2018_8183} trains neural networks based on a self-labeled dataset by performing $72$ geometric transformations. The anomaly score based on GT is calculated based on the Dirichlet distribution obtained by maximum likelihood estimation using the softmax output from the trained network.

In the problem setting of the DeepSAD, we have access to a small pool of labeled samples, e.g. a subset verified by some domain expert as being normal or anomalous. In the experimental results shown in \citet{ruff2020deep} indicate that, when we can use such samples, the DeepSAD outperforms the other methods. However, in our experimental results, such samples are not assumed to be available, hence the method does not perform well. The problem setting of \citet{ruff2020deep} and ours are both termed \emph{semi-supervised learning} in anomaly detection, but the two settings are different.

\section{Details of experiments}
The details of experiments are shown in this section. The description of the data is as follows:
\begin{description}
\item[{\tt MNIST}:] The {\tt MNIST} database is one of the most popular benchmark datasets for image classification, which consists of $28\times28$ pixel handwritten digits from $0$ to $9$ with $60,000$ train samples and $10,000$ test samples \citep{lecun-gradientbased-learning-applied-1998}. See \url{http://yann.lecun.com/exdb/mnist/}.
\item[{\tt CIFAR-10}:] The {\tt CIFAR-10} dataset consists of $60,000$ color images of size $32\times32$ from $10$ classes, each having $6000$. There are $50,000$ training images and $10,000$ test images \citep{Krizhevsky2012}. See \url{ https://www.cs.toronto.edu/~kriz/cifar.html}.
\item[{\tt fashion-MNIST}:]  The {\tt fashion-MNIST} dataset consists of $70,000$ grayscale images of size $28\times28$ from $10$ classes. There are $60,000$ training images and $10,000$ test images \citep{Xiao2017FashionMNISTAN}. See \url{https://github.com/zalandoresearch/fashion-mnist}.
\item[{\tt Amazon Review Dataset}:] \citet{blitzer-etal-2007-biographies} published the text data of Amazon review. The data originally consists of a rating ($0$-$5$ stars) for four different genres of products in the electronic commerce site Amazon.com: books, DVDs, electronics, and kitchen appliances. \citet{blitzer-etal-2007-biographies} also released the pre-processed and balanced data of the original data. The pre-processed data consists of text data with four labels $1$, $2$, $4$, and $5$. We map the text data into $10,000$ dimensional data by the TF-IDF mapping with that vocabulary size. In the experiment, for the pre-processed data, we solve the regression problem where the text data are the inputs and the ratings $1$, $2$, $4$, and $5$ are the outputs. When evaluating the performance, following \citet{pmlr-v48-menon16}, we calculate PD (=1-AUROUC) by regarding $4$ and $5$ ratings as positive labels and $1$ and $2$ ratings as negative labels.
\end{description}

\subsection{Experiments with image data}
\label{appdx:sec:exp_image_data}
We show the additional results of Section~\ref{sec:num_exp}. In  Figure~\ref{fig:numerical_exp1_supp}, we show the training loss of LSIF-based methods to demonstrate the train-loss hacking phenomenon caused by the objective function without a lower bound. In Figure~\ref{fig:numerical_exp1_supp}, even though the training loss of uLSIF-NN and that of Bounded uLSIF decrease more rapidly than that of nnBD-LSIF, the test AUROC score (the higher the better) either drops or fails to increase. These graphs are the manifestations of the severe train-loss hacking in DRE without our proposed device.

\subsubsection{Comparison with various estimators using nnBD divergence}
\label{sec:exp_val_estimators}
Let UKL-NN and BKL-NN be DRE method with the UKL and BKL losses with neural networks without non-negative correction. Finally, we examine the performances of nnBD-LSIF, nnBD-PU, UKL-NN, BKL-NN, nnBD-UKL, and nnBD-BKL. The learning rate was $1\times10^{-4}$, and the other settings were identical to those in the previous experiments. These results are shown in Figure~\ref{fig:numerical_exp2}. UKL-NN and BKL-NN also suffer from train-loss hacking although BKL loss seems to be more robust against the train-loss hacking  than the other loss functions. Although nnBD-UKL and nnBD-BKL show better performance in earlier epochs, nnBD-LSIF and nnBD-PU appear to be more stable. 

\subsubsection{Results without gradient ascent}
\label{appdx:res_wo_ga}
We also show the experimental results without the gradient ascent heuristic. Figure~\ref{fig:numerical_exp1_2} corresponds to the Figure~\ref{fig:numerical_exp1} without the gradient ascent heuristic. Figure~\ref{fig:numerical_exp1_supp_2} corresponds to the Figure~\ref{fig:numerical_exp1_supp} without the gradient ascent heuristic. Figure~\ref{fig:numerical_exp2_2} corresponds to the Figure~\ref{fig:numerical_exp2} without the gradient ascent heuristic. As shown in these experiments, although the gradient ascent/descent heuristic improve the performance, there is no significant difference between empirical performance with and without the heuristic. Therefore, we recommend practitioners to use the gradient ascent/descent heuristic, but if the reader concerns the theoretical guarantee, they can use the plain gradient descent algorithm; that is, naively minimize the proposed empirical nnBD risk.

\begin{figure}[t]
\vspace{-0.cm}
\begin{center}
 \includegraphics[keepaspectratio,width=110mm]{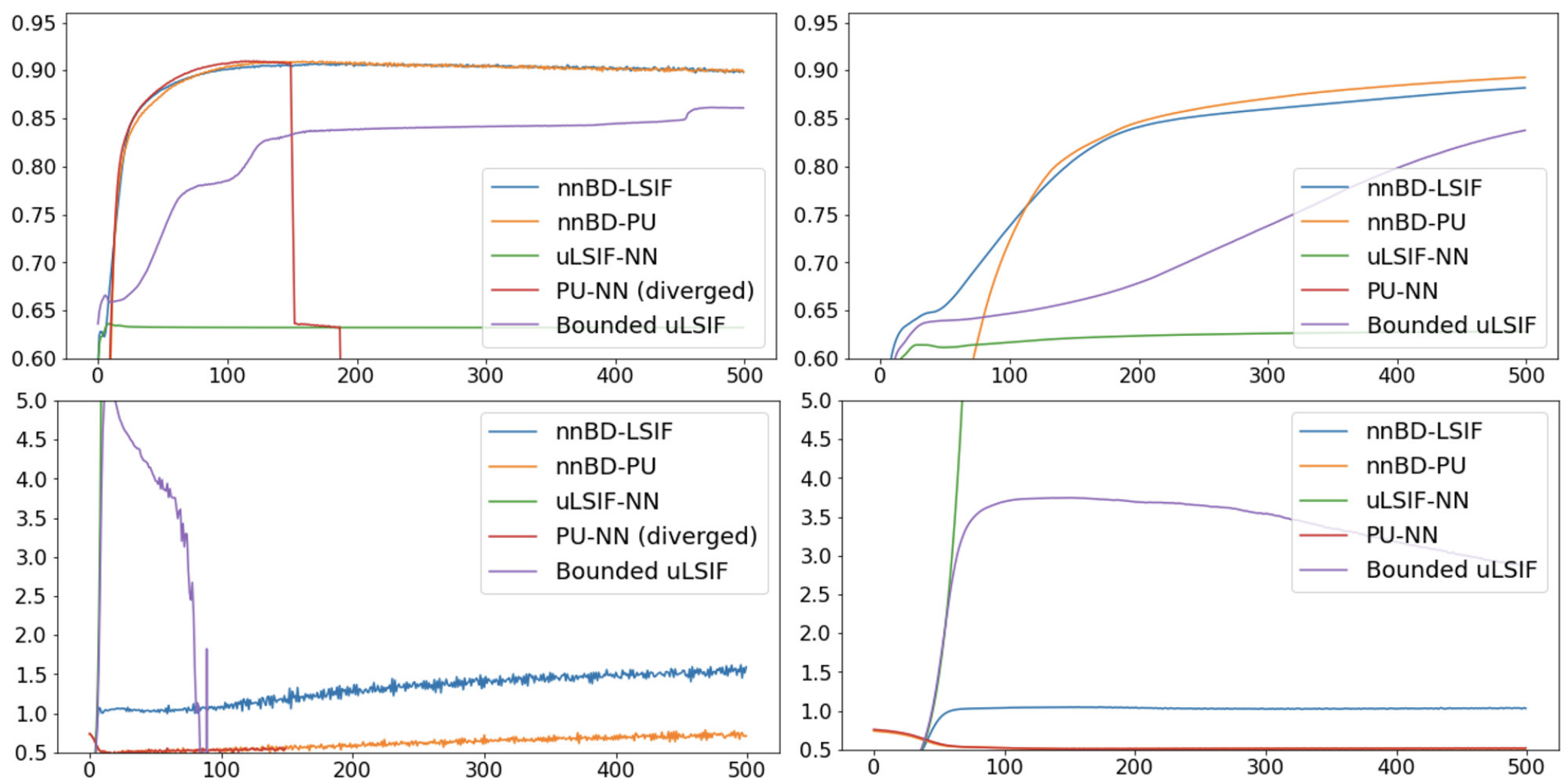}
\end{center}
\vspace{-0.3cm}
\caption{Experimental results of Section~\ref{sec:num_exp} without gradient ascent/descent heuristic. The horizontal axis is epoch, and the vertical axis is AUROC. The learning rates of the left and right graphs are $1\times10^{-4}$ and $1\times10^{-5}$, respectively. The upper graphs show the AUROCs and the lower graphs show $\hat{\mathbb{E}}_{\mathrm{de}}[\hat{r}(X)]$, which will approach $1$ when we successfully estimate the density ratio.}
\label{fig:numerical_exp1_2}
\end{figure}

\begin{figure}[t]
\begin{center}
 \includegraphics[width=137mm]{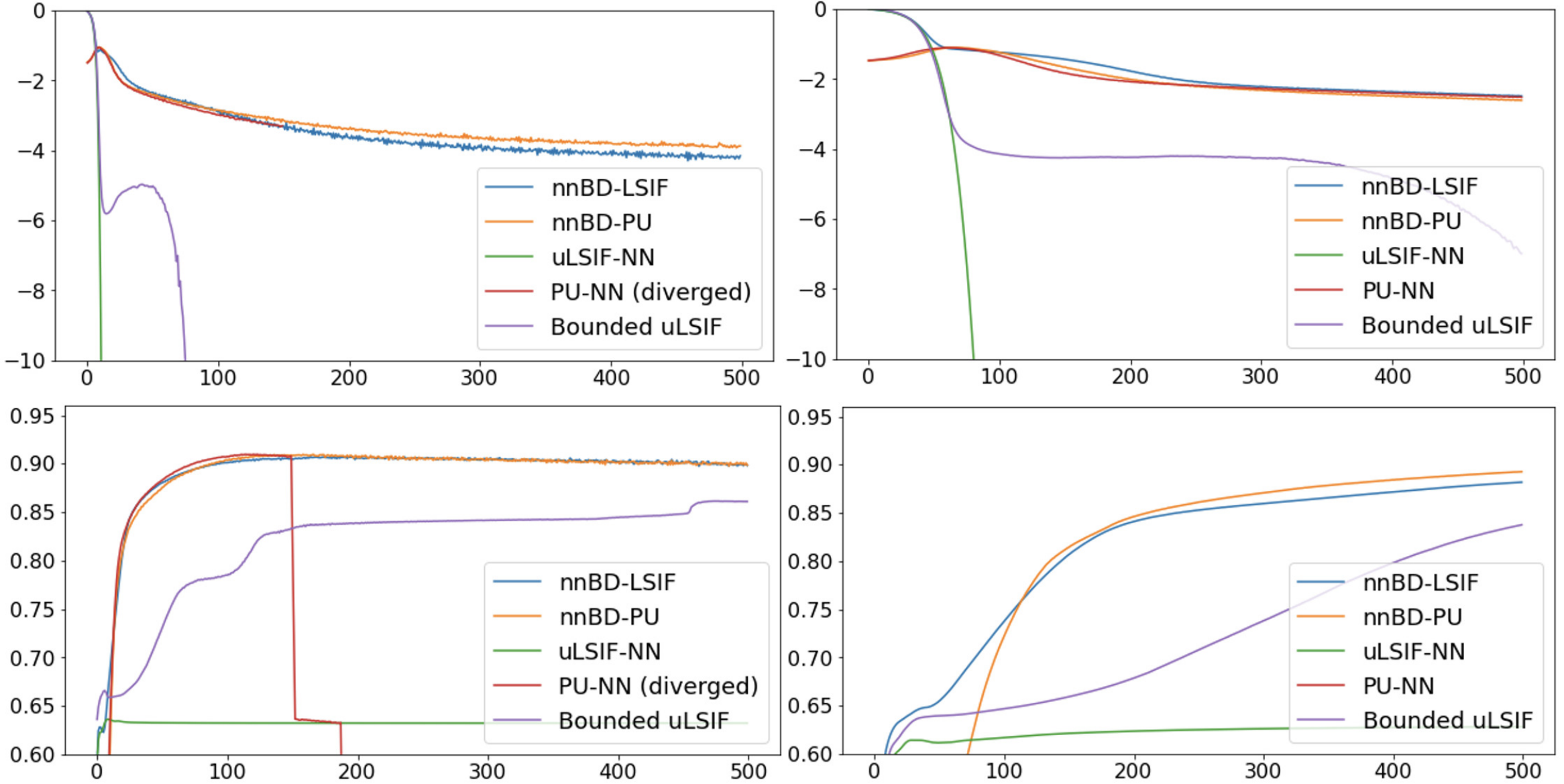}
\end{center}
\caption{The learning curves of the experiments in Section~\ref{sec:num_exp} without gradient ascent/descent heuristic. The horizontal axis is epoch. The vertical axes of the top figures indicate the training losses. The vertical axes of the bottom figures show the AUROC for the test data. The bottom figures are identical to the ones displayed in Section~\ref{sec:num_exp}.}
\label{fig:numerical_exp1_supp_2}
\end{figure} 

\begin{figure}[t]
\begin{center}
 \includegraphics[width=137mm]{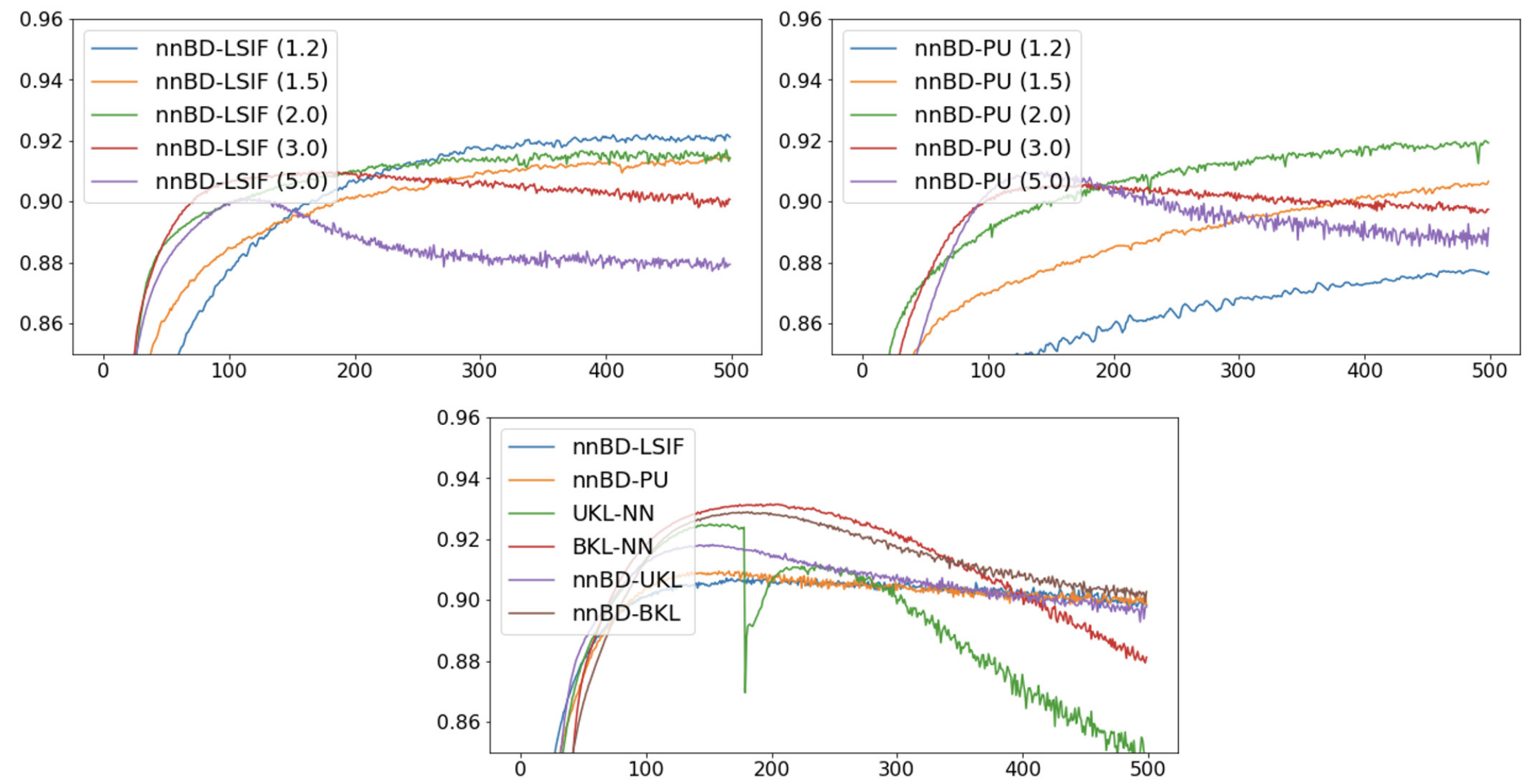}
\end{center}
\caption{The detailed experimental results for Section~\ref{sec:exp_val_estimators}. The horizontal axis is epoch, and the vertical axis is AUROC.}
\label{fig:numerical_exp2_2}
\end{figure}

\subsection{Experiments of inlier-based outlier detection}
In Table~\ref{tbl:full_exp_anomaly_detection}, we show the full results of inlier-based outlier detection. In almost all the cases, D3RE for inlier-based outlier detection outperforms the other methods. As explained in Section~\ref{appdx:detail_anomaly}, we consider that DeepSAD does not work well because the method assumes the availability of the labeled anomaly data, which is not available in our problem setting.

\label{appdx:sec:exp_in_out_detect}

In Table~\ref{tbl:full_exp_anomaly_detection2}, for different $1/C$ chosen from $\{1, 3, 5, 10\}$, we report the AUROCs of nnBD-LSIF with and without gradient ascent. As shown in the results, loose specification does not significantly decrease the performances. The gradient ascent technique improves the performances, but plain gradient descent still performs well.

\begin{remark}[Benchmark Methods]

Although GT is outperformed by our proposed method, the problem setting for the comparison is not in favor of GT as it does not assume the access to the test data. Recently proposed methods for semi-supervised anomaly detection by \citet{ruff2020deep} did not perform well without using other side information used in \citet{ruff2020deep}. On the other hand, there is no other competitive methods in this problem setting, to the best of our knowledge.
\end{remark}

\subsection{Experiments of covariate shift adaptation}
\label{appdx:sec:cov_adapt}
In Table~\ref{tbl:appdx:exp_covariate_shift}, we show the detailed results of experiments of covariate shift adaptation. Even when the training data and the test data follow the same distribution, the covariate shift adaptation based on D3RE improves the mean PD. We consider that this is because the importance weighting emphasizes the loss in the empirical higher-density regions of the test examples.

\begin{table}
\caption{Average area under the ROC curve (Mean) of anomaly detection methods averaged over $5$ trials with the standard deviation (SD). For all datasets, each model was trained on the single class, and tested against all other
classes. The best performing method in each experiment is in bold.
\emph{SD}: Standard deviation.
}
\label{tbl:full_exp_anomaly_detection}
\begin{center}
\scalebox{0.73}[0.73]{
\begin{tabular}{l|rr|rr|rr|rr|rr|rr|rr}
\toprule
{\tt MNIST} & \multicolumn{2}{c|}{uLSIF-NN} & \multicolumn{2}{c|}{nnBD-LSIF} & \multicolumn{2}{c|}{nnBD-PU} & \multicolumn{2}{c|}{nnBD-LSIF} & \multicolumn{2}{c|}{nnBD-PU} & \multicolumn{2}{c|}{Deep SAD} & \multicolumn{2}{c}{GT}\\
Network & \multicolumn{2}{c|}{LeNet} & \multicolumn{2}{c|}{LeNet} & \multicolumn{2}{c|}{LeNet} & \multicolumn{2}{c|}{WRN} & \multicolumn{2}{c|}{WRN} & \multicolumn{2}{c|}{LeNet} & \multicolumn{2}{c}{WRN}\\
\hline
Inlier Class &             Mean &    SD &      Mean &      SD &      Mean &      SD  &      Mean &      SD &      Mean &      SD &      Mean &      SD &      Mean &      SD\\
\hline
0 &  0.999 &  0.000 &  0.997 &  0.000 &  0.999 &  0.000 &  {\bf 1.000} &  0.000 &  {\bf 1.000} &  0.000 &  0.592 &  0.051 & 0.963 &  0.002 \\
1 &  {\bf 1.000} &  0.000 &  0.999 &  0.000 &  {\bf 1.000} &  0.000 &  {\bf 1.000} &  0.000 &  {\bf 1.000} &  0.000 &  0.942 &  0.016 & 0.517 &  0.039 \\
2 &  0.997 &  0.001 &  0.994 &  0.000 &  0.997 &  0.001 &  {\bf 1.000} &  0.000 &  {\bf 1.000} &  0.001 &  0.447 &  0.027 & 0.992 &  0.001 \\
3 &  0.997 &  0.000 &  0.995 &  0.001 &  0.998 &  0.000 &  {\bf 1.000} &  0.000 &  {\bf 1.000} &  0.000 &  0.562 &  0.035 & 0.974 &  0.001 \\
4 &  0.998 &  0.000 &  0.997 &  0.001 &  0.999 &  0.000 &  {\bf 1.000} &  0.000 &  {\bf 1.000} &  0.000 &  0.646 &  0.015 & 0.989 &  0.001 \\
5 &  0.997 &  0.000 &  0.996 &  0.001 &  0.998 &  0.000 &  {\bf 1.000} &  0.000 &  {\bf 1.000} &  0.000 &  0.502 &  0.046 & 0.990 &  0.001 \\
6 &  0.997 &  0.001 &  0.997 &  0.001 &  0.999 &  0.000 &  {\bf 1.000} &  0.000 &  {\bf 1.000} &  0.000 &  0.671 &  0.027 & 0.998 &  0.000 \\
7 &  0.996 &  0.001 &  0.993 &  0.001 &  0.998 &  0.001 &  {\bf 1.000} &  0.000 &  {\bf 1.000} &  0.001 &  0.685 &  0.032 & 0.927 &  0.004 \\
8 &  0.997 &  0.000 &  0.994 &  0.001 &  0.997 &  0.000 &  {\bf 0.999} &  0.000 &  {\bf 0.999} &  0.000 &  0.654 &  0.026 & 0.949 &  0.002 \\
9 &  0.993 &  0.002 &  0.990 &  0.002 &  0.994 &  0.001 &  {\bf 0.998} &  0.001 &  {\bf 0.998} &  0.001 &  0.786 &  0.021 & 0.989 &  0.001 \\
\bottomrule
\end{tabular}}
\end{center}
\begin{center}
\scalebox{0.73}[0.73]{
\begin{tabular}{l|rr|rr|rr|rr|rr|rr|rr}
\toprule
{\tt CIFAR-10} & \multicolumn{2}{c|}{uLSIF-NN} & \multicolumn{2}{c|}{nnBD-LSIF} & \multicolumn{2}{c|}{nnBD-PU} & \multicolumn{2}{c|}{nnBD-LSIF} & \multicolumn{2}{c|}{nnBD-PU} & \multicolumn{2}{c|}{Deep SAD} & \multicolumn{2}{c}{GT}\\
Network & \multicolumn{2}{c|}{LeNet} & \multicolumn{2}{c|}{LeNet} & \multicolumn{2}{c|}{LeNet} & \multicolumn{2}{c|}{WRN} & \multicolumn{2}{c|}{WRN} & \multicolumn{2}{c|}{LeNet} & \multicolumn{2}{c}{WRN}\\
\hline
Inlier Class &             Mean &    SD &      Mean &      SD &      Mean &      SD  &      Mean &      SD &      Mean &      SD &      Mean &      SD &      Mean &      SD\\
\hline
plane &  0.745 &  0.056 &  0.934 &  0.002 &  {\bf 0.943} &  0.001 &  0.925 &  0.004 &  0.923 &  0.001 &  0.627 &  0.066 &  0.697 &  0.009 \\
car &  0.758 &  0.078 &  0.957 &  0.002 &  {\bf 0.968} &  0.001 &  0.965 &  0.002 &  0.960 &  0.001 &  0.606 &  0.018 &  0.962 &  0.003 \\
bird &  0.768 &  0.012 &  0.850 &  0.007 &  {\bf 0.878} &  0.004 &  0.844 &  0.004 &  0.858 &  0.004 &  0.404 &  0.006 &  0.752 &  0.002 \\
cat &  0.745 &  0.037 &  0.820 &  0.003 &  {\bf 0.856} &  0.002 &  0.810 &  0.009 &  0.841 &  0.002 &  0.517 &  0.018 &  0.727 &  0.014 \\
deer &  0.758 &  0.036 &  0.886 &  0.004 &  {\bf 0.909} &  0.002 &  0.864 &  0.008 &  0.872 &  0.002 &  0.704 &  0.052 &  0.863 &  0.014 \\
dog &  0.728 &  0.103 &  0.875 &  0.004 &  {\bf 0.906} &  0.002 &  0.887 &  0.005 &  0.896 &  0.002 &  0.490 &  0.025 &  0.873 &  0.002 \\
frog &  0.750 &  0.060 &  0.944 &  0.003 &  {\bf 0.958} &  0.001 &  0.948 &  0.004 &  0.948 &  0.001 &  0.744 &  0.014 &  0.879 &  0.008 \\
horse &  0.782 &  0.048 &  0.928 &  0.003 &  0.948 &  0.002 &  0.921 &  0.007 &  0.927 &  0.002 &  0.519 &  0.015 &  {\bf 0.953} &  0.001 \\
ship &  0.780 &  0.048 &  0.958 &  0.003 &  {\bf 0.965} &  0.001 &  0.964 &  0.002 &  0.957 &  0.001 &  0.430 &  0.062 &  0.921 &  0.009 \\
truck &  0.708 &  0.081 &  0.939 &  0.003 &  {\bf 0.955} &  0.001 &  0.952 &  0.003 &  0.949 &  0.001 &  0.393 &  0.008 &  0.911 &  0.003 \\
\bottomrule
\end{tabular}}
\end{center}
\begin{center}
\scalebox{0.73}[0.73]{
\begin{tabular}{l|rr|rr|rr|rr|rr|rr|rr}
\toprule
{\tt FMNIST} & \multicolumn{2}{c|}{uLSIF-NN} & \multicolumn{2}{c|}{nnBD-LSIF} & \multicolumn{2}{c|}{nnBD-PU} & \multicolumn{2}{c|}{nnBD-LSIF} & \multicolumn{2}{c|}{nnBD-PU} & \multicolumn{2}{c|}{Deep SAD} & \multicolumn{2}{c}{GT}\\
Network & \multicolumn{2}{c|}{LeNet} & \multicolumn{2}{c|}{LeNet} & \multicolumn{2}{c|}{LeNet} & \multicolumn{2}{c|}{WRN} & \multicolumn{2}{c|}{WRN} & \multicolumn{2}{c|}{LeNet} & \multicolumn{2}{c}{WRN}\\
\hline
Inlier Class &             Mean &    SD &      Mean &      SD &      Mean &      SD  &      Mean &      SD &      Mean &      SD &      Mean &      SD &      Mean &      SD\\
\hline
T-shirt/top &  0.960 &  0.005 &  0.981 &  0.001 &  {\bf 0.985} &  0.000 &  0.984 &  0.001 &  0.982 &  0.000 &  0.558 &  0.031 &  0.890 &  0.007 \\
Trouser &  0.961 &  0.010 &  0.998 &  0.000 &  {\bf 1.000} &  0.000 &  0.998 &  0.000 &  0.998 &  0.000 &  0.758 &  0.022 &  0.974 &  0.004 \\
Pullover &  0.944 &  0.012 &  0.976 &  0.001 &  0.980 &  0.001 &  {\bf 0.983} &  0.002 &  0.972 &  0.001 &  0.617 &  0.046 &  0.902 &  0.005 \\
Dress &  0.973 &  0.006 &  0.986 &  0.001 &  {\bf 0.992} &  0.000 &  0.991 &  0.001 &  0.986 &  0.000 &  0.525 &  0.038 &  0.843 &  0.014 \\
Coat &  0.958 &  0.006 &  0.978 &  0.001 &  {\bf 0.983} &  0.000 &  0.981 &  0.002 &  0.974 &  0.000 &  0.627 &  0.029 &  0.885 &  0.003 \\
Sandal &  0.968 &  0.011 &  0.997 &  0.001 &  {\bf 0.999} &  0.000 &  {\bf 0.999} &  0.000 &  {\bf 0.999} &  0.000 &  0.681 &  0.023 &  0.949 &  0.005 \\
Shirt &  0.919 &  0.005 &  0.952 &  0.001 &  {\bf 0.958} &  0.001 &  0.944 &  0.005 &  0.932 &  0.001 &  0.618 &  0.015 &  0.842 &  0.004 \\
Sneaker &  0.991 &  0.001 &  0.994 &  0.002 &  {\bf 0.998} &  0.000 &  {\bf 0.998} &  0.000 &  {\bf 0.998} &  0.000 &  0.802 &  0.054 &  0.954 &  0.006 \\
Bag &  0.980 &  0.005 &  0.994 &  0.001 &  {\bf 0.999} &  0.000 &  0.998 &  0.000 &  {\bf 0.999} &  0.000 &  0.447 &  0.034 &  0.973 &  0.006 \\
Ankle boot &  0.992 &  0.001 &  0.985 &  0.015 & {\bf 0.999} &  0.000 &  0.997 &  0.000 &  0.996 &  0.000 &  0.583 &  0.023 &  0.996 &  0.000 \\
\bottomrule
\end{tabular}}
\end{center}

\end{table}

\begin{table}
\caption{We show average area under the ROC curve (Mean) of anomaly detection methods averaged over $5$ trials with the standard deviation (SD) for nnBD-LSIF with LeNet. We choose $1/C$, which represents a guessed upper bound, from $\{1, 3, 5, 10\}$. Each model is trained on the single class, and tested against all other
classes. We show both results with and without gradient ascent and $\circ$ denotes the use of the gradient ascent technique. The best performing method for each inlier class is highlighted in bold. The best performing method for each $1/C$ is highlighted in underline.}
\label{tbl:full_exp_anomaly_detection2}
\begin{center}
\scalebox{0.73}[0.73]{
\begin{tabular}{l|rr|rr|rr|rr|rr|rr|rr|rr}
\toprule
{\tt CIFAR-10} & \multicolumn{16}{c}{nnBD-LSIF} \\
Network & \multicolumn{16}{c}{LeNet} \\
\hline
$1/C$ (Guessed upper bound) & \multicolumn{4}{c|}{$1$} & \multicolumn{4}{c|}{$3$} & \multicolumn{4}{c|}{$5$} & \multicolumn{4}{c}{$10$}  \\
\hline
With gradient ascent & \multicolumn{2}{c|}{$\circ$} & \multicolumn{2}{c|}{}  & \multicolumn{2}{c|}{$\circ$} & \multicolumn{2}{c|}{}  & \multicolumn{2}{c|}{$\circ$} & \multicolumn{2}{c|}{} & \multicolumn{2}{c|}{$\circ$} & \multicolumn{2}{c}{} \\
\hline
Inlier Class &             Mean &    SD &      Mean &      SD &      Mean &      SD  &      Mean &      SD &      Mean &      SD &      Mean &      SD &      Mean &      SD &      Mean &      SD \\
\hline
plane &  0.491 &  0.009 &  {\underline {0.642}} &  0.019 &  {\underline {\bf 0.934}} &  0.002 &  0.918 &  0.003 &  {\underline {0.920}} &  0.003 &  0.899 &  0.002 &  {\underline {0.886}} &  0.007 &  0.839 &  0.009 \\
car &  0.521 &  0.032 &  {\underline {0.644}} &  0.011 &  {\underline {\bf 0.957}} &  0.002 &  0.950 &  0.002 &  {\underline {0.951}} &  0.003 &  0.939 &  0.004 &  {\underline {0.920}} &  0.006 &  0.894 &  0.013 \\
bird &  0.501 &  0.013 &  {\underline {0.622}} &  0.012 &  {\underline {\bf 0.850}} &  0.007 &  0.832 &  0.004 &  {\underline {0.835}} &  0.005 &  0.812 &  0.006 &  {\underline {0.818}} &  0.004 &  0.765 &  0.010 \\
cat &  0.491 &  0.015 &  {\underline {0.616}} &  0.014 &  {\underline {\bf 0.820}} &  0.003 &  0.807 &  0.003 &  {\underline {0.802}} &  0.007 &  0.770 &  0.005 &  {\underline {0.773}} &  0.011 &  0.721 &  0.006 \\
deer &  0.523 &  0.017 &  {\underline {0.658}} &  0.022 &  {\underline {\bf 0.886}} &  0.004 &  0.879 &  0.001 &  {\underline {0.873}} &  0.005 &  0.862 &  0.004 &  {\underline {0.852}} &  0.007 &  0.820 &  0.009 \\
dog &  0.514 &  0.018 &  {\underline {0.621}} &  0.011 &  {\underline {\bf 0.875}} &  0.004 &  0.855 &  0.005 &  {\underline {0.852}} &  0.008 &  0.820 &  0.007 &  {\underline {0.821}} &  0.009 &  0.758 &  0.017 \\
frog &  0.496 &  0.018 &  {\underline {0.671}} &  0.018 &  {\underline {\bf 0.944}} &  0.003 &  0.932 &  0.003 &  {\underline {0.927}} &  0.003 &  0.917 &  0.005 &  {\underline {0.886}} &  0.004 &  0.845 &  0.014 \\
horse &  0.506 &  0.017 &  {\underline {0.631}} &  0.018 &  {\underline {\bf 0.928}} &  0.003 &  0.910 &  0.003 &  {\underline {0.916}} &  0.005 &  0.885 &  0.003 &  {\underline {0.880}} &  0.007 &  0.823 &  0.020 \\
ship &  0.494 &  0.027 &  {\underline {0.680}} &  0.026 &  {\underline {\bf 0.958}} &  0.003 &  0.949 &  0.001 &  {\underline {0.956}} &  0.002 &  0.942 &  0.002 &  {\underline {0.933}} &  0.004 &  0.907 &  0.006 \\
truck &  0.506 &  0.013 &  {\underline {0.660}} &  0.016 &  {\underline {\bf 0.939}} &  0.003 &  0.930 &  0.003 &  {\underline {0.922}} &  0.003 &  0.907 &  0.007 &  {\underline {0.885}} &  0.007 &  0.843 &  0.018 \\
\bottomrule
\end{tabular}}
\end{center}

\end{table}

\begin{table}
\caption{Average PD (Mean) with standard deviation (SD) over $10$ trials with different seeds per method. The best performing method in terms of the mean PD is specified by bold face.} 
\label{tbl:appdx:exp_covariate_shift}\begin{center}
\scalebox{0.72}[0.72]{
\begin{tabular}{l|rr|rr|rr|rr|rr}
\toprule
Domains (Train $\to$ Test)&      \multicolumn{2}{c|}{books $\to$ books} &      \multicolumn{2}{c|}{dvd $\to$ books} &      \multicolumn{2}{c|}{dvd $\to$ dvd} &     \multicolumn{2}{c|}{elec $\to$ books} &      \multicolumn{2}{c}{elec $\to$ dvd}  \\
\hline
DRE method &      Mean &      SD &      Mean &      SD &      Mean &      SD &     Mean &      SD & Mean & SD \\
\hline
w/o IW &  0.093 &  0.003 &  0.128 &  0.008 &  0.100 &  0.005 &  0.212 &  0.012 &  0.187 &  0.008 \\
Kernel uLSIF &  0.089 &  0.002 &  0.114 &  0.006 &  0.094 &  0.004 &  0.200 &  0.009 &  0.179 &  0.006 \\
Kernel KLIEP &  0.089 &  0.002 &  0.116 &  0.006 &  0.094 &  0.004 &  0.205 &  0.011 &  0.184 &  0.008 \\
uLSIF-NN &  0.093 &  0.003 &  0.128 &  0.008 &  0.100 &  0.005 &  0.212 &  0.012 &  0.187 &  0.008 \\
PU-NN &  0.093 &  0.003 &  0.128 &  0.008 &  0.100 &  0.005 &  0.212 &  0.012 &  0.187 &  0.008 \\
nnBD-LSIF &  {\bf 0.086} &  0.002 &  {\bf 0.113} &  0.005 &  {\bf 0.091} &  0.004 &  {\bf 0.199} &  0.009 &  {\bf 0.176} &  0.005 \\
nnBD-PU &  0.090 &  0.003 &  {\bf 0.113} &  0.006 &  0.096 &  0.004 &  {\bf 0.199} &  0.009 &  {\bf 0.176} &  0.006 \\
\bottomrule
\end{tabular}}
\end{center}

\begin{center}
\scalebox{0.72}[0.72]{
\begin{tabular}{l|rr|rr|rr|rr|rr}
\toprule
Domains (Train $\to$ Test)&      \multicolumn{2}{c|}{elec $\to$ elec} &      \multicolumn{2}{c|}{kitchen $\to$ books} &      \multicolumn{2}{c|}{kitchen $\to$ dvd} &     \multicolumn{2}{c|}{kitchen $\to$ elec} &      \multicolumn{2}{c}{kitchen $\to$ kitchen}  \\
\hline
DRE method &      Mean &      SD &      Mean &      SD &      Mean &      SD &     Mean &      SD &      Mean &      SD  \\
\hline
w/o IW &  0.079 &  0.005 &  0.202 &  0.013 &  0.185 &  0.006 &  0.073 &  0.004 &  0.062 &  0.002 \\
Kernel uLSIF &  0.072 &  0.003 &  0.192 &  0.007 &  0.178 &  0.008 &  0.071 &  0.003 &  0.060 &  0.003 \\
Kernel KLIEP &  0.072 &  0.003 &  0.195 &  0.005 &  0.182 &  0.007 &  0.072 &  0.004 &  0.060 &  0.002 \\
uLSIF-NN &  0.079 &  0.005 &  0.202 &  0.013 &  0.185 &  0.006 &  0.073 &  0.004 &  0.062 &  0.002 \\
PU-NN &  0.079 &  0.005 &  0.202 &  0.013 &  0.185 &  0.006 &  0.073 &  0.004 &  0.062 &  0.002 \\
nnBD-LSIF &  {\bf 0.071} &  0.003 &  {\bf 0.189} &  0.008 &  {\bf 0.174} &  0.008 &  {\bf 0.068} &  0.003 &  {\bf 0.058} &  0.003 \\
nnBD-PU &  0.074 &  0.004 &  0.190 &  0.008 &  {\bf 0.174} &  0.008 &  {\bf 0.068} &  0.003 &  0.062 &  0.005 \\
\bottomrule
\end{tabular}}
\end{center}
\end{table}

\section{Other applications}
\label{appdx:other_appl}
In this section, we explain other potential applications of the proposed method.

\subsection{Covariate shift adaptation by importance weighting}
We consider training a model using input distribution different from the test input distribution, which is called \emph{covariate shift}, \citep{Bickel2009}. To solve this problem, the density ratio has been used via importance weighting (IW) \citep{shimodaira2000improving,yamada2010,Reddi2015}. 

We use a document dataset of Amazon\footnote{\url{http://john.blitzer.com/software.html}} \citep{blitzer-etal-2007-biographies} for multi-domain sentiment analysis \citep{blitzer-etal-2007-biographies}. This data consists of text reviews from four different product domains: book, electronics  (elec), dvd, and kitchen. Following \citet{10.5555/3042573.3042781} and \citet{pmlr-v48-menon16}, we transform the text data using TF-IDF to map them into the instance space $\mathcal{X} = \mathbb{R}^{10000}$ \citep{salton1986introduction}. Each review is endowed with four labels indicating the positivity of the review, and our goal is to conduct regression for these labels. To achieve this goal, we perform kernel ridge regression with the polynomial kernel. We compare regression without IW (w/o IW) with regression using the density ratio estimated by PU-NN, uLSIF-NN, nnBD-LSIF, nnBD-PU, uLSIF with Gaussian kernels (Kernel uLSIF), and KLIEP with Gaussian kernels (Kernel KLIEP). We conduct experiments on $2,000$ samples from one domain, and test $2,000$ samples. Following \citet{pmlr-v48-menon16}, we reduce the dimension into $100$ dimensions by principal component analysis when using Kernel uLSIF, Kernel KLEIP, and regressions. Following \citet{pmlr-v48-menon16} and \citet{Cortes2011regression}, the mean and standard deviation of the pairwise disagreement (PD), $1-\mathrm{AUROC}$, is reported. A part of results is in Table~\ref{tbl:exp_covariate_shift}. The full results are in Appendix~\ref{appdx:sec:cov_adapt}. The methods with D3RE show preferable performance, but the improvement is not significant compared with the image data. We consider this is owing to the difficulty of the covariate shift problem in this dataset.

\begin{table}
\caption{Average PD (Mean) with standard deviation (SD) over $10$ trials with different seeds per method. The best performing method in terms of the mean PD is specified by bold face.} 
\label{tbl:exp_covariate_shift}
\vspace{-0.2cm}
\begin{center}
\scalebox{0.68}[0.68]{
\begin{tabular}{l|rr|rr|rr|rr|rr|rr}
\toprule
Domains (Train $\to$ Test)&      \multicolumn{2}{c|}{book $\to$ dvd} &      \multicolumn{2}{c|}{book $\to$ elec} &      \multicolumn{2}{c|}{book $\to$ kitchen} &     \multicolumn{2}{c|}{dvd $\to$ elec} &      \multicolumn{2}{c|}{dvd $\to$ kitchen} &     \multicolumn{2}{c}{elec $\to$ kitchen} \\
\hline
DRE method &      Mean &      SD &      Mean &      SD &      Mean &      SD &     Mean &      SD &      Mean &      SD &      Mean &      SD \\
\hline
w/o IW &  0.126 &  0.008 &  0.174 &  0.010 &  0.166 &  0.009 &  0.162 &  0.006 &  0.146 &  0.010 &  0.074 &  0.005 \\
Kernel uLSIF &  0.122 &  0.009 &  0.162 &  0.009 &  0.159 &  0.007 &  0.153 &  0.006 &  0.142 &  0.007 &  0.068 &  0.005 \\
Kernel KLIEP &  0.130 &  0.010 &  0.164 &  0.009 &  0.161 &  0.007 &  0.154 &  0.006 &  0.143 &  0.006 &  0.070 &  0.005 \\
uLSIF-NN &  0.126 &  0.008 &  0.174 &  0.010 &  0.166 &  0.009 &  0.162 &  0.006 &  0.146 &  0.010 &  0.074 &  0.005 \\
PU-NN &  0.126 &  0.008 &  0.174 &  0.010 &  0.166 &  0.009 &  0.162 &  0.006 &  0.146 &  0.010 &  0.074 &  0.005 \\
nnBD-LSIF &  0.120 &  0.008 &  {\bf 0.160} &  0.008 &  0.157 &  0.008 &  {\bf 0.148} &  0.006 &  {\bf 0.138} &  0.007 &  {\bf 0.066} &  0.005 \\
nnBD-PU &  {\bf 0.119} &  0.008 &  {\bf 0.160} &  0.008 &  {\bf 0.156} &  0.007 &  {\bf 0.148} &  0.005 &  {\bf 0.138} &  0.007 &  {\bf 0.066} &  0.005 \\
\bottomrule
\end{tabular}}
\end{center}
\vspace{-0.5cm}
\end{table}

\paragraph{$f$-divergence estimation.} $f$-divergences \citep{Ali1966,Csiszar1967} are the discrepancy measures of probability densities based on the density ratio, hence the proposed method can be used for their estimation. They include the KL divergence \citep{Kullback51klDivergence}, the Hellinger distance \citep{Hellinger1909}, and the Pearson divergence \citep{pearson1900}, as examples.

\paragraph{Two-sample homogeneity test.} The purpose of a homogeneity test is to determine if two or more datasets come from the same distribution \citep{LoevingerJane1948Ttoh}.
For two-sample testing, using
a semiparametric $f$-divergence estimator with nonparametric density ratio models has been studied \citep{Keziou2003,keziou2005}. \citet{kanamori2010} and \citet{sugiyama2011a} employed direct DRE for the nonparametric DRE.

\paragraph{Generative adversarial networks.} Generative adversarial networks (GANs) are successful deep generative models, which learns to generate new data with the same distribution as the training data \citet{Goodfellow2014}. Various GAN methods have been proposed, amongst which \citet{Nowozin} proposed \emph{f-GAN}, which minimizes the variational estimate of $f$-divergence. \citet{den2} extended the idea of \citet{Nowozin} to use BD minimization for DRE. The estimator proposed in this paper also has a potential to improve the method of \citet{den2}.

\paragraph{Average treatment effect estimation and off-policy evaluation.} 
One of the goals in causal inference is to estimate the expected treatment effect, which is a \emph{counterfactual} value. Therefore, following the causality formulated by \citet{rubin1974}, we consider estimating the average treatment effect (ATE). Recently, from machine learning community, off-policy evaluation (OPE) is also proposed, which is a generalization of ATE \citep{dudik2011doubly,ImaiKosuke2014Cbps,wang2017optimal,narita2018,pmlr-v97-bibaut19a,Kallus2019IntrinsicallyES,Oberst2019}. OPE has garnered attention in applications such as advertisement design selection, personalized medicine, search engines, and recommendation systems \citep{kdd2009_ads, www2010_cb, AtheySusan2017EPL}. 

The problem in ATE estimation and OPE is sample selection bias. For removing the bias, the density ratio has a critical role. An idea of using the density ratio dates back to \citep{rosenbaum87}, which proposed an inverse probability weighting (IPW) method \citep{Horvitz1952} for ATE estimation. In the IPW method, we approximate the parameter of interest with the sample average with inverse assignment probability of treatment (action), which is also called propensity score. Here, it is known that using the true assignment probability yields higher variance than the case where we use an estimated assignment probability even if we know the true value \citep{hirano2003efficient,Henmi2004paradox,Henmi2007imp}. This property can be explained from the viewpoint of semiparametric efficiency \citep{bickel98}. While the asymptotic variance of the IPW estimator with an estimated propensity score can achieve the efficiency bound, that of the IPW estimator with the true propensity score does not. 

By extending the IPW estimator, more robust ATE estimators are proposed by \citet{rosen83}, which is known as a doubly robust (DR) estimator. The doubly robust estimator is not only robust to model misspecification but also useful in showing asymptotic normality. In particular, when using the density ratio and the other nuisance parameters estimated from the machine learning method, the conventional IPW and DR estimators do not have asymptotic normality \citep{ChernozhukovVictor2018Dmlf}. This is because the nuisance estimators do not satisfy Donsker's condition, which is required for showing the asymptotic normality of semiparametric models. However, by using the sample splitting method proposed by \citet{klaassen1987}, \citet{ZhengWenjing2011CTME}, and \citet{ChernozhukovVictor2018Dmlf}, we can show the asymptotic normality when using the DR estimator. Note that for the IPW estimator, we cannot show the asymptotic normality even if using sample-splitting. 

When using the IPW and DR estimator, we often consider a two-stage approach: in the first stage, we estimate the nuisance parameters, including the density ratio; in the second stage, we construct a semiparametric ATE estimator including the first-stage nuisance estimators. This is also called two-step generalized method of moments (GMM). On the other hand, from the causal inference community, there are also weighting-based covariate balancing methods \citep{qin2007empiricallikelihood,tan2010,Hainmueller2012,ImaiKosuke2014Cbps}. In particular, \citet{ImaiKosuke2014Cbps} proposed a covariate balancing propensity score (CBPS), which simultaneously estimates the density ratio and ATE. The idea of CBPS is to construct moment conditions, including the density ratios, and estimate the ATE and density ratio via GMM simultaneously. Although the asymptotic property of the CBPS is the same as other conventional estimators, existing empirical studies report that the CBPS outperforms them \citep{Wyss}. 

Readers may feel that the CBPS \citep{ImaiKosuke2014Cbps} has a close relationship with the direct DRE, but we consider that it is less relevant to the context of the direct DRE. From the DRE perspective, the method of \citet{ImaiKosuke2014Cbps} boils down to the method of \citet{gretton2009}, which proposed direct DRE through moment matching. The research motivation of \citet{ImaiKosuke2014Cbps} is to estimate the ATE with estimating a nuisance density ratio estimator simultaneously. Therefore, the density ratio itself is \emph{nuisance} parameter; that is, they are not interested in the estimation performance of the density ratio. Under their motivation, they are interested in a density ratio estimator satisfying the moment condition for estimating the ATE, not in a density ratio estimator predicting the true density ratio well. In addition, while the direct DRE method adopts linear-in-parameter models and neural networks (our work), it is not appropriate to use those methods with the CBPS \citep{ChernozhukovVictor2018Dmlf}. This is because the density ratio estimator does not satisfy Donsker's condition. Even naive Ridge and Lasso regression estimators do not satisfy the Donsker's condition. Therefore, when using machine learning methods for estimating the density ratio, we cannot show asymptotic normality of an ATE estimator obtained by the CBPS; therefore, we need to use the sample-splitting method by \citep{ChernozhukovVictor2018Dmlf}. This means that when using the CBPS, we can only use a naive parametric linear model without regularization or classic nonparametric kernel regression. Recently, for GMM with such non-Donsker nuisance estimators, \citet{chernozhukov2016} also proposed a new GMM method based on the conventional two-step approach. For these reasons, the CBPS is less relevant to the direct DRE context.

\paragraph{Off-policy evaluation with external validity.} By the problem setting of combining causal inference and domain adaptation, \citet{kato_uehara_2020} recently proposed using covariate shift adaptation to solve the \emph{external validity} problem in OPE, i.e., the case that the distribution of covariates is the same between the historical and evaluation data \citep{ColeStephenR.2010GEFR,PearlJudea2015EVFD}. 

\paragraph{Change point detection.} The methods for \emph{change-point detection} try to detect abrupt changes in time-series data \citep{basseville1993detection,brodsky1993nonparametric,Gustafsson2000,nguyen2011positive}. There are two types of problem settings in change-point detection, namely the real-time detection \citep{Adams07bayesianonline,Garnett2009,Paquet2007} and the retrospective detection \citep{basseville1993detection,Yamanishi2002}. In retrospective detection, which requires longer reaction periods, \citet{LiuSong2012} proposed using techniques of direct DRE. Whereas the existing methods rely on linear-in-parameter models, our proposed method enables us to employ more complex models for change point detection.

\paragraph{Similarity-based sentiment analysis.} \citet{kato2019pufinance} used the density ratio estimated from PU learning for sentiment analysis of text data based on similarity.

\section{Generalization error bound}
\label{section:appendix:estimation-error-bound}
The generalization error bound can be proved by building upon the proof techniques in \citet{KiryoPositiveunlabeled2017,LuMitigating2020}.
\paragraph{Notations for the theoretical analysis.}
We denote the set of real values by \(\Re\) and that of positive integers by \(\Na\).
Let \(\InSpace \subset \Re^d\).
Let \(\pnu(x)\) and \(\pde(x)\) be probability density functions over \(\InSpace\), and assume that the density ratio \(\rstar(x) := \frac{\pnu(x)}{\pde(x)}\) is existent and bounded: \(\rmax := \|\rstar\|_\infty < \infty\).
Assume \(0 < \Cons < \frac{1}{\rmax}\). Since \(\rmax \geq 1\) (because \(1 = \int \pde(x) \rstar(x) dx \leq 1 \cdot \|\rstar\|_\infty\)), we have \(\Cons \in (0, 1]\) and hence \(\pmod := \pde - \Cons\pnu > 0\).

\paragraph{Problem Setup.}
Let the hypothesis class of density ratio be \(\rClass \subset \{r: \Re^D \to \rClassRangeTwo =: \rClassRangeTwoName\}\), where \(0 \leq \rClassBoundMin < \rmax < \rClassBound\).
Let \(\br: \rClassRangeTwoName \to \Re\) be a twice continuously-differentiable convex function with a bounded derivative.
Define \(\tbr\) by \(\dbr(t) = \Cons (\dbr(t) t - \br(t)) + \tbr(t)\), where \(\dbr\) is the derivative of \(\br\) continuously extended to \(0\) and \(\rClassBound\).
Recall the definitions \(\lossOne(t) := \dbr(t) t - \br(t) + A\), \(\lossTwo(t) := - \tbr(t)\), and
\begin{equation*}\begin{split}
\Risk(\r) &:= \Ede\left[\dbr(\r(X))\r(X) - \br(\r(X)) + A\right] - \Enu\left[\dbr(\r(X))\right] \\
&= \Emod\left[\dbr(\r(X))\r(X) - \br(\r(X)) + A\right] - \Enu\left[\tbr(\r(X))\right] \\
&= \Emod\lossOne(r(X)) + \Enu\lossTwo(r(X)) \\
&\left(= (\Ede - \Cons\Enu)\lossOne(r(X)) + \Enu\lossTwo(r(X))\right), \\
\nnhRisk(\r) &:= \modFn\left(\hEmod\lossOne(r(X))\right) + \hEnu\lossTwo(r(X)) \\
&\left(= \modFn((\hEde - \Cons\hEnu)\lossOne(r(X)) + \hEnu\lossTwo(r(X))\right), \\
\end{split}\end{equation*}
where we denoted \(\hEmod = \hEde - \Cons\hEnu\) and \(\modFn\) is a consistent correction function with Lipschitz constant \(\LipmodFn\) (Definition~\ref{def:consistent-correction}).

\begin{remark}
The true density ratio \(\rstar\) minimizes \(\Risk\).
\end{remark}

\begin{definition}[Consistent correction function \cite{LuMitigating2020}]
A function \(f : \Re \to \Re\) is called a consistent correction function if it is Lipschitz continuous, non-negative and \(f(x) = x\) for all \(x \geq 0\).
\label{def:consistent-correction}
\end{definition}

\begin{definition}[Rademacher complexity]
Given \(n \in \mathbb{N}\) and a distribution \(p\), define the Rademacher complexity \(\Radnp(\rClass)\) of a function class \(\rClass\) as
\begin{equation*}\begin{split}
\Radnp(\rClass) := \E_p\ERad\left[\supr\left|\frac{1}{n} \sum_{i=1}^n \rad_i \r(\X_i)\right|\right],
\end{split}\end{equation*}
where \(\{\sigma_i\}_{i=1}^n\) are Rademacher variables (i.e., independent variables following the uniform distribution over \(\{-1, +1\}\)) and \({\{\X_i\}_{i=1}^n \overset{\text{i.i.d.}}{\sim} p}\).
\label{def:rademacher-complexity}
\end{definition}
The theorem in the paper is a special case of Theorem~\ref{thm:estimation-error-bound} with \(\modFn(\cdot) := \max\{0, \cdot\}\) (in which case \(\LipmodFn = 1\)) and Theorem~\ref{thm:main-text:estimation-error-bound}.
\begin{theorem}[Generalization error bound]
Assume that \(\LossBound := \supt\{\max\{|\lossOne(t)|, |\lossTwo(t)|\}\} < \infty\).
Assume \(\lossOne\) is \(\LipOne\)-Lipschitz and \(\lossTwo\) is \(\LipTwo\)-Lipschitz.
Assume that there exists an empirical risk minimizer \(\hr \in \argmin_{\r \in \rClass} \nnhRisk(\r)\)
and a population risk minimizer \(\rbest \in \argmin_{\r \in \rClass} \Risk(\r)\).
Also assume \(\infr \Emod\lOneR > 0\) and that \((\modFn - \Identity)\) is \((\modFnIdLip)\)-Lipschitz.
Then for any \(\delta \in (0, 1)\), with probability at least \(1 - \delta\), we have
\begin{equation*}\begin{split}
\Risk(\hr) - \Risk(\rbest) &\leq 8 \LipmodFn\LipOne\Radde(\rClass) + 8 (\LipmodFn\Cons\LipOne + \LipTwo)\Radnu(\rClass) \\
&\qquad+ 2 \BiasTerm + \LossBound \sqrt{8 \left(\frac{\LipmodFn^2}{\nde}+ \frac{(1 + \LipmodFn \Cons)^2}{\nnu}\right) \log\frac{1}{\delta}},
\end{split}\end{equation*}
where \(\BiasTerm\) is defined as in Lemma~\ref{lem:bias}.
\label{thm:estimation-error-bound}
\end{theorem}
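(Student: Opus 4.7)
The plan is to prove Theorem~\ref{thm:estimation-error-bound} by following the three-step ``biased empirical risk'' template of \citet{KiryoPositiveunlabeled2017,LuMitigating2020}: first decompose the excess risk into a uniform deviation, then split that deviation into a bias piece coming from the correction function $\modFn$ and a standard unbiased deviation piece, and finally bound each of the two pieces separately.

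\textbf{Step 1 (excess-risk decomposition).} Since $\hr$ minimizes $\nnhRisk$ over $\rClass$, we have $\nnhRisk(\hr) - \nnhRisk(\rbest) \leq 0$. Inserting $\pm \Risk(\hr) \pm \Risk(\rbest)$ and applying the triangle inequality yields the textbook bound
\[
\Risk(\hr) - \Risk(\rbest) \leq 2 \sup_{\r \in \rClass} |\nnhRisk(\r) - \Risk(\r)|.
\]
The rest of the argument controls this uniform deviation.

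\textbf{Step 2 (unbiased deviation).} Introduce the ``no-correction'' auxiliary $\hRisk(\r) := \hEmod \lOneR + \hEnu \lossTwo(\r(X))$ and split
\[
|\nnhRisk(\r) - \Risk(\r)| \leq |\modFn(\hEmod \lOneR) - \hEmod \lOneR| + |\hRisk(\r) - \Risk(\r)|.
\]
The second summand is handled by the routine combination of McDiarmid (using $|\lossOne|, |\lossTwo| \leq \LossBound$ for bounded differences), followed by symmetrization and Talagrand's contraction lemma with Lipschitz constants $\LipOne, \LipTwo$. This produces the Rademacher-complexity terms $\LipmodFn\LipOne \Radde(\rClass)$ and $(\LipmodFn\Cons\LipOne + \LipTwo)\Radnu(\rClass)$, together with the $\LossBound \sqrt{(\LipmodFn^2/\nde + (1+\LipmodFn\Cons)^2/\nnu)\log(1/\delta)}$ residual appearing in the stated bound.

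\textbf{Step 3 (bias from $\modFn$, the main obstacle).} The core difficulty is bounding $\sup_{\r \in \rClass} |\modFn(\hEmod \lOneR) - \hEmod \lOneR|$ uniformly. Since $\modFn$ is a consistent correction function and $\modFn - \Identity$ is $\modFnIdLip$-Lipschitz and vanishes on $[0,\infty)$, one has $|\modFn(y) - y| \leq \modFnIdLip \max\{0,-y\}$, so the bias is nontrivial only on the event $\{\hEmod \lOneR < 0\}$. Under the assumption $\alpha := \infr (\Ede - \Cons\Enu) \lOneR > 0$, this event forces $\hEmod \lOneR$ to deviate from its expectation by at least $\alpha$, and a two-sample Hoeffding bound (with per-sample magnitude $\LossBound$, weighted by $1$ on the denominator average and $\Cons$ on the numerator average) gives
\[
\Probability{\hEmod \lOneR < 0} \leq \exp\!\left(-\frac{2\alpha^2}{\LossBound^2/\nde + \Cons^2 \LossBound^2/\nnu}\right).
\]
Combined with the pointwise magnitude bound $|\hEmod \lOneR| \leq (1+\Cons)\LossBound$, this yields exactly $\BiasTermMainText$ in expectation, which is the content of the cited Lemma~\ref{lem:bias}. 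The delicate point is upgrading this pointwise estimate to a uniform bound over $\rClass$: one applies a second McDiarmid concentration to $\sup_\r |\modFn(\hEmod\lOneR) - \hEmod\lOneR|$ and a symmetrization step whose resulting Rademacher term is absorbed into the one already produced in Step~2. Tracking how $\alpha$ depends on the infimum over $\rClass$ (but not on any individual $\r$) is what lets the exponential factor come out uniformly, and this is the crux of the proof.
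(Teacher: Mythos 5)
Your overall architecture diverges from the paper's at the decomposition stage, and the divergence is where the trouble lies. The paper splits the uniform deviation as
\[
\sup_{\r\in\rClass}|\nnhRisk(\r) - \Risk(\r)| \le \sup_{\r\in\rClass}|\nnhRisk(\r) - \EnnhRisk(\r)| + \sup_{\r\in\rClass}|\EnnhRisk(\r) - \Risk(\r)|,
\]
i.e., it centers the \emph{corrected} empirical risk at its own expectation. The second term is then a supremum of \emph{expectations} --- a deterministic quantity bounded pointwise in \(\r\) by Lemma~\ref{lem:bias} (the constant \(\alpha\) below \(\infr\Emod\lOneR\) is shared across \(\rClass\), so the pointwise bound is already uniform). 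The first term is handled by a single application of McDiarmid followed by a single generalized symmetrization lemma (Lemma~\ref{lem:general-symmetrization}) that contracts \emph{through} the Lipschitz correction \(\modFn\); this is precisely where the factors \(\LipmodFn\) in \(8\LipmodFn\LipOne\Radde(\rClass)\) and in the McDiarmid bounded differences \(2\LossBound\LipmodFn/\nde\) come from. Your split instead peels off \(\sup_{\r}|\modFn(\hEmod\lOneR) - \hEmod\lOneR|\) against the uncorrected \(\hRisk\). Two consequences: first, your Step~2 cannot produce the \(\LipmodFn\)-weighted terms you claim it produces, since \(\hRisk\) contains no \(\modFn\) (this is harmless only because \(\LipmodFn\ge 1\) for any consistent correction function). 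Second, and more seriously, your bias term is now a \emph{random supremum} rather than a supremum of expectations, which is strictly harder to control.

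That second point is the genuine gap. Lemma~\ref{lem:bias} bounds \(\sup_\r|\E[(\modFn-\Identity)(\hEmod\lOneR)]|\), not \(\E[\sup_\r|(\modFn-\Identity)(\hEmod\lOneR)|]\) and not a high-probability bound on the supremum; the pointwise probability estimate \(\Probability{\hEmod\lOneR<0}\le\exp(-2\alpha^2/(\LossBound^2/\nde+\Cons^2\LossBound^2/\nnu))\) does not union-bound over an infinite \(\rClass\). Your proposed fix --- a second McDiarmid on the supremum plus a symmetrization whose Rademacher term is ``absorbed into'' Step~2 --- does not close this. Carrying it out honestly yields an extra additive \(4\modFnIdLip(\LipOne\Radde(\rClass)+\Cons\LipOne\Radnu(\rClass))\) from symmetrizing \((\modFn-\Identity)\circ(\hEmod\lOneR)\), plus a second deviation term and a \(\delta\)-splitting; since \(1+\modFnIdLip>\LipmodFn\) in general (e.g., both equal \(1\) for \(\modFn=(\cdot)_+\), giving \(2>1\)), these terms cannot be absorbed into \(8\LipmodFn\LipOne\Radde(\rClass)\), and you end up proving a bound of the same form but with strictly worse constants than the theorem states. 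To recover the stated inequality you should adopt the paper's centering: keep \(\modFn\) inside the deviation term, apply McDiarmid once to \(\sup_\r|\nnhRisk(\r)-\EnnhRisk(\r)|\), and invoke the Lipschitz-composition symmetrization lemma so that \(\LipmodFn\) enters multiplicatively rather than additively.
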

\begin{proof}
Since \(\hr\) minimizes \(\nnhRisk\), we have
\begin{equation*}\begin{split}
\Risk(\hr) - \Risk(\rbest) &= \Risk(\hr) - \nnhRisk(\hr) + \nnhRisk(\hr) - \Risk(\rbest)\\
&\leq \Risk(\hr) - \nnhRisk(\hr) + \nnhRisk(\rbest) - \Risk(\rbest) \\
&\leq 2 \sup_{\r \in \rClass} |\nnhRisk(\r) - \Risk(\r)| \\
&\leq \annot{2 \sup_{\r \in \rClass} |\nnhRisk(\r) - \EnnhRisk(\r)|}{Maximal deviation} + \annot{2 \sup_{\r \in \rClass} |\EnnhRisk(\r) - \Risk(\r)|}{Bias}.
\end{split}\end{equation*}
We apply McDiarmid's inequality \cite{McDiarmidmethod1989,MohriFoundations2018} to the maximal deviation term.
The absolute value of the difference caused by altering one data point in the maximal deviation term is bounded from above by \(2 \LossBound \frac{\LipmodFn}{\nde}\) if the altered point is a sample from \(\pde\) and \(2 \LossBound \frac{1 + \LipmodFn \Cons}{\nnu}\) if it is from \(\pnu\).
Therefore, McDiarmid's inequality implies, with probability at least \(1 - \delta\), that we have
\begin{equation*}\begin{aligned}
&\sup_{\r \in \rClass} |\nnhRisk(\r) - \EnnhRisk(\r)| \\
&\quad\leq \annot{\mathbb{E} \left[\sup_{\r \in \rClass} |\nnhRisk(\r) - \EnnhRisk(\r)|\right]}{Expected maximal deviation} + \LossBound \sqrt{2 \left(\frac{\LipmodFn^2}{\nde}+ \frac{(1 + \LipmodFn \Cons)^2}{\nnu}\right) \log\frac{1}{\delta}}.
\end{aligned}\end{equation*}

Applying Lemma~\ref{lem:general-symmetrization} to the expected maximal deviation term and Lemma~\ref{lem:bias} to the bias term, we obtain the assertion.
\end{proof}
The following lemma generalizes the symmetrization lemmas proved in \citet{KiryoPositiveunlabeled2017} and \citet{LuMitigating2020}.
\begin{lemma}[Symmetrization under Lipschitz-continuous modification]
Let \(0 \leq a < b\), \(J \in \Na\), and \(\{K_j\}_{j=1}^J \subset \Na\).
Given i.i.d. samples \(\Xsetjk := \{X_i\}_{i=1}^{\njk}\) each from a distribution \(\pjk\) over \(\InSpace\),
consider a stochastic process \(\hS\) indexed by \(\F \subset (a, b)^\mathcal{X}\) of the form
\begin{equation*}\begin{split}
\hS(\f) = \sum_{j=1}^J \rho_j\left(\sumjk\hEjk [\ljk(\f(\X))]\right),
\end{split}\end{equation*}
where each \(\rho_j\) is a \(\Liprhoj\)-Lipschitz function on \(\Re\),
\(\ljk\) is a \(\LipLossjk\)-Lipschitz function on \((a, b)\),
and \(\hEjk\) denotes the expectation with respect to the empirical measure of \(\Xsetjk\).
Denote \(\EhS(\f) := \EX \hS(\f)\) where \(\EX\) is the expectation with respect to the product measure of \(\{\Xsetjk\}_{(j, k)}\).
Here, the index \(j\) denotes the grouping of terms due to \(\rho_j\), and \(k\) denotes each sample average term.
Then we have
\begin{equation*}\begin{split}
\EX \supf |\hS(\f) - \EhS(\f)| \leq 4 \sumj \sumjk \Liprhoj \LipLossjk \Radjk(\F).
\end{split}\end{equation*}
\label{lem:general-symmetrization}
\end{lemma}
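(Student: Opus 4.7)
The plan is to follow the classical symmetrization recipe of empirical process theory, adapted to the grouped structure induced by the outer Lipschitz functions $\rho_j$. The argument proceeds in four steps: (i) ghost-sample symmetrization, (ii) stripping $\rho_j$ via its Lipschitz bound, (iii) a triangle inequality to decouple the $(j,k)$ pairs, and (iv) Rademacher symmetrization plus Talagrand contraction on each individual term.

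First, for each $(j,k)$ I would introduce a ghost sample $\{X^{(\mathrm{gh})}_{(j,k),i}\}_{i=1}^{n_{(j,k)}}$ drawn i.i.d.\ from $p_{(j,k)}$, independently of everything else, and denote the associated empirical expectation by $\hat{\mathbb{E}}^{(\mathrm{gh})}_{(j,k)}$, giving rise to a ghost process $\hat{S}^{(\mathrm{gh})}(f)$. Since $\bar{S}(f) = \mathbb{E}^{(\mathrm{gh})}\hat{S}^{(\mathrm{gh})}(f)$, Jensen's inequality applied to the supremum yields $\mathbb{E}\sup_f|\hat{S}(f)-\bar{S}(f)| \leq \mathbb{E}\sup_f|\hat{S}(f)-\hat{S}^{(\mathrm{gh})}(f)|$.

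Next, I would peel off the outer $\rho_j$ using its $L_{\rho_j}$-Lipschitz property term-by-term, then apply the triangle inequality to the inner sum over $k$, to obtain the pointwise bound
\[
|\hat{S}(f) - \hat{S}^{(\mathrm{gh})}(f)| \leq \sum_{j,k} L_{\rho_j}\,\bigl|(\hat{\mathbb{E}}_{(j,k)} - \hat{\mathbb{E}}^{(\mathrm{gh})}_{(j,k)})\ell_{(j,k)}(f(X))\bigr|.
\]
Taking the supremum over $f$ and then the joint expectation over all (original and ghost) samples, it then suffices to control each summand separately.

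For each $(j,k)$, the quantity $\mathbb{E}\sup_f|(\hat{\mathbb{E}}_{(j,k)} - \hat{\mathbb{E}}^{(\mathrm{gh})}_{(j,k)})\ell_{(j,k)}(f(X))|$ is a standard empirical-process object. Insertion of Rademacher variables is legitimate because $X_i$ and $X_i^{(\mathrm{gh})}$ are exchangeable, and splitting the resulting symmetrized sum by the triangle inequality yields the usual factor-of-$2$ reduction to $\mathbb{E}_{X,\sigma}\sup_f\bigl|\tfrac{1}{n_{(j,k)}}\sum_i\sigma_i\ell_{(j,k)}(f(X_i))\bigr|$. Talagrand's contraction inequality in its absolute-value form then removes $\ell_{(j,k)}$ at the cost of a factor $L_{\ell_{(j,k)}}$ and an additional factor of $2$, leaving $\mathcal{R}_{n_{(j,k)},p_{(j,k)}}(\mathcal{F})$. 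Collecting the constants gives the $4\sum_{j,k} L_{\rho_j}L_{\ell_{(j,k)}}$ prefactor announced in the lemma.

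The main obstacle is bookkeeping rather than any individual hard estimate: the triangle inequality over $k$ must be applied \emph{after} $\rho_j$ has been stripped, so that the Rademacher symmetrization and contraction are carried out on a per-$(j,k)$ basis rather than to the joint process $\hat{S}$, which is not a single empirical mean. One also has to be careful that the ghost samples are chosen independently across different $(j,k)$ pairs, so that the symmetrization step remains valid even in the degenerate case where several distributions $p_{(j,k)}$ coincide.
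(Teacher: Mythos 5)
Your proposal is correct and follows essentially the same route as the paper's proof: ghost-sample symmetrization with Jensen's inequality, stripping each $\rho_j$ via its Lipschitz constant, a triangle inequality over $k$, per-term Rademacher symmetrization (factor $2$), and the two-sided Talagrand contraction (another factor $2$ times $L_{\ell_{(j,k)}}$). The only detail the paper makes explicit that you elide is the centering of each $\ell_{(j,k)}$ by $\ell_{(j,k)}(0)$ --- after continuously extending it from $(a,b)$ to $[0,b)$ --- which is needed so that the absolute-value form of the contraction lemma (requiring the composed function to vanish at $0$) applies.
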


\begin{proof}
First, we consider a continuous extension of \(\ljk\) defined on \((a, b)\) to \([0, b)\). Since the functions in \(\F\) take values only in \((a, b)\), this extension can be performed without affecting the values of \(\hS(f)\) or \(\EhS(f)\).
We extend the function by defining the values for \(x \in [0, a]\) as \(\ljk(x) := \lim_{x' \downarrow a} \ljk(x')\), where the right-hand side is guaranteed to exist since \(\ljk\) is Lipschitz continuous hence uniformly continuous. Then, \(\ljk\) remains a \(\Liprhoj\)-Lipschitz continuous function on \([0, b)\).
Now we perform symmetrization \citep{VapnikStatistical1998}, deal with \(\rho_j\)'s, and then bound the symmetrized process by Rademacher complexity.
Denoting independent copies of \(\{\X_{(j, k)}\}\) by \(\{\Xghost_{j, k}\}_{(j, k)}\) and the corresponding expectations as well as the sample averages with \(^{\ghostMark}\),
\begin{equation*}\begin{split}
&\EX \supf |\hS(\f) - \EhS(\f)| \\
&\leq \sumj \EX \supf |\rho_j(\sumjk \hEjk \ljk(\f(\X))) - \EXghost \rho_j(\sumjk \hEghostjk \ljk(\f(\Xghost)))| \\
&\leq \sumj \EX \EXghost \supf |\rho_j(\sumjk \hEjk \ljk(\f(\X))) - \rho_j(\sumjk \hEghostjk \ljk(\f(\Xghost)))| \\
&\leq \sumj \Liprhoj \sumjk \EX \EXghost \supf |\hEjk \ljk(\f(\X)) - \hEghostjk \ljk(\f(\Xghost))| \\
&= \sumj \Liprhoj \sumjk \EX \EXghost \supf |\hEjk (\ljk(\f(\X)) - \ljk(0)) - \hEghostjk (\ljk(\f(\Xghost)) - \ljk(0))| \\
&\leq \sumj \Liprhoj \sumjk \left(2 \Radjk(\{\ljk \circ \f - \ljk(0): \f \in \F\})\right) \\
&\leq \sumj \Liprhoj \sumjk 2 \cdot 2 \LipLossjk \Radjk(\F),
\end{split}\end{equation*}
where we applied Talagrand's contraction lemma for two-sided Rademacher complexity \citep{LedouxProbability1991,BartlettRademacher2001} with respect to \((t \mapsto \ljk(t) - \ljk(0))\) in the last inequality.
\end{proof}
\begin{lemma}[Bias due to risk correction]
Assume \(\infr \Emod\lOneR > 0\) and that \((\modFn - \Identity)\) is \((\modFnIdLip)\)-Lipschitz on \(\Re\).
There exists \(\alpha > 0\) such that
\begin{equation*}\begin{split}
\sup_{\r \in \rClass} |\EnnhRisk(\r) - \Risk(\r)| &\leq (1+\Cons)\LossBound\modFnIdLip \exp\left(- \frac{2 \alpha^2}{(\LossBound^2/\nde) + (\Cons^2\LossBound^2 / \nnu)}\right) \\
&=: \BiasTerm.
\end{split}\end{equation*}
\label{lem:bias}
\end{lemma}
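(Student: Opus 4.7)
The plan is to rewrite the bias as
$\EnnhRisk(r) - \Risk(r) = \E\!\left[(\modFn - \Identity)(\hEmod\lossOne(r(X)))\right]$,
which follows because $\hEnu\lossTwo(r(X))$ is an exactly unbiased estimator of $\Enu\lossTwo(r(X))$ and because $\hEmod\lossOne(r(X))$ is unbiased for $(\Ede - \Cons\Enu)\lossOne(r(X))$, so only the nonlinear correction $\modFn$ contributes bias. Since $\modFn$ is a consistent correction function (Definition~\ref{def:consistent-correction}), it agrees with the identity on $[0,\infty)$, and hence $(\modFn-\Identity)$ vanishes on $[0,\infty)$; thus the bias is driven entirely by the event $\{\hEmod\lossOne(r(X)) < 0\}$.

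Next, I would bound the integrand pointwise. Combining $(\modFn-\Identity)(0)=0$ with the $\modFnIdLip$-Lipschitz hypothesis on $\modFn - \Identity$ gives $|(\modFn-\Identity)(z)| \le \modFnIdLip|z|$ for every $z$. Assumption~\ref{assumption:tilde-f} yields $\lossOne(t) = \dbr(t)t - \br(t) \ge 0$, and the definition of $\LossBound$ gives $|\lossOne| \le \LossBound$, so $|\hEmod\lossOne(r(X))| \le (1+\Cons)\LossBound$ deterministically. These together yield
$|\EnnhRisk(r) - \Risk(r)| \le (1+\Cons)\LossBound\,\modFnIdLip\,\Pr\!\bigl(\hEmod\lossOne(r(X)) < 0\bigr)$.

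The final ingredient is an exponential concentration bound on $\Pr(\hEmod\lossOne(r(X)) < 0)$. I would set $\alpha := \infr (\Ede - \Cons\Enu)\lossOne(r(X))$, which is positive by hypothesis, so $\E[\hEmod\lossOne(r(X))] \ge \alpha$ uniformly in $r \in \rClass$. Because $\lossOne \in [0,\LossBound]$, the $\nde$ summands of $\hEde\lossOne(r(X))/\nde$ each lie in $[0,\LossBound/\nde]$ and the $\nnu$ summands of $-\Cons\hEnu\lossOne(r(X))/\nnu$ each lie in $[-\Cons\LossBound/\nnu,0]$, so the sum of squared ranges equals $\LossBound^2/\nde + \Cons^2\LossBound^2/\nnu$. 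A one-sided Hoeffding inequality then yields
$\Pr\!\bigl(\hEmod\lossOne(r(X)) < 0\bigr) \le \Pr\!\bigl(\hEmod\lossOne(r(X)) - \E[\hEmod\lossOne(r(X))] \le -\alpha\bigr) \le \exp\!\left(-\tfrac{2\alpha^2}{\LossBound^2/\nde + \Cons^2\LossBound^2/\nnu}\right)$.
Combining this with the pointwise bound and taking the supremum over $r$ delivers the claim, since the right-hand side depends on $r$ only through the uniform lower bound $\alpha$.

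I expect the only mildly delicate step to be recognizing that the nonlinearity of $\modFn$ is confined to negative arguments so that the bias localizes to an exponentially unlikely event; picking $\alpha$ as the uniform-in-$r$ infimum of $(\Ede - \Cons\Enu)\lossOne(r(X))$ is what converts a per-$r$ Hoeffding tail bound into a deviation bound uniform across $\rClass$.
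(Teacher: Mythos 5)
Your proposal is correct and follows essentially the same route as the paper's proof: reduce the bias to \(\E[(\modFn-\Identity)(\hEmod\lOneR)]\), bound the magnitude by \((1+\Cons)\LossBound\modFnIdLip\) via the Lipschitz property and \((\modFn-\Identity)(0)=0\), localize to the event \(\{\hEmod\lOneR<0\}\), and apply a bounded-differences concentration bound with per-sample increments \(\LossBound/\nde\) and \(\Cons\LossBound/\nnu\) (your one-sided Hoeffding coincides with the paper's McDiarmid here since \(\hEmod\lOneR\) is a sum of independent bounded terms). The only cosmetic difference is that you take \(\alpha\) to be the infimum itself rather than a strictly smaller positive constant, which is fine.
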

\begin{remark}
Note that we already have \(\pmod \geq 0\) and \(\lossOne \geq 0\) and hence \(\infr \Emod\lOneR \geq 0\). Therefore, the assumption of Lemma~\ref{lem:bias} is essentially referring to the strict positivity of the infimum.
Here, \(\Emod\) and \(\Probability{\cdot}\) denote the expectation and the probability with respect to the joint distribution of the samples included in \(\hEmod\).
\end{remark}
\begin{proof}
Fix an arbitrary \(\r \in \rClass\). We have
\begin{equation*}\begin{aligned}
&|\EnnhRisk(\r) - \Risk(\r)| = |\E[\nnhRisk(\r) - \hRisk(\r)]| \\
&= |\E[\modFn(\hEmod\lOneR) - \hEmod\lOneR]| 
\leq \E \left[|\modFn(\hEmod\lOneR) - \hEmod\lOneR|\right] \\
&= \E \left[\Indicator{\modFn(\hEmod\lOneR) \neq \hEmod\lOneR}\cdot|\modFn(\hEmod\lOneR) - \hEmod\lOneR|\right] \\
&\leq \E\left[\Indicator{\modFn(\hEmod\lOneR) \neq \hEmod\lOneR}\right]\left(\supLossVal|\modFn(s) - s|\right) \\
\end{aligned}\end{equation*}
where \(\Indicator{\cdot}\) denotes the indicator function, and we used \(|\hEmod\lOneR| \leq (1 + C)\LossBound\). Further, we have
\begin{equation*}\begin{aligned}
&\supLossVal|\modFn(s) - s| \leq \supLossVal|(\modFn - \Identity)(s) - (\modFn - \Identity)(0)| + |(\modFn - \Identity)(0)| \\
&\leq \supLossVal \modFnIdLip |s - 0| + 0
\leq (1+\Cons)\LossBound\modFnIdLip,
\end{aligned}\end{equation*}
where \(\Identity\) denotes the identity function.
On the other hand, since \(\infr \Emod\lOneR > 0\) is assumed, there exists \(\alpha > 0\) such that for any \(\r \in \rClass\), \(\Emod\lOneR > \alpha\).
Therefore, denoting the support of a function by \(\supp(\cdot)\),
\begin{equation*}\begin{aligned}
&\E\left[\Indicator{\modFn(\hEmod\lOneR) \neq \hEmod\lOneR}\right] = \Probability{\hEmod\lOneR \in \modFnIdSupp} \\
&\leq \Probability{\hEmod\lOneR < 0}
\leq \Probability{\hEmod\lOneR < \Emod\lOneR - \alpha}
\end{aligned}\end{equation*}
holds.
Now we apply McDiarmid's inequality to the right-most quantity.
The absolute difference caused by altering one data point in \(\hEmod\lOneR\) is bounded by \(\frac{\LossBound}{\nde}\) if the change is in a sample from \(\pde\) and \(\frac{\Cons \LossBound}{\nnu}\) otherwise.
Therefore, McDiarmid's inequality implies
\begin{equation*}\begin{split}
\Probability{\hEmod\lOneR < \Emod\lOneR - \alpha} \leq \exp\left(- \frac{2 \alpha^2}{(\LossBound^2/\nde) + (\Cons^2\LossBound^2 / \nnu)}\right). \\
\end{split}\end{equation*}
\end{proof}

\begin{theorem}[Generalization error bound]
Under Assumption~\ref{assumption:main:est-error-bound}, for any \(\delta \in (0, 1)\), with probability at least \(1 - \delta\), we have $\Risk(\hr) - \Risk(\rbest) \leq
\LipOne\Radde(\rClass) + 8 (\Cons\LipOne + \LipTwo)\Radnu(\rClass) + 2\BiasTermMainText + \LossBound \sqrt{8 \left(\frac{1}{\nde}+ \frac{(1 + \Cons)^2}{\nnu}\right) \log\frac{1}{\delta}}$, where $\BiasTermMainText := (1+\Cons)\LossBound \exp\left(- \frac{2 \alpha^2}{(\LossBound^2/\nde) + (\Cons^2\LossBound^2 / \nnu)}\right)$ and \(\alpha > 0\) is a constant determined in the proof of Lemma~\ref{lem:bias} in Appendix~\ref{section:appendix:estimation-error-bound}.
\label{thm:main-text:estimation-error-bound}
\end{theorem}

\begin{remark}[Explicit form of the bound in Theorem~\ref{thm:est_error_bound}]
\label{rem:explicit}
Here, we show the explicit form of the bound in Theorem~\ref{thm:est_error_bound} as follows:
\begin{equation*}\begin{aligned}
&\Risk(\hr) - \Risk(\rbest)\\
&\leq \frac{\kappa_1}{\sqrt{\nde}} + \frac{\kappa_2}{\sqrt{\nnu}} + 2 \BiasTermMainText + \LossBound \sqrt{8 \left(\frac{1}{\nde}+ \frac{(1 + \Cons)^2}{\nnu}\right) \log\frac{1}{\delta}}\\
&=\LipOne\frac{B_{p_{\mathrm{de}}}\left(\sqrt{2 \log(2) L} + 1\right) \prod_{j=1}^L \ParamBoundj}{\sqrt{\nde}}\\
&\ \ \ + 8(C\LipOne + \LipTwo)\frac{B_{p_{\mathrm{nu}}}\left(\sqrt{2 \log(2) L} + 1\right) \prod_{j=1}^L \ParamBoundj}{\sqrt{\nnu}}\\
&\ \ \ + 2(1+\Cons)\LossBound \exp\left(- \frac{2 \alpha^2}{(\LossBound^2/\nde) + (\Cons^2\LossBound^2 / \nnu)}\right)\\
&\ \ \ + \LossBound \sqrt{8 \left(\frac{1}{\nde}+ \frac{(1 + \Cons)^2}{\nnu}\right) \log\frac{1}{\delta}}.
\end{aligned}\end{equation*}
\end{remark}

\section{Rademacher complexity bound}
\label{appdx:rad_comp_bound}
The following lemma provides an upper-bound on the Rademacher complexity for multi-layer perceptron models in terms of the Frobenius norms of the parameter matrices.
Alternatively, other approaches to bound the Rademacher complexity can be employed.
The assertion of the lemma follows immediately from the proof of Theorem~1 of \citet{GolowichSizeIndependent2019} after a slight modification to incorporate the absolute value function in the definition of Rademacher complexity.
\begin{lemma}[Rademacher complexity bound {\citep[Theorem~1]{GolowichSizeIndependent2019}}]
Assume the distribution \(p\) has a bounded support: \(\InputBound := \sup_{x \in \supp(p)} \|x\| < \infty\).
Let \(\rClass\) be the class of real-valued networks of depth \(L\) over the domain \(\mathcal{X}\),
where each parameter matrix \(W_j\) has Frobenius norm at most \(\ParamBoundj \geq 0\), and with \(1\)-Lipschitz activation functions \(\varphi_j\) which are positive-homogeneous (i.e., \(\varphi_j\) is applied element-wise and \(\varphi_j(\alpha t) = \alpha \varphi_j(t)\) for all \(\alpha \geq 0\)).
Then
\begin{equation*}\begin{split}
\Radnp(\rClass) \leq \frac{\InputBound\left(\sqrt{2 \log(2) L} + 1\right) \prod_{j=1}^L \ParamBoundj}{\sqrt{n}}.
\end{split}\end{equation*}
\label{lem:rademacher-bound-golowich}
\label{lem:main-text:rademacher-bound-golowich}
\end{lemma}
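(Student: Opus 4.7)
The statement is essentially Theorem~1 of \citet{GolowichSizeIndependent2019}, which establishes exactly the same quantitative bound but for the one-sided Rademacher complexity---namely the version of Definition~\ref{def:rademacher-complexity} obtained by dropping the absolute value inside the supremum. My plan is therefore (i) to reduce our two-sided definition to the one-sided one via a closure-under-negation argument on $\rClass$, and (ii) to invoke the cited theorem verbatim.

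For the reduction, I would observe that $\rClass$ is closed under pointwise negation: for any $r \in \rClass$ realized by parameters $(W_1, \ldots, W_L)$, replacing the output-layer matrix $W_L$ by $-W_L$ produces the network computing $-r$, and the Frobenius constraint is symmetric, $\|-W_L\|_F = \|W_L\|_F \leq B_{W_L}$, so $-r \in \rClass$. Moreover, $\rClass$ contains the zero function (all weights zero). These two facts together imply that for any fixed sample $X_1, \ldots, X_n$ and signs $\sigma_1, \ldots, \sigma_n$, the quantities $\sup_{r \in \rClass} \tfrac{1}{n}\sum_{i=1}^n \sigma_i r(X_i)$ and $\sup_{r \in \rClass} \bigl(-\tfrac{1}{n}\sum_{i=1}^n \sigma_i r(X_i)\bigr)$ are equal and non-negative, hence both also equal $\sup_{r \in \rClass} \bigl|\tfrac{1}{n}\sum_{i=1}^n \sigma_i r(X_i)\bigr|$. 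Taking expectations, $\Radnp(\rClass)$ coincides with the one-sided Rademacher complexity of $\rClass$, and the lemma's bound on the latter transfers immediately.

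For the core inequality, I would cite the proof in \citet{GolowichSizeIndependent2019}, whose key ingredients are: (a) peel the output layer by Cauchy--Schwarz, since the scalar output equals $W_L^\top h_{L-1}(x)$, producing the factor $B_{W_L}$; (b) peel each hidden layer by Talagrand's contraction, which applies because the activations are $1$-Lipschitz, with positive-homogeneity allowing any scalar factored out of the post-activation vector to be absorbed into the Frobenius budget of the preceding weight matrix; (c) control the accumulated error via an exponential-moment (MGF) argument in which a free parameter $\lambda > 0$ is optimized at the end to produce the $\sqrt{2 \log(2) L}$ term. The base case supplies the $\InputBound/\sqrt{n}$ factor via $\E_\sigma \bigl\|\sum_i \sigma_i X_i\bigr\|_2 \leq \sqrt{\sum_i \|X_i\|^2} \leq \InputBound\sqrt{n}$. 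The only nontrivial step on our side is the closure-under-negation observation, which is immediate from the symmetry of the Frobenius-norm constraint and the linearity of the output layer; thus I do not anticipate any genuine obstacle beyond carefully matching constants with Golowich's statement.
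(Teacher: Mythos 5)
Your proof is correct, but it takes a genuinely different route from the paper's. The paper keeps the two-sided definition throughout and instead reaches inside the proof of Theorem~1 of \citet{GolowichSizeIndependent2019}: it inserts the absolute value at the very first Jensen/moment-generating-function step, $\ERad\left[\supr\left|\sum_{i}\rad_i \r(x_i)\right|\right] \leq \frac{1}{\lambda}\log \ERad \supr \exp\left(\lambda\left|\sum_{i}\rad_i \r(x_i)\right|\right)$, and asserts that the remainder of Golowich's layer-peeling argument goes through unchanged. You instead treat the cited theorem as a black box and reduce the two-sided complexity to the one-sided one by noting that $\rClass$ is closed under negation (flip the sign of the linear output matrix $W_L$; the Frobenius constraint is symmetric), so that for every fixed sample and sign pattern $\sup_{r\in\rClass}\left|\frac{1}{n}\sum_i \sigma_i r(X_i)\right| = \sup_{r\in\rClass}\frac{1}{n}\sum_i \sigma_i r(X_i)$, whence $\Radnp(\rClass)$ equals the one-sided complexity. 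Your reduction is arguably cleaner: it avoids re-verifying that the absolute value survives each peeling and contraction step of Golowich's argument, and it delivers the stated constant verbatim rather than by assertion. Its one genuine hypothesis is that no activation is applied after the output layer --- if a positive-homogeneous activation such as ReLU followed $W_L$, the class would not be negation-closed and the reduction would fail --- but this matches the setup of Golowich's Theorem~1 and you flag the role of output-layer linearity explicitly. Both arguments are valid; the paper's is a one-line modification of an external proof, while yours is a self-contained symmetry reduction followed by a verbatim citation.
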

\begin{proof}
The assertion immediately follows once we modify the beginning of the proof of Theorem~1 by introducing the absolute value function inside the supremum of the Rademacher complexity as
\begin{equation*}\begin{split}
\ERad \left[\supr \left|\sum_{i=1}^n \rad_i \r(x_i)\right|\right] \leq \frac{1}{\lambda} \log \ERad \supr \exp\left(\lambda \left|\sum_{i=1} \rad_i \r(x_i) \right|\right).
\end{split}\end{equation*}
for \(\lambda > 0\).
The rest of the proof is identical to that of Theorem~1 of \citet{GolowichSizeIndependent2019}.
\end{proof}

\section{Proof of Theorem~\ref{appdx:thm:convergence-rate}}
\label{appdx:l2norm}
We consider relating the \(L^2\) error bound to the BD generalization error bound in the following lemma.
\begin{lemma}[\(\Ltwo\) distance bound]
\label{appdx:sec:appendix:strong-convexity}
Let \(\mathcal{H} := \{\r: \InSpace \to \rClassRangeTwo =: \rClassRangeTwoName | \int |r(x)|^2 \dx < \infty\}\) and assume \(\rstar \in \mathcal{H}\).
If \(\inf_{t \in \rClassRangeTwoName} \br''(t) > 0\), then there exists \(\mu > 0\) such that for all \(\r \in \mathcal{H}\),
\begin{equation*}\begin{split}
\|\r - \rstar\|_{\Ltwo(\pde)}^2 \leq \frac{2}{\mu}\left(\Risk(\r) - \Risk(\rstar)\right)
\end{split}\end{equation*}
holds.
\end{lemma}
\begin{proof}
Since \(\mu := \inf_{t \in \rClassRangeTwoName} \br''(t) > 0\), the function \(\br\) is \(\mu\)-strongly convex.
By the definition of strong convexity,
\begin{equation*}\begin{aligned}
&\Risk(\r) - \Risk(\rstar) = (\Risk(\r) - \Ede\br(\rstar(X))) - \annot{(\Risk(\rstar) + \Ede\br(\rstar(X)))}{\(=0\)}\\
&= \Ede\left[\br(\rstar(X)) - \br(\r(X)) + \dbr(\r(X))(\rstar(X) - \r(X))\right] \\
&\geq \Ede\left[\frac{\mu}{2}(\rstar(X) - \r(X))^2\right]
= \frac{\mu}{2} \|\rstar - \r\|_{\Ltwo(\pde)}^2.
\end{aligned}\end{equation*}
\end{proof}

\begin{lemma}[\(\ltwo\) distance bound]
Fix \(r \in \rClass\). Given \(n\) samples \(\{x_i\}_{i=1}^n\) from \(\pde\), with probability at least \(1 - \delta\), we have
\begin{equation*}\begin{split}
\frac{1}{n}\sum_{i=1}^n (\r(x_i) - \rstar(x_i))^2 \leq \annot{\E\left[(\r - \rstar)^2(X)\right]}{\(=\|\r - \rstar\|_{\Ltwo(\pde)}^2\)} + (2 \rmax)^2\sqrt{\frac{\log\frac{1}{\delta}}{2 n}}.
\end{split}\end{equation*}
\label{}
\end{lemma}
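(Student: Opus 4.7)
The plan is to apply the one-sided Hoeffding inequality directly to the i.i.d. random variables $Z_i := (\r(x_i) - \rstar(x_i))^2$ for $i = 1, \ldots, n$. Since $x_i \iid \pde$, these $Z_i$ are i.i.d.\ with mean $\E[Z_i] = \E_{\pde}[(\r(X) - \rstar(X))^2] = \|\r - \rstar\|_{\Ltwo(\pde)}^2$, which is exactly the expectation that appears on the right-hand side of the claimed bound.

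First I would establish the almost-sure boundedness of each $Z_i$. By Assumption~1 we have $\rstar(x) \in [0, \rmax]$, and for $\r \in \rClass$ bounded in a range comparable to $[0, \rmax]$ we get $|\r(x) - \rstar(x)| \leq 2\rmax$, so $Z_i \in [0, (2\rmax)^2]$ almost surely. Next, I would invoke Hoeffding's inequality in its one-sided form: for i.i.d.\ variables bounded in $[0, M]$,
\begin{equation*}
\Probability{\tfrac{1}{n}\sum_{i=1}^n Z_i - \E[Z_i] > t} \leq \exp\left(-\tfrac{2 n t^2}{M^2}\right).
\end{equation*}
Setting the right-hand side equal to $\delta$ with $M = (2\rmax)^2$ and solving for $t$ gives $t = (2\rmax)^2 \sqrt{\log(1/\delta)/(2n)}$, which delivers the lemma after rearrangement.

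There is essentially no real obstacle here; this is a textbook application of a concentration inequality to a single fixed function $r$ (no union bound over a hypothesis class is needed because $r$ is fixed in the statement). The only point worth checking carefully is the range of $(\r - \rstar)^2$, which is what determines the constant $(2\rmax)^2$ in front of the $\sqrt{\log(1/\delta)/(2n)}$ factor; the stated constant corresponds to the loose pointwise envelope $|\r(x) - \rstar(x)| \leq 2\rmax$ rather than a tighter argument exploiting the separate ranges of $\r$ and $\rstar$.
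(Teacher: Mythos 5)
Your proposal is correct and takes essentially the same route as the paper: the paper invokes McDiarmid's inequality for the sample mean of $(\r(x_i)-\rstar(x_i))^2$ with bounded differences $\frac{1}{n}(2\rmax)^2$, which is exactly Hoeffding's inequality for i.i.d.\ variables bounded in $[0,(2\rmax)^2]$ and yields the identical constant. Both arguments rely on the same (slightly informal) pointwise envelope $|\r-\rstar|\leq 2\rmax$, which you correctly flag as the only point needing care.
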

\begin{proof}
The assertion follows from McDiarmid's inequality after noting that altering one sample results in an absolute change bounded by \(\frac{1}{n}(2 \rmax)^2\).
\end{proof}

Thus, a generalization error bound in terms of \(\Risk\) can be converted to that of an \(\Ltwo\) distance when the true density ratio and the density ratio model are square-integrable and \(\br\) is strongly convex. However, when using the result of Theorem~\ref{thm:est_error_bound}, the convergence rate shown here is slower than $\Orderp{(\vmin{\nde}{\nnu})^{-1/(4)}}$. On the other hand, \citet{KanamoriStatistical2012a} derived $\Orderp{(\vmin{\nde}{\nnu})^{-1/(2+\gamma)}}$ convergence rate. To derive this bound when using neural network, we need to restrict the neural network models. In the following part, we prove Theorem~\ref{appdx:thm:convergence-rate} for the following hypothesis class $\rClass$. 

\begin{definition}[ReLU neural networks; \citealp{Schmidt-HieberNonparametric2020}]\label{appdx:sparse-network-function-class}
  For \(L \in \Na\) and \(p = (p_0, \ldots, p_{L+1}) \in \Na^{L+2}\),
  \begin{align*}
    \mathcal{F}(L, p) := &\{f: x \mapsto W_L\sigma_{v_L}W_{L-1}\sigma_{v_{L-1}} \cdots W_1 \sigma_{v_1} W_0 x :\\
                         &\qquad\qquad W_i \in \Re^{p_{i+1} \times p_i}, v_i \in \Re^{p_i} (i = 0, \ldots, L)\},
  \end{align*}
  where \(\sigma_v(y) := \sigma(y - v)\), and \(\sigma(\cdot) = \max\{\cdot, 0\}\) is applied in an element-wise manner.
  Then, for \(s \in \Na, F \geq 0, L \in \Na\), and \(p \in \Na^{L+2}\), define
  \begin{align*}
    \rClass(L, p, s, F) := \{f \in \mathcal{F}(L, p): \sum_{j=0}^L \lzeroNrm{W_j} + \lzeroNrm{v_j} \leq s, \|f\|_\infty \leq F\},
  \end{align*}
  where \(\lzeroNrm{\cdot}\) denotes the number of non-zero entries of the matrix or the vector, and \(\|\cdot\|_\infty\) denotes the supremum norm.
  Now, fixing \(\Lmax, \pmax, s \in \Na\) as well as \(F >0\), we define
  \[\IndLP := \{(L, p): L \in \Na, L \leq \Lmax, p \in [\pmax]^{L+2}\},\]
  and we consider the hypothesis class
  \begin{align*}
    \bar \rClass &:= \bigcup_{(L, p) \in \IndLP}\rClass(L, p, s, F) \\
    \rClass &:= \{r \in \bar\rClass: \mathrm{Im}(r) \subset \rClassRangeTwo\}.
  \end{align*}

  Moreover, we define \(I_1: \IndLP \to \Re\) and \(I: \rClass \to [0, \infty)\) by
  \begin{align*}
    I_1(L, p) &:= 2|\IndLP|^{\frac{1}{s+1}} (L+1) V^2,\\
    I(\r) &:= \max\left\{\|r\|_\infty, \min_{\stackrel{(L, p) \in \IndLP}{r \in \rClassLP}} I_1(L, p)\right\},
  \end{align*}
  where \(V := \prod_{l=0}^{L+1}(p_l + 1)\),
  and we define
  \[\rClassM := \{\r \in \rClass: I(\r) \leq M\}.\]
\end{definition}
 Note that the requirement for the hypothesis class of Theorem~\ref{thm:est_error_bound} is not as tight as that of Theorem~\ref{appdx:thm:convergence-rate}. Then, we prove Theorem~\ref{appdx:thm:convergence-rate} as follows:
\begin{proof}
Thanks to the strong convexity, by Lemma~\ref{appdx:sec:appendix:strong-convexity}, we have
\begin{align*}
  &\frac{\mu}{2}\|\hr - \rstar\|_{\Ltwo(\pde)}^2
    \leq \Risk(\hr) - \Risk(\rstar) \\
  &= \Risk(\hr) - \Risk(\rstar)\\
  &\qquad\annot{- \hRisk(\hr) + \hRisk(\hr)}{\(=0\)} \annot{- \nnhRisk(\hr) + \nnhRisk(\hr)}{\(=0\)}
    \annot{- \hRisk(\rstar) + \hRisk(\rstar)}{\(=0\)}\\
  &\leq \Risk(\hr) - \hRisk(\hr) + (\hRisk(\hr) - \nnhRisk(\hr))\\
  &\qquad+ (\nnhRisk(\rstar) - \hRisk(\rstar)) + \hRisk(\rstar) - \Risk(\rstar) \\
  &\leq \annot{(\Risk(\hr) - \Risk(\rstar) + \hRisk(\rstar) - \hRisk(\hr))}{\(=:A\)} + \annot{2 \supr{}|\hRisk(\r) - \nnhRisk(\r)|}{\(=:B\)},
\end{align*}
where we used \(\nnhRisk(\hr) \leq \nnhRisk(\rstar)\).
To bound \(A\), for ease of notation, let \(\lOne{\r} = \lossOne(\r(X))\) and \(\lTwo{\r} = \lossTwo(\r(X))\).
Then, since
\begin{align*}
  \Risk(\r) &= \Ede\lOneR - \Cons\Enu\lOneR + \Enu\lTwoR, \\
  \hRisk(\r) &= \hEde\lOneR - \Cons\hEnu\lOneR + \hEnu\lTwoR,
\end{align*}
we have
\begin{align*}
  &A = \Risk(\hr) - \Risk(\rstar) + \hRisk(\rstar) - \hRisk(\hr) \\
  &= (\Ede - \hEde)(\lOne{\hr} - \lOne{\rstar}) - \Cons(\Enu - \hEnu)(\lOne{\hr} - \lOne{\rstar}) + (\Enu - \hEnu)(\lTwo{\hr} - \lTwo{\rstar})\\
  &\leq |(\Ede - \hEde)(\lOne{\hr} - \lOne{\rstar})| + \Cons|(\Enu - \hEnu)(\lOne{\hr} - \lOne{\rstar})|
    + |(\Enu - \hEnu)(\lTwo{\hr} - \lTwo{\rstar})|
\end{align*}
By applying Lemma~\ref{appdx:lem:empirical-deviations}, for any \(0 < \gamma < 2\), we have
\[A \leq \Orderp{\vmax{\frac{\|\hr - \rstar\|_{\Ltwo(\pde)}^{1 - \gamma/2}}{\sqrt{\vmin{\nde}{\nnu}}}}{\frac{1}{(\vmin{\nde}{\nnu})^{2/(2+\gamma)}}}}.\]
On the other hand, by Lemma~\ref{appdx:lem:estimator-difference-order} and
Lemma~\ref{appdx:sparse-network-rademacher}, and the assumption \(\infr
\Emod\lOneR > 0\), there exists \(\alpha > 0\) such that we have \(B \leq \Orderp{\exponentialOrderOne}\).
Combining the above bounds on \(A\) and \(B\), for any \(0 < \gamma < 2\), we get
\begin{align*}
  \|\hr - \rstar\|_{\Ltwo(\pde)}^2 &\leq \Orderp{\vmax{\frac{\|\hr - \rstar\|_{\Ltwo(\pde)}^{1 - \gamma/2}}{\sqrt{\vmin{\nde}{\nnu}}}}{\frac{1}{(\vmin{\nde}{\nnu})^{2/(2+\gamma)}}}} \\
                                   &\qquad\qquad+ \Orderp{\exponentialOrderOne} \\
                                   &\leq \Orderp{\vmax{\frac{\|\hr - \rstar\|_{\Ltwo(\pde)}^{1 - \gamma/2}}{\sqrt{\vmin{\nde}{\nnu}}}}{\frac{1}{(\vmin{\nde}{\nnu})^{2/(2+\gamma)}}}}. \\
\end{align*}
As a result, we have \[\|\hr - \rstar\|_{\Ltwo(\pde)} \leq \Orderp{(\vmin{\nde}{\nnu})^{-\frac{1}{2+\gamma}}}.\]
\end{proof}

Each lemma used in the proof is provided as follows.

\subsection{Complexity of the hypothesis class}
For the function classes in Definition~\ref{appdx:sparse-network-function-class}, we have the following evaluations of their complexities.

\begin{lemma}[Lemma~5 in \citet{Schmidt-HieberNonparametric2020}]\label{appdx:sparse-network-entropy}
  For \(L \in \Na\) and \(p \in \Na^{L+2}\), let \(V := \prod_{l=0}^{L+1}(p_l + 1)\). Then, for any \(\delta > 0\),
  \[\log \mathcal{N}(\delta, \rClass(L, p, s, \infty), \|\cdot\|_\infty) \leq (s+1)\log(2 \delta^{-1} (L+1) V^2).\]
\end{lemma}

\begin{lemma}\label{appdx:sparse-network-rademacher}
  There exists \(\sparseNetworkRadBoundConst > 0\) such that
  \[\Radnu(\rClass) \leq \sparseNetworkRadBoundConst \nnu^{-1/2}, \quad \Radde(\rClass) \leq \sparseNetworkRadBoundConst \nde^{-1/2}.\]
  \begin{proof}
    By Dudley's entropy integral bound
    \citep[Theorem~5.22]{WainwrightHighDimensional2019} and Lemma~\ref{appdx:sparse-network-entropy}, we have
    \begin{align*}
      \Radnu(\rClass(L, p, s, F)) &\leq 32\int_0^{2F} \sqrt{\frac{\log \mathcal{N}(\delta, \rClass(L, p, s, F), \|\cdot\|_\infty)}{\nnu}} d\delta \\
                                  &= \left(32 \int_0^{2F} \left((s+1)\log(2 \delta^{-1} (L+1) V^2)\right)^{1/2} d\delta\right) \nnu^{-1/2}.
    \end{align*}
    Therefore, there exists \(\sparseNetworkRadBoundConst > 0\) such that
    \begin{align*}
      \Radnu(\rClass) \leq \sum_{(L, p) \in \IndLP} \Radnu(\rClass(L, p, s, F)) \leq \sparseNetworkRadBoundConst \nnu^{-1/2}.
    \end{align*}
    The same argument applies to \(\Radde(\rClass)\), and we obtain the assertion.
  \end{proof}
\end{lemma}

\begin{lemma}\label{appdx:sparse-network-complexity-bounds}
  There exists \(\rClassMSupNormBoundConst > 0\) such that for any \(\gamma > 0\), any \(\delta > 0\), and any \(M \geq 1\), we have
  \begin{align*}
    \log \mathcal{N}\left(\delta, \rClassM, \|\cdot\|_\infty\right) &\leq \frac{s+1}{\gamma} \left( \frac{M}{\delta} \right)^{\gamma}.
  \end{align*}
  and
  \begin{align*}
    \sup_{\r \in \rClassM} \|\r - \rstar\|_\infty &\leq \rClassMSupNormBoundConst M.
  \end{align*}

  \begin{proof}
    The first assertion is a result of the following calculation:
    \begin{align*}
      \log \mathcal{N}\left(\delta, \rClassM, \|\cdot\|_\infty\right)
      &\leq \log \sum_{\IndLPM} \inftyCoveringNumber{\delta}{\rClass(L, p, s, M)}\\
        &\leq \log \sum_{\IndLPM} \left(\frac{2}{\delta}(L+1)V^2\right)^{s+1}\\
        &\leq \log |\IndLP| \left(\frac{1}{\delta}M |\IndLP|^{-\frac{1}{s+1}}\right)^{s+1}\\
        &= (s+1)\log\left(\frac{M}{\delta}\right) < (s+1) \frac{1}{\gamma} \left( \frac{M}{\delta} \right)^{\gamma},
    \end{align*}
    where the first inequality follows from \(\rClassM \subset \bigcup_{\IndLPMInline}\rClassLP\),
    and the last inequality from
    \(\gamma\log x^{\frac{1}{\gamma}} = \log x < x\) that holds for all \(x, \gamma > 0\).

    The second assertion can be confirmed by noting that for any
    \(\r \in \rClassM\) with \(M \geq 1\),
    \begin{align*}
      \|\r - \rstar\|_\infty
      &\leq \|\r\|_\infty + \|\rstar\|_\infty
        \leq M + \|\rstar\|_\infty\\
      &\leq \left(1 + \frac{\|\rstar\|_\infty}{M}\right) M
        \leq (1+\|\rstar\|_\infty) M
    \end{align*}
    holds.

  \end{proof}
\end{lemma}

\begin{definition}[Derived function class and bracketing entropy]
  Given a real-valued function class \(\mathcal{F}\),
  define \(\ell \circ \mathcal{F} := \{\ell \circ f: f \in \mathcal{F}\}\).
  By extension, we define \(I: \elled\rClass \to [1, \infty)\) by \(I(\elled\r) = I(r)\) and \(\elled\rClassM := \{\elled\r : \r \in \rClassM\}\).
  Note that, as a result, \(\elled\rClassM\) coincides with \(\{\elled\r \in \elled\rClass: I(\elled\r) \leq M\}\).
\end{definition}

\begin{lemma}\label{appdx:lem:elled-sparse-network-complexity}
  Let \(\ell: \rClassRangeTwo \to \Re\) be a \(\ellLip\)-Lipschitz continuous function.
  Let \(\bracketEntropy{\delta}{\mathcal{F}}{\|\cdot\|_{\Ltwo(P)}}\) denote the bracketing entropy of \(\mathcal{F}\) with respect to a distribution \(P\).
  Then, for any distribution \(P\), any \(\gamma > 0\), any \(M \geq 1\), and any \(\delta > 0\), we have
  \begin{align*}
    \bracketEntropy{\delta}{\ell \circ \rClassM}{\|\cdot\|_{\Ltwo(P)}} &\leq \frac{(s+1)(2\ellLip)^{\gamma}}{\gamma} \left(\frac{M}{\delta} \right)^{\gamma}.
  \end{align*}
  Moreover, there exists \(\rClassMSupNormBoundConst > 0\) such that for any \(M \geq 1\) and any distribution \(P\),
  \begin{align*}
    \sup_{\elled\r \in \elled\rClassM} \|\elled\r - \elled\rstar\|_{\Ltwo(P)} &\leq \rClassMSupNormBoundConst\ellLip M, \\
    \sup_{\stackrel{\elled\r \in \elled\rClassM}{\|\elled\r - \elled\rstar\|_{\Ltwo(P)} \leq \delta}} \|\elled\r - \elled\rstar\|_\infty &\leq \rClassMSupNormBoundConst\ellLip M, \quad \text{for all } \delta > 0.
  \end{align*}

  \begin{proof}
    By combining Lemma~2.1 in \citet{vandeGeerEmpirical2000} with Lemma~\ref{appdx:sparse-network-entropy}, we have
    \begin{align*}
      \bracketEntropy{\delta}{\elled\rClassM}{\|\cdot\|_{\Ltwo(P)}}
      &\leq \log \mathcal{N}\left(\frac{\delta}{2}, \elled\rClassM, \|\cdot\|_\infty\right), \\
      &\leq \log \mathcal{N}\left(\frac{\delta}{2\ellLip}, \rClassM, \|\cdot\|_\infty\right)
        \leq \frac{s+1}{\gamma} \left(\frac{2\ellLip M}{\delta}\right)^{\gamma}.
    \end{align*}
    For \(M \geq 1\), we have
    \begin{align*}
      \sup_{\elled\r \in \elled\rClassM}\|\elled\r - \elled\rstar\|_{\Ltwo(P)}
      &\leq \sup_{\elled\r \in \elled\rClassM}\|\elled\r - \elled\rstar\|_{\infty}\\
      \sup_{\stackrel{\elled\r \in \elled\rClassM}{\|\elled\r - \elled\rstar\|_{\Ltwo(P)} \leq \delta}} \|\elled\r - \elled\rstar\|_\infty
      &\leq \sup_{\elled\r \in \elled\rClassM}\|\elled\r - \elled\rstar\|_\infty,
    \end{align*}
    and Lemma~\ref{appdx:sparse-network-entropy} implies
    \begin{align*}
      \sup_{\elled\r \in \elled\rClassM}\|\elled\r - \elled\rstar\|_{\infty}
      \leq \sup_{\r \in \rClassM} \ellLip \|\r - \rstar\|_\infty
      \leq \ellLip\rClassMSupNormBoundConst M.
    \end{align*}

  \end{proof}
\end{lemma}

\subsection{Bounding the empirical deviations}
\begin{lemma}\label{appdx:lem:empirical-deviations}
  Under the conditions of Theorem~\ref{appdx:thm:convergence-rate},
  for any \(0 < \gamma < 2\), we have
  \begin{align*}
    |(\Ede - \hEde)(\lOne{\hr} - \lOne{\rstar})| &= \Orderp{\vmax{\frac{\|\hr - \rstar\|_{\Ltwo(\pde)}^{1 - \gamma/2}}{\sqrt{\nde}}}{\frac{1}{\nde^{2/(2+\gamma)}}}}\\
    |(\Enu - \hEnu)(\lOne{\hr} - \lOne{\rstar})| &= \Orderp{\vmax{\frac{\|\hr - \rstar\|_{\Ltwo(\pde)}^{1 - \gamma/2}}{\sqrt{\nnu}}}{\frac{1}{\nnu^{2/(2+\gamma)}}}}\\
    |(\Enu - \hEnu)(\lTwo{\hr} - \lTwo{\rstar})| &= \Orderp{\vmax{\frac{\|\hr - \rstar\|_{\Ltwo(\pde)}^{1 - \gamma/2}}{\sqrt{\nnu}}}{\frac{1}{\nnu^{2/(2+\gamma)}}}}\\
  \end{align*}
  as \(\nnu, \nde \to \infty\).
  \begin{proof}
    Since \(0 < \gamma < 2\), we can apply Lemma~\ref{appdx:lem:van-de-geer} in combination
    with Lemma~\ref{appdx:lem:elled-sparse-network-complexity} to obtain
    \begin{align*}
      \supr\frac{|(\Ede - \hEde)(\lOne{\r} - \lOne{\rstar})|}{D_1(\r)} &= \Orderp{1},\\
      \supr\frac{|(\Enu - \hEnu)(\lOne{\r} - \lOne{\rstar})|}{D_2(\r)} &= \Orderp{1},\\
      \supr\frac{|(\Enu - \hEnu)(\lTwo{\r} - \lTwo{\rstar})|}{D_3(\r)} &= \Orderp{1},
    \end{align*}
    where
    \begin{align*}
      D_1(\r) &= \vmax{\frac{\|\lOne{\r} - \lOne{\rstar}\|_{\Ltwo(\pde)}^{1 - \gamma/2}I(\lOne{\r})^{\gamma/2}}{\sqrt{\nde}}}{\frac{I(\lOne{\r})}{\nde^{2/(2+\gamma)}}}, \\
      D_2(\r) &= \vmax{\frac{\|\lOne{\r} - \lOne{\rstar}\|_{\Ltwo(\pnu)}^{1 - \gamma/2}I(\lOne{\r})^{\gamma/2}}{\sqrt{\nnu}}}{\frac{I(\lOne{\r})}{\nnu^{2/(2+\gamma)}}}, \\
      D_3(\r) &= \vmax{\frac{\|\lTwo{\r} - \lTwo{\rstar}\|_{\Ltwo(\pnu)}^{1 - \gamma/2}I(\lTwo{\r})^{\gamma/2}}{\sqrt{\nnu}}}{\frac{I(\lTwo{\r})}{\nnu^{2/(2+\gamma)}}}, \\
    \end{align*}
    Noting that \(\supr I(\r) < \infty\),
    that \(\lossTwo, \lossOne\) are Lipschitz continuous,
    and that \(\|\hr - \rstar\|_{\Ltwo(\pnu)} \leq \left(\sup_{x \in \mathcal{X}}\left| \frac{\pnu(x)}{\pde(x)} \right| \right) \|\hr - \rstar\|_{\Ltwo(\pde)}\) holds,
    we have the assertion.
  \end{proof}
\end{lemma}

Following is a proposition originally presented in \citet{vandeGeerEmpirical2000}, which was rephrased in \citet{KanamoriStatistical2012a} in a form that is convenient for our purpose.
\begin{lemma}[Lemma~5.14 in \citet{vandeGeerEmpirical2000}, Proposition~1 in \citet{KanamoriStatistical2012a}]\label{appdx:lem:van-de-geer}
  Let \(\mathcal{F} \subset \Ltwo(P)\) be a function class and the map \(I(f)\)
  be a complexity measure of \(f \in \mathcal{F}\), where \(I\) is a
  non-negative function on \(\mathcal{F}\) and \(I(f_0) < \infty\) for a fixed
  \(f_0 \in \mathcal{F}\). We now define \(\mathcal{F}_M = \{f \in \mathcal{F} :
  I(f) \leq M\}\) satisfying \(\mathcal{F} = \bigcup_{M \geq 1} \mathcal{F}_M\).
  Suppose that there exist \(c_0 > 0\) and \(0 < \gamma < 2\) such that
  \[\sup_{f \in \mathcal{F}_M} \|f - f_0\| \leq c_0 M, \ \sup_{\stackrel{f \in
        \mathcal{F}_M}{\|f - f_0\|_{\Ltwo(P)} \leq \delta}} \|f - f_0\|_\infty
    \leq c_0 M, \quad \text{for all } \delta > 0,\]
  and that \(H_B(\delta, \mathcal{F}_M, P) = \Order{M/\delta}^\gamma\).
  Then, we have
  \[\sup_{f \in \mathcal{F}} \frac{\left| \int (f - f_0)d(P - P_n)
      \right|}{D(f)} = \Orderp{1}, \ (n \to \infty),\]
  where \(D(f)\) is defined by
  \[D(f) = \vmax{\frac{\|f - f_0\|_{\Ltwo(P)}^{1 - \gamma/2}I(f)^{\gamma/2}}{\sqrt{n}}}{\frac{I(f)}{n^{2/(2+\gamma)}}}.\]
\end{lemma}

\subsection{Bounding the difference of the BD estimators}

\begin{lemma}\label{appdx:lem:estimator-difference-order}
  Assume \(\Radde(\rClass) = \order(1) (\nde \to \infty)\) and \(\Radnu(\rClass) = \order(1) (\nnu \to \infty)\).
  Also assume the same conditions as Theorem~\ref{thm:estimation-error-bound}.
  Then,
  \begin{align*}
    \supr{}|\nnhRisk(\r) - \hRisk(\r)| = \Orderp{\exponentialOrderOne}
  \end{align*}
  as \(\nnu, \nde \to \infty\).
  \begin{proof}
    First, by combining Lemma~\ref{appdx:lem:estimator-difference},
    the assumption on the Rademacher complexities, and Markov's inequality,
    there exist \(\alpha > 0\) and \(\ndez, \nnuz \in \Na\) such that for
    any \(\nde \geq \ndez\) and \(\nnu \geq \nnuz\) and any \(\delta \in (0, 1)\), we have with probability at least \(1 - \delta\),
    \begin{align*}
      \supr{}|\nnhRisk(\r) - \hRisk(\r)| \leq \frac{(1+\Cons)\LossBound\modFnIdLip}{\delta}\exponentialOrderOne{}.
    \end{align*}
    Therefore, we have the assertion.
  \end{proof}
\end{lemma}

\begin{lemma}
  \label{appdx:lem:estimator-difference}
  Assume \(\Radde(\rClass) = \order(1) (\nde \to \infty)\) and \(\Radnu(\rClass) = \order(1) (\nnu \to \infty)\).
  Also assume the same conditions as Theorem~\ref{thm:estimation-error-bound}.
  Then, there exist \(\alpha > 0\) and \(\ndez, \nnuz \in \Na\) such that for any \(\nde \geq \ndez\) and \(\nnu \geq \nnuz\),
  \begin{align*}
    \E\left[\supr{}|\nnhRisk(\r) - \hRisk(\r)|\right] \leq (1+\Cons)\LossBound\modFnIdLip\exp\left(- \frac{2 \alpha^2}{(\LossBound^2/\nde) + (\Cons^2\LossBound^2 / \nnu)}\right)
  \end{align*}
  holds.
  \begin{proof}
    First, we have
    \begin{align*}
      &\E\left[\supr{}|\nnhRisk(\r) - \hRisk(\r)|\right] \\
      &= \E \left[\supr{} \left|\modFn(\hEmod\lOneR) - \hEmod\lOneR\right|\right] \\
      &= \E \left[\supr{}\Indicator{\modFn(\hEmod\lOneR) \neq \hEmod\lOneR}\cdot|\modFn(\hEmod\lOneR) - \hEmod\lOneR|\right] \\
      &\leq \E\left[\supr{}\Indicator{\modFn(\hEmod\lOneR) \neq \hEmod\lOneR}\right]\left(\supLossVal|\modFn(s) - s|\right),
    \end{align*}
    where \(\Indicator{\cdot}\) denotes the indicator function, and we used \(|\hEmod\lOneR| \leq (1 + C)\LossBound\). Further, we have
    \begin{equation*}\begin{aligned}
        &\supLossVal|\modFn(s) - s| \leq \supLossVal|(\modFn - \Identity)(s) - (\modFn - \Identity)(0)| + |(\modFn - \Identity)(0)| \\
        &\leq \supLossVal \modFnIdLip |s - 0| + 0
        \leq (1+\Cons)\LossBound\modFnIdLip,
      \end{aligned}\end{equation*}
    where \(\Identity\) denotes the identity function.
    On the other hand, since \(\infr \Emod\lOneR > 0\) is assumed, there exists \(\beta > 0\) such that for any \(\r \in \rClass\), \(\Emod\lOneR > \beta\).
    Therefore, denoting the support of a function by \(\supp(\cdot)\),
    \begin{align*}
      &\E\left[\supr{}\Indicator{\modFn(\hEmod\lOneR) \neq \hEmod\lOneR}\right] \\
      &= \E\left[\supr{} \Indicator{\hEmod\lOneR \in \modFnIdSupp}\right] \\
      &= \E\left[\supr{} \Indicator{\hEmod\lOneR < 0}\right] \\
      &= \E\left[\Indicator{\exists r \in \rClass: \hEmod\lOneR < 0}\right] \\
      &= \Probability{\exists r \in \rClass: \hEmod\lOneR < 0} \\
      &\leq \Probability{\exists r \in \rClass: \hEmod\lOneR < \Emod\lOneR - \beta} \\
      &\leq \Probability{\beta < \supr(\Emod\lOneR - \hEmod\lOneR)}.
    \end{align*}
    Take an arbitrary \(\alpha \in (0, \beta)\).
    Since \(\Radde(\rClass) \to 0 (\nde \to \infty)\) and \(\Radnu(\rClass) \to 0 (\nnu \to \infty)\),
    we can apply Lemma~\ref{appdx:lem:empirical-process-bound} and obtain the assertion.
  \end{proof}
\end{lemma}

\begin{lemma}
  \label{appdx:lem:empirical-process-bound}
  Let $\beta > \alpha > 0$.
  Assume that there exist \(\ndez, \nnuz \in \Na\) such that for any \(\nde \geq \ndez\) and \(\nnu \geq \nnuz\),
  \begin{align*}
    4 \LipOne\Radde(\rClass) + 4 C \LipOne\Radnu(\rClass) < \beta - \alpha.
  \end{align*}
  Then, for any \(\nde \geq \ndez\) and \(\nnu \geq \nnuz\), we have
  \begin{align*}
    &\Probability{\beta < \supr(\Emod\lOneR - \hEmod\lOneR)} \\
    &\quad\quad\leq \exponentialOrderOne{}.
  \end{align*}
  \begin{proof}
    First, we will apply McDiarmid's inequality.
    The absolute difference caused by altering one data point in \(\supr(\Emod\lOneR - \hEmod\lOneR)\)
    is bounded by \(\frac{\LossBound}{\nde}\) if the change is in a sample from \(\pde\) and \(\frac{\Cons \LossBound}{\nnu}\) otherwise.
    This can be confirmed by letting \(\hEmodTwo{}\) denote the sample averaging
    operator obtained by altering one data point in \(\hEmod{}\) and observing
    \begin{align*}
      &\supr\{\Emod\lOneR - \hEmod\lOneR\} - \supr\{\Emod\lOneR - \hEmodTwo\lOneR\} \\
      &\leq \supr\{\Emod\lOneR - \hEmod\lOneR - (\Emod\lOneR - \hEmodTwo\lOneR)\} \\
      &\leq \supr\{\hEmodTwo\lOneR - \hEmod\lOneR\}.
    \end{align*}
    The right-most expression can be bounded by \(\frac{\LossBound}{\nde}\) if the change is in a sample from \(\pde\) and \(\frac{\Cons \LossBound}{\nnu}\) otherwise.
    Likewise, \(\supr(\Emod\lOneR - \hEmodTwo\lOneR) - \supr(\Emod\lOneR -
    \hEmod\lOneR)\) can be bounded by one of these quantities.
    Therefore, we have
    \begin{align*}
      &\left|\supr\{\Emod\lOneR - \hEmod\lOneR\} - \supr\{\Emod\lOneR - \hEmodTwo\lOneR\}\right| \\
      &\leq \frac{\LossBound}{\nde} + \frac{\Cons \LossBound}{\nnu},
    \end{align*}
    and McDiarmid's inequality implies, for any \(\epsilon > 0\),
    \begin{equation}\label{appdx:eq:emp-proc-mcdiarmid}\begin{split}
        &\Probability{\epsilon < \supr(\Emod\lOneR - \hEmod\lOneR) - \E\left[\supr(\Emod\lOneR - \hEmod\lOneR)\right]} \\
        &\leq \exp\left(- \frac{2 \epsilon^2}{(\LossBound^2/\nde) + (\Cons^2\LossBound^2 / \nnu)}\right).
      \end{split}\end{equation}
    Now, applying Lemma~\ref{lem:general-symmetrization}, we have
    \begin{align*}
      &\E\left[\supr(\Emod\lOneR - \hEmod\lOneR)\right] \\
      &\leq \E\left[\supr |\Ede \lOneR - \hEde\lOneR|\right] + C\E\left[\supr |\Enu \lOneR - \hEnu\lOneR|\right] \\
      &\leq 4 \LipOne\Radde(\rClass) + 4 C \LipOne\Radnu(\rClass) =: \AppdxEmpProcMcDiarmidRadSum{}.
    \end{align*}
    By the assumption, if \(\nde \geq \ndez\) and \(\nnu \geq \nnuz\), we have
    \(\AppdxEmpProcMcDiarmidRadSum{} < \beta - \alpha\). Therefore,
    \begin{align*}
      \E\left[\supr(\Emod\lOneR - \hEmod\lOneR)\right] < \beta - \alpha < \beta,
    \end{align*}
    hence \(\beta - \E\left[\supr(\Emod\lOneR - \hEmod\lOneR)\right] > 0\).
    Therefore, we can take \(\epsilon = \beta - \E\left[\supr(\Emod\lOneR -
      \hEmod\lOneR)\right]\) in Equation~\eqref{appdx:eq:emp-proc-mcdiarmid} to obtain
    \begin{align*}
      &\Probability{\beta < \supr(\Emod\lOneR - \hEmod\lOneR)} \\
      &\leq \exp\left(- \frac{2 (\beta - \E\left[\supr(\Emod\lOneR - \hEmod\lOneR)\right])^2}{(\LossBound^2/\nde) + (\Cons^2\LossBound^2 / \nnu)}\right) \\
      &\leq \exp\left(- \frac{2 (\beta - \AppdxEmpProcMcDiarmidRadSum{})^2}{(\LossBound^2/\nde) + (\Cons^2\LossBound^2 / \nnu)}\right) \leq \exp\left(- \frac{2 \alpha^2}{(\LossBound^2/\nde) + (\Cons^2\LossBound^2 / \nnu)}\right),
    \end{align*}
    where we used \(0 < \alpha < \beta - \AppdxEmpProcMcDiarmidRadSum{}\).
  \end{proof}
\end{lemma}

\end{document}